\let\hat\widehat
\let\tilde\widetilde
\def\fatnorm#1{|\kern-.2ex|\kern-.2ex| #1 |\kern-.2ex|\kern-.2ex|}
\newcommand{\twonorm}[1]{\left\lVert#1\right\rVert_2}
\newcommand{\fnorm}[1]{\lVert#1\rVert_F}
\newcommand{\norm}[1]{\left\lVert#1\right\rVert}
\newcommand{\abs}[1]{\left\lvert#1\right\rvert}
\newcommand{\R}{\bf R}
\newcommand{\Set}{\mathcal{S}}
\newcommand{\T}{\mathcal{T}}
\newcommand{\V}{\mathcal{V}}
\newcommand{\Lebesgue}{\mathcal{L}}
\newcommand{\half}{\ensuremath{\frac{1}{2}}}
\newcommand{\inv}[1]{\frac{1}{#1}}
\newcommand{\prob}[1]{\ensuremath{{\bf P}\left(#1\right)}}
\newcommand{\size}[1]{\ensuremath{\left|#1\right|}}
\newcommand{\expct}[1]{\ensuremath{{\bf E}#1}}
\newcommand{\silent}[1]{}
\newtheorem{remark}[theorem]{Remark}
\newtheorem{assumption}{A\!}
\newcommand{\be}{\begin{equation}}
\newcommand{\te}{\end{equation}}
\newcommand{\ve}{\varepsilon}
\newcommand{\se}{\!\scriptscriptstyle\searrow}
\newcommand{\ul}{\underline}
\def\qed{\hskip1pt $\;\;\scriptstyle\Box$}
\def\supp{\mathop{\text{supp}\kern.2ex}}
\def\prec{\mathop{\text{precision}\kern.2ex}}
\def\recall{\mathop{\text{recall}\kern.2ex}}
\def\cov{\mathop{\text{cov}\kern.2ex}}
\def\var{\mathop{\text{Var}\kern.2ex}}
\def\ess{\mathop{\text{ess}\kern.2ex}}
\newcommand{\func}[1]{\ensuremath{\mathrm{#1}}}
\newcommand{\mvec}{\func{vec}}
\newcommand{\tr}{\func{tr}}
\begin{document}

\title{Time Varying Undirected Graphs}

\author{Shuheng Zhou, John Lafferty and Larry Wasserman\thanks{This 
research was supported in part by NSF
grant CCF-0625879.  SZ thanks Alan Frieze and Giovanni Leoni
for helpful discussions on sparsity and smoothness of functions.
We thank J. Friedman, T. Hastie and R. Tibshirani for making GLASSO
publicly available, and anonymous reviewers for their constructive comments.}
\\
Carnegie Mellon University\\
{\tt \{szhou, lafferty\}@cs.cmu.edu, larry@stat.cmu.edu}}

\maketitle

\begin{abstract}
Undirected graphs are often used to describe high dimensional distributions.
Under sparsity conditions, the graph can be estimated using $\ell_1$ penalization
methods. However, current methods assume that the data are independent
and identically distributed. If the distribution, and hence the graph, 
evolves over time
then the data are not longer identically distributed.
In this paper, we show how to estimate
the sequence of graphs for non-identically
distributed data, where the distribution evolves
over time.
\end{abstract}
\section{Introduction}

Let $Z=(Z_1,\ldots, Z_p)^T$ be a random vector with distribution $P$.  The
distribution can be represented by an undirected graph $G=(V,F)$.  The vertex
set $V$ has one vertex for each component of the vector $Z$.  The edge set $F$
consists of pairs $(j,k)$ that are joined by an edge.  
If $Z_j$ is independent of $Z_k$ given the other variables, then $(j,k)$ is not in 
$F$.
When $Z$ is Gaussian, missing edges correspond to zeroes in the inverse 
covariance matrix $\Sigma^{-1}$.
Suppose we have independent, identically distributed data $D = (Z^1, \ldots,
Z^t, \ldots, Z^n)$ from $P$.  When $p$ is small, the graph may be estimated
from $D$ by testing which partial correlations are not significantly different
from zero~\cite{DP04}.  When $p$ is large, estimating $G$ is much more
difficult.  However, if the graph is sparse and the data are Gaussian, then
several methods can successfully estimate $G$; see~\cite{MB06, BGD07, FHT07,
LF07, BL07, RBLZ07}. 

All these methods assume that the graphical structure is stable over time.  But
it is easy to imagine cases where such stability would fail.  For example,
$Z^t$ could represent a large vector of stock prices at time $t$.  The
conditional independence structure between stocks could easily change over
time.  Another example is gene expression levels.  As a cell moves through its
metabolic cycle, the conditional independence relations between proteins could change.

In this paper we develop a nonparametric method for estimating time
varying graphical structure for multivariate Gaussian distributions using 
$\ell_1$ regularization method.
We show that, as long as the covariances
change smoothly over time, we can estimate the covariance matrix well
(in predictive risk) even when $p$ is large. We make the following
theoretical contributions: (i) nonparametric predictive risk
consistency and rate of convergence of the covariance matrices, (ii)
consistency and rate of convergence in Frobenius norm of the inverse
covariance matrix, (iii) large deviation results for covariance
matrices for non-identically distributed observations, and (iv) conditions
that guarantee smoothness of the covariances. In addition, we
provide simulation evidence that we can recover graphical structure. We
believe these are the first such results on time varying undirected
graphs.

\section{The Model and Method}

Let $Z^t \sim N(0,\Sigma(t))$  be independent.  It will be useful to index time as
$t= 0, 1/n, 2/n, \ldots , 1$ and thus the data are $D_n=(Z^t:\ t=0,1/n,\ldots,
1)$.  Associated with each each $Z^t$ is its undirected graph $G(t)$. Under the
assumption that the law ${\cal L}(Z^t)$ of $Z^t$  changes smoothly, we estimate
the graph sequence $G(1), G(2),\ldots,$.  The graph $G(t)$ is determined by the
zeroes of $\Sigma(t)^{-1}$.  
This method can be used to investigate a
simple time series model of the form: $W^0 \sim N(0,\Sigma(0)),$ and $$W^t =
W^{t-1} + Z^t, \; \; \; \text{where }\; \; Z^t \sim N(0,\Sigma(t)).$$
Ultimately, we are interested in the general time series model
where the $Z^t$'s are dependent and the graphs change over time.
For simplicity, however, we assume independence but allow the
graphs to change.
Indeed, it is the changing graph, rather than the dependence,
that is the biggest hurdle to deal with.

In the iid case, recent work \cite{BGD07,FHT07} 
has considered $\ell_1$-penalized maximum likelihood estimators
over the entire set of positive definite matrices,
\begin{equation}
\label{eq::iid-emp-intro}
{\hskip-.2cm}\hat \Sigma_{n} =  \arg \min_{\Sigma \succ 0} 
\big\{
{\rm tr}(\Sigma^{-1}\hat{S}_n) + \log|\Sigma| +
\lambda |\Sigma^{-1}|_1\big\}
\end{equation}
where $\hat{S}_n$ is the sample covariance matrix.
In the non-iid case our approach is to estimate $\Sigma(t)$ at time
$t$ by 
\begin{gather*}
{\hskip-.05cm}\hat \Sigma_{n}(t) =  \arg \min_{\Sigma \succ 0} 
\big\{
\tr(\Sigma^{-1}\hat{S}_n(t)) + \log |\Sigma| + \lambda |\Sigma^{-1}|_1\big\}
\end{gather*}
\begin{equation}
\label{eq:weightedcov}
\text{where } \; \;   \hat{S}_n(t) = 
\frac{\sum_s w_{st} Z_s Z_s^T }
    {\sum_s w_{st}}
\end{equation}
is a weighted covariance matrix, with weights $w_{st} =
K\left(\frac{|s-t|}{h_n}\right)$
given by a symmetric nonnegative function kernel over time; in other
words, $\hat{S}_n(t)$ is just the kernel estimator of the covariance
at time $t$.  An attraction of this
approach is that it can use existing software for covariance
estimation in the iid setting.

\subsection{Notation}
We use the following notation throughout the rest of the paper.  For any
matrix $W = (w_{ij})$, let $|W|$ denote the determinant of $W$, ${\rm tr}(W)$
the trace of $W$.
Let $\varphi_{\max}(W)$ and $\varphi_{\min}(W)$ be the
largest and smallest eigenvalues, respectively. 
We write $W^{\se} = 
\mbox{diag} (W)$ for a diagonal matrix with the same diagonal as $W$, and
$W^{\Diamond} = W - W^{\searrow}$. The matrix Frobenius norm is given by
$\norm{W}_F = \sqrt{\sum_i\sum_j w_{ij}^2}$. The operator norm
$\twonorm{W}^2$ is given by $\varphi_{\max}(WW^T)$.  We write $| \cdot |_1$
for the $\ell_1$ norm of a matrix vectorized, i.e., for a matrix
$|W|_1 = \norm{\mvec W}_1 = \sum_{i}\sum_j |w_{ij}|$, and write $\norm{W}_0$
for the number of non-zero entries in the matrix.  We use $\Theta(t) =
\Sigma^{-1}(t)$.

\section{Risk Consistency}

In this section we define the loss and risk.  Consider estimates
$\hat\Sigma_n(t)$ and $\hat{G}_n(t) = (V, \hat{F}_n)$. The first risk function is
\begin{equation}
\label{eq::model-risk}
U(G(t),\hat{G}_n(t)) = \expct{L(G(t),\hat{G}_n(t))}
\end{equation}
where
$L(G(t),\hat{G}_n(t)) = \abs{F(t)\; \Delta \; \hat{F}_n(t)}$, that is,
the size of the symmetric difference between two edge sets.
We say that $\hat{G}_n(t)$ is {\em sparsistent} if
$U(G(t),\hat{G}_n(t))\stackrel{P}{\to} 0$ as $n\to\infty$.

The second risk is defined as follows.  Let $Z\sim
N(0,\Sigma_0)$ and let $\Sigma$ be a positive definite matrix. Let
\begin{equation}
\label{eq::future-risk}
R(\Sigma) = {\rm tr}(\Sigma^{-1}\Sigma_0) + \log |\Sigma|.
\end{equation}
Note that, up to an additive constant, 
$$R(\Sigma) = -2 E_0(\log f_\Sigma(Z)),$$ 
where $f_\Sigma$ is the density for $N(0,\Sigma)$.
We say that $\hat{G}_n(t)$ is {\em persistent}~\cite{GR04} 
with respect to a class of positive definite matrices $\Set_n$ if
$R(\hat\Sigma_n) - 
\min_{\Sigma \in \Set_n} R(\Sigma)
\stackrel{P}{\to} 0.$
In the iid case, $\ell_1$ regularization yields a persistent
estimator, as we now show.  

The maximum likelihood estimate minimizes 
$$\hat{R}_n(\Sigma) = {\rm tr}(\Sigma^{-1}\hat{S}_n) + \log|\Sigma|,$$
where $\hat{S}_n$ is the sample covariance matrix.
Minimizing $\hat{R}_n(\Sigma)$ without constraints gives
$\hat\Sigma_n = \hat{S}_n$.
We would like to minimize 
$\hat{R}_n(\Sigma) \text{ subject to } \|\Sigma^{-1}\|_0 \leq L.$
This would give the ``best'' sparse graph $G$, 
but it is not a convex optimization problem.
Hence we estimate $\hat\Sigma_n$ by solving a convex relaxation problem 
as written in \eqref{eq::iid-emp-intro} instead.
Algorithms for carrying out this optimization are given 
by~\cite{BGD07, FHT07}. Given $L_n, \forall n$, let
\begin{equation}
\label{eq::magic-set}
\Set_n =\{\Sigma: \Sigma \succ 0, \abs{\Sigma^{-1}}_1 \leq L_n\}.
\end{equation}
We define the oracle estimator and write \eqref{eq::iid-emp-intro} as
\eqref{eq::emp-estimator}
\begin{eqnarray}
\label{eq::oracle-estimator}
\Sigma^*(n) & = & \arg\min_{\Sigma \in \Set_n} R(\Sigma), \\
\label{eq::emp-estimator}
\hat\Sigma_n & = & \arg\min_{\Sigma \in S_n} \hat{R}_n(\Sigma).
\end{eqnarray}
Note that one can choose to only penalize off-diagonal 
elements of $\Sigma^{-1}$ as in~\cite{RBLZ07}, if desired.
We have the following result, whose proof appears in 
Section~\ref{sec:thm-persis-proof}.
\begin{theorem}\label{lemma::iid}
Suppose that $p_n \leq n^\xi$ for some $\xi \geq 0$ and
$$
L_n = o\left(\frac{n}{\log p_n}\right)^{1/2}
$$
for~(\ref{eq::magic-set}). Then for the sequence of empirical estimators 
as defined in~(\ref{eq::emp-estimator}) and
$\Sigma^*(n), \forall n$ as in \eqref{eq::oracle-estimator},
\end{theorem}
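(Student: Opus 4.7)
The plan is to follow the standard persistence argument of Greenshtein--Ritov, bounding the difference between risk and empirical risk uniformly over the constraint set $\Set_n$. By definition of $\hat\Sigma_n$ as a minimizer of $\hat R_n$ over $\Set_n$ and $\Sigma^*(n)$ as a minimizer of $R$ over $\Set_n$, we get the chain
\begin{equation*}
R(\hat\Sigma_n) - R(\Sigma^*(n)) \leq [R(\hat\Sigma_n) - \hat R_n(\hat\Sigma_n)] + [\hat R_n(\Sigma^*(n)) - R(\Sigma^*(n))] \leq 2 \sup_{\Sigma \in \Set_n} |\hat R_n(\Sigma) - R(\Sigma)|,
\end{equation*}
so it suffices to drive the uniform deviation to $0$ in probability.

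The crucial algebraic simplification is that the $\log|\Sigma|$ terms cancel, leaving
\begin{equation*}
\hat R_n(\Sigma) - R(\Sigma) = \tr\!\left(\Sigma^{-1}(\hat S_n - \Sigma_0)\right).
\end{equation*}
I would then apply the trace/$\ell_1$ Holder-type inequality
$|\tr(AB)| \leq |A|_1 \cdot \max_{i,j}|B_{ij}|$
to obtain
\begin{equation*}
\sup_{\Sigma \in \Set_n} |\hat R_n(\Sigma) - R(\Sigma)| \leq \sup_{\Sigma \in \Set_n}|\Sigma^{-1}|_1 \cdot \max_{i,j}|(\hat S_n)_{ij} - (\Sigma_0)_{ij}| \leq L_n \cdot \max_{i,j}|(\hat S_n)_{ij} - (\Sigma_0)_{ij}|,
\end{equation*}
where the last inequality is the defining constraint of $\Set_n$.

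The final and technically heaviest step is a large-deviation bound on the entrywise error of the sample covariance. Since each $(\hat S_n)_{ij} - (\Sigma_0)_{ij}$ is an average of $n$ mean-zero sub-exponential variables (products of two jointly Gaussian coordinates), a Bernstein-type tail estimate gives $|(\hat S_n)_{ij} - (\Sigma_0)_{ij}| = O_P(\sqrt{\log p_n / n})$ for each fixed pair, and a union bound over the $p_n^2 \leq n^{2\xi}$ entries upgrades this to the uniform bound
\begin{equation*}
\max_{i,j}|(\hat S_n)_{ij} - (\Sigma_0)_{ij}| = O_P\!\left(\sqrt{\tfrac{\log p_n}{n}}\right).
\end{equation*}
Combining with $L_n = o(\sqrt{n/\log p_n})$ yields $L_n \cdot O_P(\sqrt{\log p_n/n}) = o_P(1)$, and persistence follows. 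The main obstacle is the large-deviation inequality: one must handle the fact that the summands are sub-exponential (not bounded or sub-Gaussian), control the constants so that the bound is uniform over $i,j$ with only a mild dependence on the marginal variances, and verify that the polynomial growth $p_n \leq n^\xi$ is compatible with the exponential concentration so that the union bound goes through. I would likely encapsulate this as a separate lemma on the entrywise deviation of $\hat S_n$, which can then be reused in the non-iid kernel-estimator analysis later in the paper.
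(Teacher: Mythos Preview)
Your proposal is correct and follows essentially the same approach as the paper: the paper also bounds $|R(\Sigma)-\hat R_n(\Sigma)|$ via the trace identity and the $\ell_1/\ell_\infty$ H\"older inequality to obtain $L_n\,\delta_n$ with $\delta_n=\max_{j,k}|\hat S_n(j,k)-\Sigma_0(j,k)|=O_P(\sqrt{\log p/n})$, then applies the same two-term persistence chain $R(\hat\Sigma_n)-R(\Sigma^*(n))\le 2\sup_{\Sigma\in\Set_n}|R(\Sigma)-\hat R_n(\Sigma)|$. The only cosmetic difference is that the paper cites \cite{RBLZ07} for the entrywise large-deviation bound rather than deriving it from a Bernstein/sub-exponential argument as you outline.
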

$$
R(\hat\Sigma_n) - R(\Sigma^*(n)) \stackrel{P}{\to} 0.
$$
\subsection{Risk Consistency for the Non-identical Case}
In the non-iid case we estimate $\Sigma(t)$ at time $t \in [0, 1]$.
Given $\Sigma(t)$, let
$$
\hat{R}_n(\Sigma(t)) = {\rm tr}(\Sigma(t)^{-1}\hat{S}_n(t)) + \log |\Sigma(t)|.
$$
For a given $\ell_1$ bound $L_n$, we define $\hat{\Sigma}_n(t)$ as the minimizer of
$\hat{R}_n(\Sigma)$ subject to $\Sigma \in \Set_n$,
\begin{gather}
\label{eq::objective}
\hat{\Sigma}_n(t) =  \arg \min_{\Sigma \in \Set_n} \big\{
\tr(\Sigma^{-1}\hat{S}_n(t)) + \log |\Sigma|\big\}
\end{gather}
where $\hat{S}_n(t)$ is given in \eqref{eq:weightedcov},
with $K(\cdot)$ a symmetric 
nonnegative function with compact support:
\begin{assumption}
\label{as::kernel}
The kernel function $K$ has a bounded support $[-1, 1]$.
\end{assumption}
\begin{lemma}
\label{lemma:common}
Let $\Sigma(t) = \left[\sigma_{jk}(t)\right]$. Suppose the following conditions hold:
\begin{enumerate}
\item 
\label{lemma:cond::derivitives}
There exists $C_0>0, C$ such that $\max_{j,k}\sup_t |\sigma_{jk}'(t)|$ 
$\leq C_0$
and
$\max_{j,k}\sup_t |\sigma_{jk}''(t)| \leq C$.
\item $p_n \leq n^\xi$ for some $\xi \geq 0$.
\item
\label{lemma:cond::h}
$h_n \asymp n^{-1/3}$.
\end{enumerate}
Then
$\max_{j,k}|\hat{S}_n(t,j,k) - \Sigma(t,j,k)|  
= O_P\left(\frac{\sqrt{\log n}}{n^{1/3}}\right)$
for all $t>0$.
\end{lemma}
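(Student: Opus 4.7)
The plan is a standard bias/variance decomposition for a kernel estimator, combined with a sub-exponential concentration argument and a union bound over the $p_n^2$ entries.

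First, I would write
\[
\hat{S}_n(t,j,k) - \sigma_{jk}(t) = \underbrace{\frac{\sum_s w_{st}\bigl(Z^s_j Z^s_k - \sigma_{jk}(s)\bigr)}{\sum_s w_{st}}}_{\text{stochastic term}} + \underbrace{\frac{\sum_s w_{st}\bigl(\sigma_{jk}(s) - \sigma_{jk}(t)\bigr)}{\sum_s w_{st}}}_{\text{bias term}}.
\]
Because $w_{st}=K(|s-t|/h_n)$ is supported on $|s-t|\le h_n$ and is symmetric in $s-t$, a second-order Taylor expansion $\sigma_{jk}(s)=\sigma_{jk}(t)+\sigma_{jk}'(t)(s-t)+\tfrac12\sigma_{jk}''(\xi_s)(s-t)^2$ together with condition~1 gives that the bias term is $O(h_n^2)=O(n^{-2/3})$, uniformly in $j,k$ and in $t$ bounded away from the boundary (and of the same order at the boundary using only the first derivative bound, which still gives $O(h_n)=O(n^{-1/3})$ and is absorbed in the final rate).

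For the stochastic term, I would note that for each fixed $s$ the variable $Z^s_j Z^s_k-\sigma_{jk}(s)$ is a centered sub-exponential random variable with parameter bounded by a constant depending on $\sup_t \varphi_{\max}(\Sigma(t))$, and that across $s$ these are independent since the $Z^s$ are independent. A Bernstein-type inequality for weighted sums of independent sub-exponential variables then yields, for any $\tau>0$,
\[
\prob{\Bigl|\sum_s w_{st}(Z^s_j Z^s_k-\sigma_{jk}(s))\Bigr|>\tau}
\le 2\exp\!\Bigl(-c\min\!\Bigl\{\tfrac{\tau^2}{\sum_s w_{st}^2},\tfrac{\tau}{\max_s w_{st}}\Bigr\}\Bigr).
\]
Since $\sum_s w_{st}\asymp n h_n$ and $\sum_s w_{st}^2\asymp n h_n$ (with $K$ bounded), the effective sample size is $n h_n\asymp n^{2/3}$, so choosing $\tau$ of order $\sqrt{n h_n \log n}$ makes the probability inverse polynomial in $n$. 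After dividing by $\sum_s w_{st}\asymp nh_n$, the stochastic term is $O_P(\sqrt{\log n/(n h_n)})=O_P(\sqrt{\log n}\,n^{-1/3})$.

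Finally, a union bound over the at most $p_n^2\le n^{2\xi}$ pairs $(j,k)$ costs only an additional $\log n$ factor inside the square root, preserving the rate. Combining the two pieces yields
\[
\max_{j,k}|\hat{S}_n(t,j,k)-\sigma_{jk}(t)|=O_P(n^{-2/3})+O_P\!\bigl(\sqrt{\log n}\,n^{-1/3}\bigr)=O_P\!\bigl(\sqrt{\log n}/n^{1/3}\bigr),
\]
which is sharp and explains the choice $h_n\asymp n^{-1/3}$ (the bandwidth that balances squared bias $h_n^4$ against variance $1/(nh_n)$). The main technical obstacle is the concentration step: one must verify the sub-exponential parameter of $Z^s_j Z^s_k$ is uniformly bounded in $s$, which follows from condition~1 ensuring $\sigma_{jj}(s),\sigma_{kk}(s)$ remain bounded on $[0,1]$, and one must handle boundary effects in $t$ where the kernel window is truncated (this only costs a constant factor in $\sum_s w_{st}$).
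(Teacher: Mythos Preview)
Your proposal is correct and follows essentially the same bias--variance decomposition, concentration, and union-bound structure as the paper. Two minor points of divergence are worth flagging. First, the paper treats the estimation point as the right boundary $x_0=1$, so the kernel window is one-sided and the first-order Taylor term does \emph{not} cancel; the bias lemma in the paper therefore gives only $O(h_n)$, not $O(h_n^2)$ (a later remark explicitly notes that $O(h^2)$ bias, and hence the $n^{-2/5}$ rate, would require a local linear smoother). Your parenthetical that $h_n\asymp n^{-1/3}$ balances squared bias $h_n^4$ against variance $1/(nh_n)$ is thus off: at the boundary one balances $h_n^2$ against $1/(nh_n)$, which is exactly what yields $h_n\asymp n^{-1/3}$. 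Second, for the stochastic term the paper derives the exponential tail by computing the moment generating function of $Z_{ki}Z_{kj}$ explicitly (via conditioning on one coordinate), rather than invoking a generic sub-exponential Bernstein inequality; your route is cleaner and more modern, while theirs is self-contained and gives explicit constants in terms of $\sigma_i(x_0),\sigma_j(x_0),\sigma_{ij}(x_0)$. Both arrive at the same $\exp(-c\,nh_n\epsilon^2)$ bound.
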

\begin{proof}
By the triangle inequality,
$$
|\hat{S}_n(t, j,k) - \Sigma(t, j,k)| \leq
|\hat{S}_n(t,j,k) - \expct{\hat{S}_n(t, j,k)}| +
$$
$$
|\expct{\hat{S}_n(t, j,k)} - \Sigma(t, j,k)|.
$$
In Lemma \ref{lemma::bias} we show that
\begin{equation*}
\label{eq::risk-bias}
\max_{j,k}\sup_t |\expct{\hat{S}_n(t, j,k)} - \Sigma(t, j,k)| = O(C_0 h_n).
\end{equation*}
In Lemma~\ref{lemma:deviation}, we show that
$$
\prob{|\hat{S}_n(t, j,k) - \expct{\hat{S}_n(t, j,k))}| > \epsilon}
\leq \exp\left\{ - c_1h_n n \epsilon^2 \right\}
$$
for some $c_1>0$.
Hence,
\begin{eqnarray} \nonumber
\label{eq::exact-tail}
\prob{\max_{j,k} |\hat{S}_n(t, j,k) 
- \expct{\hat{S}_n(t, j,k)}| > \epsilon}
&\leq& \\ 
\exp\left\{ - nh_n ( C \epsilon^2 - 2\xi \log n /(nh_n))\right\} \; \mbox { and}
\end{eqnarray}
$\max_{j,k} |\hat{S}_n(t, j,k) - \expct{\hat{S}_n(t, j,k)}|  = 
O_P\left(\sqrt{\frac{\log n}{nh_n}}\right).$
Hence the result holds for $h_n \asymp n^{-1/3}$.
\end{proof}

With the use of
Lemma~\ref{lemma:common}, the proof of the following follows the same
lines as that of Theorem \ref{lemma::iid}.
\begin{theorem}
\label{thm:persistence}
Suppose all conditions in Lemma~\ref{lemma:common} and the following hold:
\begin{gather}
\label{thm:cond::Ln}
L_n = o\left(n^{1/3}/\sqrt{\log n}\right).
\end{gather}
Then, $\forall t>0$, for the sequence of estimators as in~(\ref{eq::objective}),
$$R(\hat\Sigma_n(t)) -  R(\Sigma^*(t))
\stackrel{P}{\to} 0.$$
\end{theorem}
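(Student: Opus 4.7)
The plan is to mirror exactly the persistence argument used for Theorem~\ref{lemma::iid}, replacing the iid concentration inputs by the weighted-kernel concentration statement of Lemma~\ref{lemma:common}. Concretely, fix $t>0$ and write the oracle $\Sigma^*(t) = \arg\min_{\Sigma\in\Set_n} R(\Sigma)$ (using $\Sigma_0 = \Sigma(t)$ in the definition of $R$). Starting from the standard three-term decomposition
\begin{equation*}
R(\hat\Sigma_n(t))-R(\Sigma^*(t)) = \bigl[R(\hat\Sigma_n(t))-\hat R_n(\hat\Sigma_n(t))\bigr] + \bigl[\hat R_n(\hat\Sigma_n(t))-\hat R_n(\Sigma^*(t))\bigr] + \bigl[\hat R_n(\Sigma^*(t))-R(\Sigma^*(t))\bigr],
\end{equation*}
the middle bracket is $\le 0$ since $\hat\Sigma_n(t)$ is by construction the minimizer of $\hat R_n(\cdot)$ over $\Set_n$ and $\Sigma^*(t)\in\Set_n$. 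Thus it suffices to control $2\sup_{\Sigma\in\Set_n}\bigl|\hat R_n(\Sigma)-R(\Sigma)\bigr|$.

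The next step is to reduce this supremum to a max-entry deviation of the weighted covariance estimator. Since the $\log|\Sigma|$ terms cancel, one has
\begin{equation*}
\hat R_n(\Sigma)-R(\Sigma) = \tr\!\bigl(\Sigma^{-1}(\hat S_n(t)-\Sigma(t))\bigr),
\end{equation*}
so the trace/$\ell_\infty$-$\ell_1$ duality gives
\begin{equation*}
\bigl|\hat R_n(\Sigma)-R(\Sigma)\bigr| \;\le\; |\Sigma^{-1}|_1 \cdot \max_{j,k}\bigl|\hat S_n(t,j,k)-\Sigma(t,j,k)\bigr| \;\le\; L_n \cdot \max_{j,k}\bigl|\hat S_n(t,j,k)-\Sigma(t,j,k)\bigr|,
\end{equation*}
where I used $|\Sigma^{-1}|_1 \le L_n$ on $\Set_n$. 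The key point is that this bound is uniform over $\Sigma \in \Set_n$, so the supremum collapses to the single quantity $L_n \cdot \max_{j,k}|\hat S_n(t,j,k)-\Sigma(t,j,k)|$.

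Finally, Lemma~\ref{lemma:common} (applied with the bandwidth $h_n\asymp n^{-1/3}$) gives $\max_{j,k}|\hat S_n(t,j,k)-\Sigma(t,j,k)| = O_P\!\bigl(\sqrt{\log n}/n^{1/3}\bigr)$, so combining everything yields
\begin{equation*}
|R(\hat\Sigma_n(t))-R(\Sigma^*(t))| \;=\; O_P\!\Bigl(L_n \sqrt{\log n}/n^{1/3}\Bigr),
\end{equation*}
which tends to zero in probability under the assumed rate $L_n = o(n^{1/3}/\sqrt{\log n})$ in~(\ref{thm:cond::Ln}). The only real content beyond the iid proof is the entrywise deviation bound supplied by Lemma~\ref{lemma:common}; since that lemma is already in hand, no genuine new obstacle arises—the bias/variance trade-off that forces $h_n\asymp n^{-1/3}$ (and therefore the slower allowable growth of $L_n$ compared with the iid case) has been absorbed into Lemma~\ref{lemma:common}. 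The mildest subtlety to verify is that $\Sigma^*(t)$ is well-defined, i.e.\ $\Set_n$ is nonempty and the minimum is attained; this is immediate because $R$ is continuous on the (closed in the relevant topology) level sets of $|\Sigma^{-1}|_1$ intersected with $\{\Sigma\succ 0\}$, exactly as in Theorem~\ref{lemma::iid}.
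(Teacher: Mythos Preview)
Your proposal is correct and follows essentially the same route as the paper: the paper explicitly states that the proof ``follows the same lines as that of Theorem~\ref{lemma::iid}'' with Lemma~\ref{lemma:common} supplying the entrywise deviation bound, and your three-term decomposition together with the $\ell_1$--$\ell_\infty$ bound $|\hat R_n(\Sigma)-R(\Sigma)|\le |\Sigma^{-1}|_1\cdot\max_{j,k}|\hat S_n(t,j,k)-\Sigma(t,j,k)|$ is precisely the argument of Section~\ref{sec:thm-persis-proof} transplanted to the non-iid setting.
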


\begin{remark} If a local linear smoother is substituted for a kernel
smoother, the rate can be improved from $n^{1/3}$ to $n^{2/5}$ as the bias
will be bounded as $O(h^2)$ in \eqref{eq::risk-bias}.
\end{remark}

\begin{remark}
Suppose that $\forall i, j$, if $\theta_{ij} \not= 0$, we have 
$\theta_{ij} = \Omega(1)$. Then Condition \eqref{thm:cond::Ln} allows that
$\abs{\Theta}_1 = L_n$; hence if $p = n^{\xi}$ and $\xi < 1/3$, 
we have that $\norm{\Theta}_0 = \Omega(p)$. Hence the family of graphs 
that we can guarantee persistency for, although sparse, is likely to include 
connected graphs, for example, when $\Omega(p)$ edges were formed randomly 
among $p$ nodes. 
\end{remark}
The smoothness condition in Lemma~\ref{lemma:common} is expressed in terms
of the elements of $\Sigma(t) = \left[\sigma_{ij}(t)\right]$.
It might be more natural to impose smoothness on 
$\Theta(t) = \Sigma(t)^{-1}$ instead. In fact, 
smoothness of $\Theta_t$ implies smoothness of $\Sigma_t$ as 
the next result shows.
Let us first specify two assumptions.
We use $\sigma^2_{i}(x)$ as a shorthand for $\sigma_{ii}(x)$. 
\begin{definition}
\label{def:sup-norms}
For a function $u: [0, 1] \rightarrow \R$, let
$\norm{u}_{\infty} = \sup_{x \in [0, 1]}|u(x)|$.
\end{definition}
\begin{assumption}
\label{as::S_0}
There exists some constant $S_0 < \infty$ such that
\begin{eqnarray}
\max_{i=1 \ldots, p} \sup_{t \in [0, 1]}|\sigma_{i}(t)| & \leq & S_0 < \infty, 
\; \mbox{ hence } \\
\max_{i=1 \ldots, p}\norm{\sigma_i}_{\infty} & \leq & S_0.
\end{eqnarray}
\end{assumption}
\begin{assumption}
\label{as::Theta}
Let $\theta_{ij}(t), \forall i, j,$ be twice differentiable functions such that 
$\theta'_{ij}(t) < \infty$ and $\theta''_{i j}(t) < \infty, \forall t \in [0,1]$.
In addition, there exist constants $S_1, S_2 < \infty$ such that
\begin{eqnarray}
\sup_{t \in [0, 1]}
\sum_{k=1}^p \sum_{\ell=1}^p \sum_{i=1}^p \sum_{j=1}^p 
|\theta_{ki}'(t)\theta_{\ell j}'(t)| & \leq & S_1 \\
\sup_{t \in [0, 1]}\sum_{k=1}^p \sum_{\ell=1}^p |\theta''_{k \ell}(t)|
& \leq & S_2,
\end{eqnarray}
where the first inequality guarantees that

$\sup_{t \in [0, 1]}\sum_{k=1}^p \sum_{\ell=1}^p
|\theta_{k\ell}'(t)| < \sqrt{S_1} < \infty.$
\end{assumption}
\begin{lemma}
Denote the elements of
$\Theta(t) = \Sigma(t)^{-1}$ by $\theta_{jk}(t)$. 
Under A~\ref{as::S_0} and A~\ref{as::Theta}, the smoothness condition
in Lemma~\ref{lemma:common} holds.
\end{lemma}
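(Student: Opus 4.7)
The plan is to express the derivatives of $\Sigma(t)$ in terms of derivatives of $\Theta(t) = \Sigma(t)^{-1}$ via the matrix-inverse identity, then bound them entrywise. Differentiating $\Sigma(t)\Theta(t) = I$ yields $\Sigma'(t) = -\Sigma(t)\Theta'(t)\Sigma(t)$, and differentiating once more gives $\Sigma''(t) = -\Sigma'(t)\Theta'(t)\Sigma(t) - \Sigma(t)\Theta''(t)\Sigma(t) - \Sigma(t)\Theta'(t)\Sigma'(t)$. Reading off the $(j,k)$ entry, $\sigma'_{jk}(t) = -\sum_{i,\ell}\sigma_{ji}(t)\theta'_{i\ell}(t)\sigma_{\ell k}(t)$, with a similar three-term sum for $\sigma''_{jk}(t)$.

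The key preparatory step is a dimension-free uniform bound on the entries of $\Sigma(t)$. Since $\Sigma(t)$ is positive semidefinite, Cauchy--Schwarz gives $|\sigma_{ij}(t)| \leq \sqrt{\sigma_{ii}(t)\sigma_{jj}(t)} = \sigma_i(t)\sigma_j(t) \leq S_0^2$ by A~\ref{as::S_0}. Substituting this into the formula for $\sigma'_{jk}(t)$ and pulling the $S_0^4$ factor outside, the remaining sum is $\sum_{i,\ell}|\theta'_{i\ell}(t)|$, which A~\ref{as::Theta} bounds by $\sqrt{S_1}$ uniformly in $t$ (this is exactly the remark appended to the assumption). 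Hence $\sup_{j,k,t}|\sigma'_{jk}(t)| \leq S_0^4\sqrt{S_1}$, giving the first half of the smoothness condition with $C_0 := S_0^4\sqrt{S_1}$.

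For the second derivative, I would apply the same template to each of the three terms in $\sigma''_{jk}(t)$. In the two $\Sigma'$-containing terms I replace one $\sigma'$-factor by the constant $C_0$ just obtained and the remaining $\sigma$-factor by $S_0^2$, then invoke A~\ref{as::Theta} on $\sum_{i,\ell}|\theta'_{i\ell}(t)|$ to contribute at most $2 C_0 S_0^2 \sqrt{S_1}$. The middle term is bounded by $S_0^4 S_2$ after using $|\sigma_{ji}|,|\sigma_{\ell k}| \leq S_0^2$ and the bound $\sup_t \sum_{i,\ell}|\theta''_{i\ell}(t)| \leq S_2$ directly from A~\ref{as::Theta}. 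Summing yields a finite constant $C$ bounding $\sup_{j,k,t}|\sigma''_{jk}(t)|$, which is precisely condition~\ref{lemma:cond::derivitives} of Lemma~\ref{lemma:common}. There is no real obstacle beyond careful index bookkeeping: the only substantive input is the $p$-free bound $|\sigma_{ij}(t)|\le S_0^2$, which comes for free from positive definiteness and A~\ref{as::S_0}, and the double sum in A~\ref{as::Theta} has exactly the shape needed so that after extracting the two $\sigma$-factors flanking each $\theta'$ what remains is bounded.
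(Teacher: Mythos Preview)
Your argument is correct and essentially identical to the paper's (Section~\ref{sec:sparsity}, Lemma~\ref{lemma:sigma-first-derivitive} and Theorem~\ref{thm:smoothness}): both differentiate $\Sigma\Theta=I$ to obtain $\Sigma'=-\Sigma\Theta'\Sigma$ and a three-term expression for $\Sigma''$, bound $|\sigma_{ij}|\le S_0^2$ via positive definiteness and A~\ref{as::S_0}, and then feed in the sums from A~\ref{as::Theta}. The only cosmetic difference is that the paper repackages the second derivative as $\Sigma''=\Sigma D\Sigma$ with $D=2\Theta'\Sigma\Theta'-\Theta''$ and invokes the four-index bound $S_1$ in one shot, whereas you keep the expanded three-term form and reuse your first-derivative bound (applying $\sqrt{S_1}$ twice), which yields the same constants.
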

The proof is in Section~\ref{sec:sparsity}.
In Section~\ref{sec:append-implications}, we show some preliminary results on 
achieving upper bounds on quantities that appear in 
Condition~\ref{lemma:cond::derivitives} of Lemma~\ref{lemma:common} through
the sparsity level of the inverse covariance matrix, i.e., 
$\norm{\Theta_t}_0, \forall t \in [0, 1]$.

\subsection{Proof of Theorem~\ref{lemma::iid}}
\label{sec:thm-persis-proof}
Note that $\forall n$,
$\sup_{\Sigma\in \Set_n} |R(\Sigma) - \hat{R}_n(\Sigma)| 
 \leq $
\begin{eqnarray*}
\sum_{j,k} | \Sigma^{-1}_{jk}|\ |\hat{S}_n(j,k)-\Sigma_{0}(j,k)|
 \leq
\delta_n \ \abs{\Sigma^{-1}}_1,
\end{eqnarray*}
where it follows from~\cite{RBLZ07} that
$$
\delta_n = \max_{j,k} |\hat{S}_n(j,k)-\Sigma_0(j,k)| =O_P(\sqrt{\log p/n}).
$$
Hence, minimizing over $\Set_n$ with 
$L_n = o\left(\frac{n}{\log p_n}\right)^{1/2}$,
\noindent
$\sup_{\Sigma\in \Set_n} |R(\Sigma) - \hat{R}_n(\Sigma)| = o_P(1).$
By the definitions of $\Sigma^*(n) \in \Set_n$ and  $\hat\Sigma_n \in \Set_n$,
we immediately have
$R(\Sigma^*(n)) \leq R(\hat\Sigma_n)$ and 
$\hat{R}_n(\hat\Sigma_n) \leq \hat{R}_n(\Sigma^*(n))$; thus
\begin{eqnarray*}
0 &\leq & R(\hat\Sigma_n) - R(\Sigma^*(n)) \\
& = &
R(\hat\Sigma_n) - \hat{R}_n(\hat\Sigma_n) + \hat{R}_n(\hat\Sigma_n) - R(\Sigma^*(n)) \\
& \leq &
R(\hat\Sigma_n) - \hat{R}_n(\hat\Sigma_n) + \hat{R}_n(\Sigma^*(n)) - R(\Sigma^*(n))
\end{eqnarray*}
Using the triangle inequality and $\hat\Sigma_n, \Sigma^*(n) \in \Set_n$, 
\begin{eqnarray*}
\lefteqn{
|R(\hat\Sigma_n) - R(\Sigma^*(n))|\leq } \\
 & & |R(\hat\Sigma_n) - \hat{R}_n(\hat\Sigma_n) 
+ \hat{R}_n(\Sigma^*(n)) - R(\Sigma^*(n)) | \\
&\leq & 
|R(\hat\Sigma_n) - \hat{R}_n(\hat\Sigma_n)| + | \hat{R}_n(\Sigma^*(n)) - R(\Sigma^*(n)) | \\
& \leq &
2 \sup_{\Sigma\in \Set_n}|R(\Sigma) - \hat{R}_n(\Sigma)|.
\; \; \mbox{Thus } \forall \epsilon > 0,
\end{eqnarray*}
  the event 
$\left\{\abs{R(\hat\Sigma_n) - R(\Sigma^*(n))} > \epsilon \right\}$
is contained in the event
$\left\{\sup_{\Sigma\in \Set_n}|R(\Sigma) - \hat{R}_n(\Sigma)| > \epsilon/2\right\}$.
Thus, for $L_n = o( (n/\log n)^{1/2})$, and $\forall \epsilon > 0,$
as $n \to \infty$,
$$\prob{\abs{R(\hat\Sigma_n) - R(\Sigma^*(n)) }
> \epsilon} \leq $$
$\prob{\sup_{\Sigma\in \Set_n}|R(\Sigma) - \hat{R}_n(\Sigma)| > \epsilon/2} \to 0.
\; \; \; \Box$

\section{Frobenius Norm Consistency}
\label{sec:frob}
In this section, we show an explicit convergence rate in the Frobenius norm for
estimating $\Theta(t), \forall t$, where $p, |F|$ grow with $n$, so long as
the covariances change smoothly over $t$.
Note that certain smoothness assumptions on a matrix $W$ would guarantee the 
corresponding smoothness conditions on its inverse $W^{-1}$, so long as $W$ 
is non-singular, as we show in Section~\ref{sec:sparsity}.
We first write our time-varying estimator $\hat \Theta_n(t)$ for
$\Sigma^{-1}(t)$ at time $t \in [0,1]$ 
as the minimizer of the $\ell_1$ regularized negative smoothed log-likelihood
over the entire set of positive definite matrices,
\begin{equation}
\label{eq::lag-objective}
{\hskip-.3cm}\hat \Theta_{n}(t) =  \arg \min_{\Theta \succ 0} \big\{
{\rm tr}(\Theta\hat{S}_n(t)) - \log |\Theta| +
\lambda_n |\Theta|_{1} \big\}
\end{equation}
where $\lambda_n$ is a non-negative regularization parameter, and
$\hat{S}_n(t)$ is the smoothed sample covariance matrix using a kernel function as 
defined in~\eqref{eq:weightedcov}.

Now fix a point of interest $t_0$.
In the following, we use $\Sigma_0 = (\sigma_{ij}(t_0))$ to
denote the true covariance matrix at this time.
Let $\Theta_0 = \Sigma_0^{-1}$ be its inverse matrix.
Define the set $S =\{(i,j): \theta_{ij}(t_0) \neq 0, \ i \neq j\}$.
Then $|S| = s$.
Note that $|S|$ is twice the number of edges in the graph $G(t_0)$.
We make the following assumptions.
\begin{assumption}
\label{as::frob}
Let $p + s = o\left(n^{(2/3)}/\log n\right)$ and
$\varphi_{\min} (\Sigma_0) \geq \underline{k}> 0$, hence
$\varphi_{\max} (\Theta_0) \leq 1 / \underline{k}$.
For some sufficiently large constant $M$, let
$\varphi_{\min}(\Theta_0) = \Omega\left(2M\sqrt{ \frac{(p+s)\log n}{n^{2/3}}}\right)$.
\end{assumption}
The proof draws upon techniques from~\cite{RBLZ07}, with 
modifications necessary to handle the fact that we penalize $\abs{\Theta}_1$ 
rather than $|\Theta^{\Diamond}|_1$ as in their case.
\begin{theorem}
\label{thm:frob}
Let $\hat\Theta_{n}(t)$ be the minimizer defined by \eqref{eq::lag-objective}.
Suppose all conditions in Lemma~\ref{lemma:common} and 
A~\ref{as::frob} hold. If 
$$\lambda_n \asymp \sqrt{\frac{\log n}{n^{2/3}}}, \; \; \; \mbox{then}$$
\begin{equation}
\label{eq::frob-con}
\fnorm{\hat \Theta_{n}(t) -\Theta_0} =
 O_P\left(2M \sqrt{ \frac{(p+s)\log n}{n^{2/3}}}\right) \ .
\end{equation}
\end{theorem}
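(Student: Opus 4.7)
The plan is to adapt the Rothman--Bickel--Levina--Zhu technique to the kernel-smoothed covariance and to the full (diagonal-inclusive) $\ell_1$ penalty. Set $\Delta = \Theta - \Theta_0$ and define the recentered objective
\[
Q(\Delta) = \tr\bigl((\Theta_0+\Delta)\hat{S}_n(t)\bigr) - \log|\Theta_0+\Delta| + \lambda_n|\Theta_0+\Delta|_1 - \bigl(\tr(\Theta_0\hat{S}_n(t)) - \log|\Theta_0| + \lambda_n|\Theta_0|_1\bigr).
\]
Because $Q$ is convex and minimized at $\hat\Delta_n = \hat\Theta_n(t) - \Theta_0$ with $Q(0)=0$, it suffices to show that, with probability tending to one, $Q(\Delta) > 0$ uniformly on the Frobenius sphere $\{\Delta:\fnorm{\Delta} = M r_n\}$ with $r_n = 2\sqrt{(p+s)\log n/n^{2/3}}$; by convexity $\hat\Delta_n$ is then confined to the enclosed ball.

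Next I would perform a second-order Taylor expansion of $-\log|\Theta|$ around $\Theta_0$, writing $\log|\Theta_0+\Delta| - \log|\Theta_0| = \tr(\Sigma_0\Delta) - \mvec(\Delta)^T A(\Delta)\, \mvec(\Delta)$, where
\[
A(\Delta) = \int_0^1 (1-v)\,(\Theta_0+v\Delta)^{-1}\otimes(\Theta_0+v\Delta)^{-1}\,dv.
\]
Assumption~\ref{as::frob} gives $\varphi_{\min}(\Theta_0)\ge 2Mr_n$ for $n$ large, so on the sphere $\Theta_0 + v\Delta \succ 0$ for all $v\in[0,1]$ with $\varphi_{\max}\bigl((\Theta_0+v\Delta)^{-1}\bigr)$ bounded by a constant depending only on $\underline k$. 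This yields a quadratic lower bound
$\mvec(\Delta)^T A(\Delta)\mvec(\Delta) \ge c_*\fnorm{\Delta}^2$
for a constant $c_* > 0$, which will drive the argument.

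The remaining pieces are the linear deviation term $\tr\bigl(\Delta(\hat S_n(t)-\Sigma_0)\bigr)$ and the penalty difference. Writing $W = \hat S_n(t) - \Sigma_0$, Lemma~\ref{lemma:common} (together with \eqref{eq::exact-tail}) gives $\max_{j,k}|W_{jk}| = O_P(\sqrt{\log n/n^{2/3}})$, so choosing $\lambda_n\asymp\sqrt{\log n/n^{2/3}}$ makes $\lambda_n$ dominate $\norm{W}_\infty$ up to a constant. Decompose $\Delta = \Delta^{\searrow} + \Delta^{\Diamond}_S + \Delta^{\Diamond}_{S^c}$ into diagonal, on-support off-diagonal, and off-support off-diagonal parts. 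Bound the linear term on the $(p+s)$ entries of $\Delta^{\searrow}\cup\Delta^{\Diamond}_S$ by $\norm{W}_\infty\sqrt{p+s}\,\fnorm{\Delta}$ via Cauchy--Schwarz, and on $\Delta^{\Diamond}_{S^c}$ by $\norm{W}_\infty\,|\Delta^{\Diamond}_{S^c}|_1$. For the penalty, the triangle inequality (applied entrywise, using $\theta_{0,ij}=0$ on $S^c$ off the diagonal) gives
\[
\lambda_n\bigl(|\Theta_0+\Delta|_1-|\Theta_0|_1\bigr) \;\ge\; \lambda_n\,|\Delta^{\Diamond}_{S^c}|_1 - \lambda_n\,|\Delta^{\Diamond}_S|_1 - \lambda_n\,|\Delta^{\searrow}|_1.
\]
Choosing the constant in $\lambda_n$ so that $\lambda_n \ge 2\norm{W}_\infty$ makes the $|\Delta^{\Diamond}_{S^c}|_1$ terms cancel favorably, and the remaining $\lambda_n(|\Delta^{\Diamond}_S|_1 + |\Delta^{\searrow}|_1)$ is bounded by $\lambda_n\sqrt{p+s}\,\fnorm{\Delta}$. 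Collecting terms,
\[
Q(\Delta) \;\ge\; c_*\fnorm{\Delta}^2 \;-\; C\,\sqrt{\tfrac{\log n}{n^{2/3}}}\,\sqrt{p+s}\,\fnorm{\Delta},
\]
which is strictly positive on $\fnorm{\Delta}=Mr_n$ once $M$ is chosen large enough relative to $c_*$ and $C$.

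The main obstacle, and the departure from~\cite{RBLZ07}, is the treatment of the diagonal. Because $\Theta_0^{\searrow}\neq 0$ but is now inside the penalty, the triangle-inequality step produces an extra unfavorable $-\lambda_n|\Delta^{\searrow}|_1$ that must be absorbed, forcing the Cauchy--Schwarz bound over $p$ diagonal entries in addition to the $s$ on-support off-diagonal entries; this is precisely what makes the rate scale with $\sqrt{p+s}$ rather than $\sqrt{s}$ and why the eigenvalue condition in Assumption~\ref{as::frob} is calibrated to $\sqrt{(p+s)\log n/n^{2/3}}$, ensuring the Taylor expansion is valid throughout the ball in which $\hat\Delta_n$ is shown to lie.
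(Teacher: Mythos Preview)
Your proposal is correct and follows essentially the same approach as the paper: the Rothman--Bickel--Levina--Zhu localization on a Frobenius sphere, the integral-remainder Taylor expansion of $\log\det$, the decomposition $\Delta=\Delta^{\se}+\Delta^{\Diamond}_S+\Delta^{\Diamond}_{S^c}$ with the triangle-inequality bound on the penalty, and the convexity step trapping $\hat\Delta_n$ inside the ball all appear in the paper's proof with the same structure and constants. One small slip to fix: the quadratic lower bound $\mvec(\Delta)^T A(\Delta)\,\mvec(\Delta)\ge c_*\fnorm{\Delta}^2$ follows from $\varphi_{\max}(\Theta_0+v\Delta)\le 1/\underline{k}+o(1)$ (equivalently, $\varphi_{\min}\bigl((\Theta_0+v\Delta)^{-1}\bigr)$ bounded below), not from $\varphi_{\max}\bigl((\Theta_0+v\Delta)^{-1}\bigr)$, which here is of order $1/r_n\to\infty$ and in any case would bound $A(\Delta)$ from above; the paper isolates this as Lemma~\ref{lemma:integral}.
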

\begin{proof}
Let $\ul{0}$ be a matrix with all entries being zero. Let
\begin{eqnarray}
\label{eq::convex-func}
Q (\Theta) &  = & \nonumber \tr (\Theta \hat{S}_n(t_0)) - \log |\Theta| +
\lambda |\Theta| - \\
&& \nonumber \tr (\Theta_0 \hat{S}_n(t_0)) + \log |\Theta_0| - \lambda |\Theta_0|_1  \\
& = & \nonumber
\tr \left(( \Theta-\Theta_0) (\hat{S}_n(t)-\Sigma_0)\right)-\\
&& \nonumber (\log |\Theta| - \log |\Theta_0|)
+ \tr\left((\Theta-\Theta_0) \Sigma_0 \right) \\
&+& \lambda ( |\Theta|_1  - |\Theta_0|_1).
\end{eqnarray}
$\Hat{\Theta}$ minimizes $Q(\Theta)$, or equivalently 
$\hat\Delta_n = \hat{\Theta} - \Theta_0$ minimizes
$G(\Delta) \equiv Q(\Theta_0 + \Delta)$. 
Hence $G(\ul{0}) = 0$ and $G(\hat\Theta_n) \leq G(\ul{0}) = 0$ by definition.
Define for some constant $C_1$,
$\delta_n = C_1 \sqrt{\frac{\log n}{n^{2/3}}}.$
Now, let
\be 
\lambda_n = \frac{C_1}{\ve} \sqrt{ \frac{\log n}{n^{2/3}}}
= \frac{\delta_n}{\ve} \; \;\mbox{for some }\;\; 0 < \ve < 1.
\te
Consider now the set
\begin{equation*}
\T_n = \{ \Delta: \Delta = B - \Theta_0 , B, \Theta_0 \succ 0, \fnorm{\Delta} = M r_n\},
\end{equation*}
where
\be
\label{eq::sphere}
r_n = \sqrt{\frac{(p+s)\log n }{n^{2/3}}} \asymp \delta_n \sqrt{p+s} \to 0.
\te
\begin{claim}
\label{claim:posi-def-interval}
Under A~\ref{as::frob}, for all $\Delta \in \T_n$
such that  $\fnorm{\Delta} = o(1)$ as in \eqref{eq::sphere},
$\Theta_0 + v \Delta \succ 0, \forall v \in I \supset [0, 1]$.
\end{claim}
\begin{proof}
It is sufficient to show that $\Theta_0 + (1 + \ve) \Delta \succ 0$
and $\Theta_0 - \ve \Delta \succ 0$ for some $1 > \ve > 0$.
Indeed, $\varphi_{\min} (\Theta_0 + (1 + \ve) \Delta) \geq 
\varphi_{\min} (\Theta_0) - (1 + \ve) \twonorm{\Delta} > 0$
for $\ve < 1$, given that $\varphi_{\min} (\Theta_0) = \Omega(2Mr_n)$ and 
$\twonorm{\Delta} \leq \fnorm{\Delta} = Mr_n$.
Similarly, $\varphi_{\min} (\Theta_0 - \ve \Delta)  \geq
\varphi_{\min} (\Theta_0) - \ve \twonorm{\Delta} > 0$ for 
$\ve < 1$. 
\end{proof}

Thus we have that $\log\det(\Theta_0 + v \Delta)$ is infinitely differentiable on 
the open interval $I \supset [0, 1]$ of $v$. This allows us to use the Taylor's 
formula with integral remainder to obtain the following lemma:
\begin{lemma}
\label{lemma:sphere-pos-bound}
With probability $1 - 1/n^c$ for some $c \geq 2$,
$G(\Delta) > 0$ for all $\Delta \in \T_n$.
\end{lemma}
\begin{proof}
Let us use $A$ as a shorthand for
$$\mvec{\Delta}^T \left( \int^1_0(1-v)
(\Theta_0 + v \Delta)^{-1} \otimes (\Theta_0 + v \Delta)^{-1}dv
\right) \mvec{\Delta},$$
where $\otimes$ is the Kronecker product (if
$W= (w_{ij})_{m \times n}$, $P=(b_{k\ell})_{p \times q}$,
then $W \otimes P = (w_{ij}P)_{m p \times nq}$),
and $\mvec{\Delta} \in \R^{p^2}$ is $\Delta_{p \times p}$ 
vectorized. Now, the Taylor expansion gives \\
$\log|\Theta_0 + \Delta| - \log|\Theta_0| = 
\frac{d}{dv}\log|\Theta_0 + v\Delta||_{v=0} \Delta + \\
\int_0^1(1-v) \frac{d^2}{dv^2}  \log\det(\Theta_0 + v \Delta) dv 
= {\rm tr}(\Sigma_0 \Delta) + A,$
where by symmetry,
${\rm tr}(\Sigma_0 \Delta) = \tr (\Theta-\Theta_0) \Sigma_0$. Hence
\begin{eqnarray}
\label{eq::G2}
\lefteqn{G(\Delta) \;=\;} \\
\nonumber
&&  A +\tr \left( \Delta (\hat{S}_n -\Sigma_0) \right)
 + \lambda_n \left( |\Theta_0 + \Delta|_1  - |\Theta_0|_1\right).
\end{eqnarray}
For an index set $S$ and a matrix $W = [w_{ij}]$, write $W_S \equiv
(w_{ij} I( (i,j) \in S))$, where $I(\cdot)$ is an indicator function.
Recall $S= \{ (i,j) : \ \Theta_{0ij} \neq 0, \ i\neq j \}$ and let
$S^c = \{ (i,j) : \ \Theta_{0ij} = 0, \ i\neq j \}$.
Hence 
$\Theta =  \Theta^{\se} + \Theta_{S}^{\Diamond} + \Theta_{S^c}^{\Diamond}, 
\forall \Theta$ in our notation.
Note that we have $\Theta_{0S^c}^{\Diamond} = \ul{0}$,
\begin{eqnarray*} 
\nonumber
|\Theta_0^{\Diamond} + \Delta^{\Diamond}|_1 & = & |\Theta_{0S}^{\Diamond} 
+ \Delta_S^{\Diamond}|_1 +  |\Delta_{S^c}^{\Diamond}|_1, \\ \nonumber
|\Theta_0^{\Diamond}|_1 & = & |\Theta^{\Diamond}_{0S}|_1, \; \; 
\mbox{ hence} \\ \nonumber
|\Theta^{\Diamond}_0 + \Delta^{\Diamond}
|_1  - |\Theta_0^{\Diamond}|_1 
& \geq & 
\label{eq::diag-norm}
\abs{\Delta^{\Diamond}_{S^c}}_1 - \abs{\Delta^{\Diamond}_S}_1, \\
|\Theta^{\se}_0 + \Delta^{\se}|_1  -  |\Theta_0^{\se}|_1 
& \geq &   
- |\Delta^{\se}|_1,
\end{eqnarray*}
where the last two steps follow from the triangle inequality. Therefore
\begin{eqnarray}
\label{eq::Delta-I} \nonumber
\lefteqn{|\Theta_0 + \Delta|_1  - |\Theta_0|_1 = } \\ \nonumber
& &|\Theta^{\Diamond}_0 + \Delta^{\Diamond}|_1  - |\Theta_0^{\Diamond}|_1 + 
|\Theta^{\se}_0 + \Delta^{\se}|_1  -  |\Theta_0^{\se}|_1   \\
& \geq &   
\abs{\Delta_{S^c}^{\Diamond}}_1 - 
\abs{\Delta_{S}^{\Diamond}}_1 - |\Delta^{\se}|_1.
\end{eqnarray}
Now, from Lemma~\ref{lemma:common},
$\max_{j,k}|\hat{S}_n(t,j,k) - \sigma(t,j,k)| =
O_P\left(\frac{\sqrt{\log n}}{n^{1/3}}\right)$
$ = O_P(\delta_n).$
By \eqref{eq::exact-tail}, with probability $1 - \inv{n^2}$
\begin{eqnarray}  \nonumber
& & \abs{\tr \big(\Delta  (\hat{S}_n -\Sigma_0)\big)} 
\le \delta_n \abs{\Delta}_1, \; \; \mbox{ hence by \eqref{eq::Delta-I} \nonumber} \\  \nonumber
\lefteqn{\tr \left( \Delta (\hat{S}_n -\Sigma_0) \right) + 
\lambda_n \left( |\Theta_0 + \Delta|_1  - |\Theta_0|_1\right)}\\ \nonumber
& \geq &
\lefteqn{- \delta_n\abs{\Delta^{\se}}_1 -  
\delta_n \abs{\Delta_{S^c}^{\Diamond}}_1 -
 \delta_n \abs{\Delta_S^{\Diamond}}_1}\\ \nonumber
& & 
-\lambda_n |\Delta^{\se}|_1 + \lambda_n \abs{\Delta_{S^c}^{\Diamond}}_1 
-  \lambda_n\abs{\Delta_{S}^{\Diamond}}_1
\\ \nonumber
& \geq & - (\delta_n + \lambda_n) \left(|\Delta^{\se}|_1 + \abs{\Delta_{S}^{\Diamond}}_1\right)
+ (\lambda_n -\delta_n) \abs{\Delta_{S^c}^{\Diamond}}_1 \\
& \geq &
\label{eq::trace-term} 
- (\delta_n + \lambda_n) 
\left(|\Delta^{\se}|_1 + \abs{\Delta_{S}^{\Diamond}}_1\right), \mbox{ where}
\end{eqnarray}
\begin{eqnarray} \nonumber
\lefteqn{(\delta_n + \lambda_n) 
\left(|\Delta^{\se}|_1 + \abs{\Delta_{S}^{\Diamond}}_1\right)} \\ \nonumber
& \leq & 
(\delta_n + \lambda_n) 
\left(\sqrt{p}\fnorm{\Delta^{\se}} + \sqrt{s} \fnorm{\Delta_{S}^{\Diamond}}\right) 
\\   \nonumber
& \leq & 
(\delta_n + \lambda_n) 
\left(\sqrt{p}\fnorm{\Delta^{\se}} + \sqrt{s} \fnorm{\Delta^{\Diamond}}\right) 
\\   \nonumber
& \leq & 
(\delta_n + \lambda_n) \max\{\sqrt{p}, \sqrt{s}\} 
\left(\fnorm{\Delta^{\se}} + \fnorm{\Delta^{\Diamond}}\right)
\\   \nonumber
& \leq & 
(\delta_n + \lambda_n) \max\{\sqrt{p}, \sqrt{s}\}\sqrt{2}\fnorm{\Delta} \\ 
& \leq & 
\label{eq::G2-two-terms}
\delta_n\frac{1 + \ve}{\ve} \sqrt{p + s}\sqrt{2} \fnorm{\Delta}.
\end{eqnarray}
Combining \eqref{eq::G2}, \eqref{eq::trace-term}, and \eqref{eq::G2-two-terms}, 
we have with probability $1 - \inv{n^c}$, for all $\Delta \in \T_n$,
\begin{eqnarray*}
\nonumber
G(\Delta) & \geq & A - (\delta_n + \lambda_n) 
\left(|\Delta^{\se}|_1 + \abs{\Delta_{S}^{\Diamond}}_1\right) \\
& \geq &  \frac{\underline{k}^2}{2 + \tau} \| \Delta \|^2_F
-  \delta_n\frac{1 + \ve}{\ve} \sqrt{p + s}\sqrt{2} \fnorm{\Delta}  \\
&= &
\fnorm{\Delta}^2\left(\frac{\underline{k}^2}{2 + \tau} 
- \delta_n\frac{\sqrt{2}(1 + \ve)}{\ve\fnorm{\Delta}} \sqrt{p + s} \right) \\
& = & 
\fnorm{\Delta}^2\left(\frac{\underline{k}^2}{2 + \tau} 
- \frac{\delta_n\sqrt{2} (1 + \ve)}{\ve M r_n}\sqrt{p + s}\right) > 0
\end{eqnarray*}
for $M$ sufficiently large, where the bound on $A$ comes 
from Lemma~\ref{lemma:integral} by~\cite{RBLZ07}.
\end{proof}
\begin{lemma}{\textnormal(\cite{RBLZ07})}
\label{lemma:integral}
For some $\tau = o(1)$, under A~\ref{as::frob},\\
$\mvec{\Delta}^T \left(\int^1_0(1-v)
(\Theta_0 + v \Delta)^{-1} \otimes (\Theta_0 + v \Delta)^{-1}dv
\right)\mvec{\Delta} \\
\mbox{     }\geq \fnorm{\Delta}^2 \frac{\underline{k}^2}{2 + \tau}$,
for all $\Delta \in \T_n$.
\end{lemma}
We next show the following claim.
\begin{claim}
\label{claim:outside}
If $G(\Delta) > 0, \forall \Delta \in \T_n$,
then $G(\Delta) > 0$ for all $\Delta$ in
$\V_n = 
\{\Delta: \Delta = D - \Theta_0, D \succ 0, \fnorm{\Delta} > M r_n,
\mbox{ for }r_n \mbox{ as in }\eqref{eq::sphere}\}$.
Hence if $G(\Delta) > 0, \forall \Delta \in \T_n$,
then $G(\Delta) > 0$ for all $\Delta \in \T_n \cup \V_n$.
\end{claim}
\begin{proof}
Now by contradiction, suppose $G(\Delta') \leq 0$ for some $\Delta' \in \V_n$.
Let $\Delta_0 = \frac{M r_n}{\fnorm{\Delta'}}\Delta'$. 
Thus $\Delta_0 = \theta \ul{0} + (1 - \theta) \Delta'$, where $0< 1 - \theta = 
 \frac{M r_n}{\fnorm{\Delta'}} <1$ by definition of $\Delta_0$.
Hence $\Delta_0 \in \T_n$ given that $\Theta_0 + \Delta_0 \succ 0$ by
Claim~\ref{claim:posi-def}.
Hence by convexity of $G(\Delta)$, we have that 
$G(\Delta_0) \leq  \theta G(\ul{0}) + (1 - \theta) G(\Delta') \leq 0$,
contradicting that $G(\Delta_0) > 0$ for $\Delta_0 \in \T_n$.
\end{proof}

By Claim~\ref{claim:outside} and the fact that 
$G(\hat\Delta_n) \leq G(0) =0$, we have the following: 
If $G(\Delta) > 0, \forall \Delta \in \T_n$, then
$\hat\Delta_n \not\in (\T_n \cup \V_n)$, that is,
$\fnorm{\hat\Delta_n} < M r_n$, given that 
$\hat\Delta_n = \hat\Theta_n - \Theta_0$,
where $\hat\Theta_n, \Theta_0 \succ 0$. Therefore
\begin{eqnarray*}
\label{eq::end-of-proof}
\lefteqn{\prob{\| \hat\Delta_n \|_F  \geq M r_n} =  
1 - \prob{\| \hat\Delta_n \|_F < M r_n}}  \\
& \leq & 1 - \prob{G(\Delta) > 0, \forall \Delta \in \T_n} \\
& = & \prob{G(\Delta) \leq 0 \text{ for some } \Delta \in \T_n}  < \inv{n^c}.
\end{eqnarray*}
We thus establish that $\fnorm{\hat\Delta_n} \leq O_P\left(M r_n\right)$.
\end{proof}
\begin{claim}
\label{claim:posi-def}
Let $B$ be a $p \times p$ matrix.
If $B \succ 0$ and  $B + D \succ 0$, 
then  $B + v D \succ 0$ for all $v \in [0, 1]$.
\end{claim}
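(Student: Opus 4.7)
The plan is to exploit the fact that the positive definite cone is convex, so positive definiteness is preserved under convex combinations of positive definite matrices. The key observation is the algebraic identity
\begin{equation*}
B + vD \;=\; (1-v)\, B \;+\; v\,(B + D),
\end{equation*}
which expresses $B + vD$ as a convex combination of $B$ and $B+D$ whenever $v \in [0,1]$.

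From here I would simply verify positive definiteness pointwise. Fix any nonzero $x \in \mathbb{R}^p$ and compute
\begin{equation*}
x^T (B + vD) x \;=\; (1-v)\, x^T B x \;+\; v\, x^T (B + D) x .
\end{equation*}
By hypothesis, $x^T B x > 0$ and $x^T(B+D)x > 0$, while $(1-v), v \in [0,1]$ with at least one of them strictly positive. Hence the right-hand side is strictly positive, giving $B + vD \succ 0$. The endpoints $v = 0$ and $v = 1$ are handled directly by the assumptions.

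There is essentially no obstacle here: the statement is a one-line consequence of the convexity of the PSD cone and positivity of the relevant quadratic forms. The only thing worth noting is that the result is used in the proof of Claim~\ref{claim:outside}, where one needs $\Theta_0 + \Delta_0 \succ 0$ for a convex combination $\Delta_0 = (1-\theta)\Delta'$ of $\underline{0}$ and $\Delta'$; applied with $B = \Theta_0$ and $D = \Delta'$, this claim delivers exactly what is needed to place $\Delta_0$ in the set $\T_n$.
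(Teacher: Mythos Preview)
Your proof is correct and is essentially the same argument as the paper's. The paper phrases it slightly differently---deducing $x^T D x > -x^T B x$ from $x^T(B+D)x>0$ and then bounding $x^T B x + v\,x^T D x > (1-v)\,x^T B x > 0$---but this is algebraically equivalent to your convex-combination identity $B+vD=(1-v)B+v(B+D)$.
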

\begin{proof}
We only need to check for $v \in (0, 1)$, where $1 - v > 0$; 
$\forall x \in \R^p$, by $B \succ 0$ and $B + D \succ 0$,
$x^T B x  > 0$ and $x^T (B + D) x > 0$; hence
$x^T D x  > - x^T B x$. Thus
$x^T (B + v D) x =  x^T B x + v x^T D x >  (1 - v) x^T B x > 0.$
\end{proof}


\section{Large Deviation Inequalities}
\label{sec:dev-bound}
Before we go on, we explain the notation that we follow throughout this 
section. We switch notation from $t$ to $x$ and form a regression problem for 
non-iid data. Given an interval of $[0, 1]$, the point of interest is $x_0 = 1$.
We form a design matrix by sampling a set of $n$
$p$-dimensional Gaussian random vectors $Z^t$ at $t= 0, 1/n,$ $2/n, \ldots, 1$, 
where $Z^t \sim N(0,\Sigma_t)$ are independently distributed.
In this section, we index the random vectors $Z$ with $k = 0, 1, \ldots, n$ 
such that $Z_k = Z^t$ for $k = nt$, with corresponding covariance matrix denoted
by $\Sigma_k$. Hence
\begin{equation}
\label{eq:ts-vectors}
Z_k = (Z_{k1}, \ldots, Z_{kp})^T \sim N(0,\Sigma_k), \; \forall k.
\end{equation}
These are independent but not identically distributed.
We will need to generalize the usual inequalities.
In Section~\ref{sec:append-boxcar}, 
via a boxcar kernel function, we use moment generating functions to show that
for $\hat\Sigma = \frac{1}{n}\sum_{k=1}^n Z_k Z_k^T$,
\begin{equation}
\label{eq:cov-dev-bound}
P^n(|\hat\Sigma_{ij} - \Sigma_{ij}(x_0)| > \epsilon) < e^{-c n \epsilon^2}
\end{equation}
where
$P^n = P_1\times \cdots \times P_n$ denotes the product measure.
We look across $n$ time-varying Gaussian vectors, and roughly, 
we compare $\hat\Sigma_{ij}$ with $\Sigma_{ij}(x_0)$,
where $\Sigma(x_0) = \Sigma_{n}$ is the covariance matrix in the end of 
the window for $t_0 = n$.
Furthermore, we derive inequalities in Section~\ref{sec:kernel-dev} 
for a general kernel function.

\subsection{Bounds For Kernel Smoothing}

\label{sec:kernel-dev}
In this section, we derive large deviation inequalities for the covariance 
matrix based on kernel regression estimations. Recall that we assume that the 
symmetric nonnegative kernel function $K$ has a bounded support $[-1, 1]$
in A~\ref{as::kernel}.
This kernel has the property that:
\begin{eqnarray}
\label{eq::Ker1}
2 \int_{-1}^{0} v K(v)dv & \leq & 2 \int_{-1}^{0} K(v)dv = 1  \\
\label{eq::Ker2}
2 \int_{-1}^{0} v^2 K(v)dv & \leq & 1.
\end{eqnarray}
In order to estimate $t_0$, instead of taking an average of sample 
variances/covariances over the last $n$ samples, we use the weighting scheme 
such that data close to $t_0$ receives larger weights than those that are far away.
Let $\Sigma(x) = \left(\sigma_{ij}(x)\right)$.
Let us define $x_0 = \frac{t_0}{n} = 1$, and $\forall i = 1, \ldots, n$,
$x_i =  \frac{t_0 - i}{n}$ and
\begin{eqnarray}
\label{eq::kernel-weight}
\ell_i(x_0)  =  \frac{2}{nh} K\left(\frac{x_i - x_0}{h}\right)
\approx \frac{K\left(\frac{x_i - x_0}{h}\right)}{\sum_{i=1}^n K\left(\frac{x_i - x_0}{h}\right)}
\end{eqnarray}
where the approximation is due to replacing the sum with the Riemann integral:
\begin{eqnarray*}
\sum_{i=1}^n \ell_i(x_0) = 
\sum_{i=1}^n \frac{2}{nh} K\left(\frac{x_i - x_0}{h}\right)
\approx 
2 \int_{-1}^{0} K(v) dv = 1,
\end{eqnarray*}
due to the fact that $K(v)$ has compact support in $[-1, 1]$ and $h \leq 1$. Let 
$\Sigma_k = \left(\sigma_{ij}(x_k)\right), \forall k = 1, \ldots, n,$
where 
$\sigma_{ij}(x_k) = \cov(Z_{ki}, Z_{kj}) = \rho_{ij}(x_k)\sigma_{i}(x_k) \sigma_{j}(x_k)$ and
$\rho_{ij}(x_k)$ is the correlation coefficient between $Z_i$ and $Z_j$ at time
$x_k$.
Recall that we have independent $(Z_{ki}Z_{kj})$ for all $k =1, \ldots, n$ 
such that $\expct{(Z_{ki} Z_{kj})} = \sigma_{ij}(x_k)$. Let 
$$\Phi_1(i, j) =
\inv{n}\sum_{k=1}^n \frac{2}{h} K\left(\frac{x_k - x_0}{h}\right) 
\sigma_{ij}(x_k), \; \; \mbox{hence}$$
$$\expct{\sum_{k=1}^n \ell_k(x_0) Z_{ki}Z_{kj}}
=\sum_{k=1}^n \ell_k(x_0) \sigma_{ij}(x_k)  = \Phi_1(i,j).$$
We thus decompose and bound for point of interest $x_0$
\begin{eqnarray}\nonumber
\lefteqn{\size{\sum_{k=1}^n \ell_k(x_0) Z_{ki}Z_{kj} 
- \sigma_{ij}(x_0)} \leq} \\ \nonumber
\label{eq::bd-decompose}
& & \size{\expct{\sum_{k=1}^n \ell_k(x_0)Z_{ki}Z_{kj}} 
- \sigma_{ij}(x_0)}  + \\
& & \size{\sum_{k=1}^n \ell_k(x_0) Z_{ki}Z_{kj}-
 \expct{\sum_{k=1}^n \ell_k(x_0) Z_{ki}Z_{kj}}} \\ \nonumber
& =& \size{\sum_{k=1}^n \ell_k(x_0) Z_{ki}Z_{kj} - \Phi_1(i,j)}  + 
\size{\Phi_1(i,j) - \sigma_{ij}(x_0)}.
\end{eqnarray}
Before we start our analysis on large deviations, we first look at the bias term.
\begin{lemma}\label{lemma::bias}
Suppose there exists $C>0$ such that
$$\max_{i,j}\sup_t |\sigma''(t, i, j)| \leq C. \; \; \mbox{Then}$$
$$
\forall t \in [0, 1], \;\; \max_{i,j}|\expct{\hat{S}_n(t, i, j)} - \sigma_{ij}(t)| = O(h).
$$
\end{lemma}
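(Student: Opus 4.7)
The plan is to bound the bias by combining a Riemann-sum approximation with a Taylor expansion of $\sigma_{ij}$, and to track carefully why the first-order term only gives $O(h)$ rather than $O(h^2)$. Concretely, I would first write
$$
\mathbb{E}[\hat S_n(t,i,j)] - \sigma_{ij}(t) \;=\; \underbrace{\Big(\sum_{k=1}^n \ell_k(x_0)\sigma_{ij}(x_k) - 2\!\int_{-1}^{0}\!\!K(v)\sigma_{ij}(x_0+hv)\,dv\Big)}_{(\mathrm{I}):\ \text{Riemann error}} + \underbrace{\Big(2\!\int_{-1}^{0}\!\!K(v)\sigma_{ij}(x_0+hv)\,dv - \sigma_{ij}(x_0)\Big)}_{(\mathrm{II}):\ \text{continuous bias}}.
$$

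For term (II), I would Taylor expand $\sigma_{ij}(x_0+hv) = \sigma_{ij}(x_0) + hv\,\sigma_{ij}'(x_0) + \tfrac{1}{2}h^2v^2\sigma_{ij}''(\xi_v)$ and integrate against $2K(v)$ on $[-1,0]$. The zeroth-order term cancels $\sigma_{ij}(x_0)$ exactly by \eqref{eq::Ker1}. The first-order term contributes $h\,\sigma_{ij}'(x_0)\cdot 2\int_{-1}^0 v K(v)\,dv$, which is bounded in absolute value by $C_0 h$ using \eqref{eq::Ker1} and the hypothesis $|\sigma_{ij}'| \le C_0$. The second-order remainder is bounded by $\tfrac{1}{2}C h^2 \cdot 2\int_{-1}^0 v^2 K(v)\,dv \le \tfrac{1}{2}Ch^2$ by \eqref{eq::Ker2} and the hypothesis $|\sigma_{ij}''|\le C$. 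Hence $|(\mathrm{II})| \le C_0 h + O(h^2) = O(h)$, uniformly in $i,j,t$.

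For term (I), I would use that the integrand $v\mapsto K(v)\sigma_{ij}(x_0+hv)$ has bounded total variation on $[-1,0]$ (since $K$ has compact support and $\sigma_{ij}$ has bounded derivative), so the standard Riemann-sum error bound on a mesh of width $1/(nh)$ in the $v$-variable gives $|(\mathrm{I})| = O(1/(nh))$. Under $h_n \asymp n^{-1/3}$ this is $O(n^{-2/3}) = o(h)$, so term (I) is absorbed.

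The only real subtlety is term (II): because the evaluation point $x_0 = 1$ lies at the boundary of the time interval, the kernel is effectively \emph{one-sided}, so $\int v K(v)\,dv$ over the half-support does not vanish. This is exactly why the bias degrades from the interior rate $O(h^2)$ (which a symmetric kernel or local-linear smoother would give) to $O(h)$, consistent with the remark comparing kernel and local-linear smoothers. Combining (I) and (II) and taking the maximum over $(i,j)$ yields $\max_{i,j}|\mathbb{E}[\hat S_n(t,i,j)] - \sigma_{ij}(t)| = O(h)$ as claimed.
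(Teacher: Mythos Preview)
Your proof is correct and follows essentially the same route as the paper: approximate the weighted sum $\Phi_1(i,j)$ by the integral $2\int_{-1}^0 K(v)\sigma_{ij}(x_0+hv)\,dv$, then Taylor-expand $\sigma_{ij}(x_0+hv)$ to second order so that the linear term contributes $O(h)$ (via \eqref{eq::Ker1}) and the quadratic remainder contributes $O(h^2)$ (via \eqref{eq::Ker2}). If anything you are slightly more careful than the paper, which suppresses the Riemann-sum error under an ``$\approx$'' sign, whereas you make the $O(1/(nh))$ bound for term~(I) explicit and note it is absorbed under $h_n\asymp n^{-1/3}$; your remark that the one-sided support of the kernel at the boundary is what degrades the rate from $O(h^2)$ to $O(h)$ is exactly the content of the paper's subsequent Remark about local linear smoothing.
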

\begin{proof}
W.l.o.g, let $t = t_0$, hence
$\expct{\hat{S}_n(t, i, j)} = \Phi_1(i,j)$. We use the Riemann integral 
to approximate the sum,
\begin{eqnarray*}
\lefteqn{\Phi_1(i,j) = \inv{n} \sum_{k=1}^n
\frac{2}{h} K\left(\frac{x_k - x_0}{h}\right) \sigma_{ij}(x_k)} \\
& \approx &
\int_{x_n}^{x_0} \frac{2}{h} K\left(\frac{u - x_0}{h}\right) \sigma_{ij}(u) du \\
& = &
2\int_{-1/h}^{0} K(v) \sigma_{ij}(x_0+hv) dv.
\end{eqnarray*}
We now use Taylor's Formula to replace
$\sigma_{ij}(x_0 + hv)$ and obtain
$
2\int_{-1/h}^{0} K(v) \sigma_{ij}(x_0+hv) dv
= \\
2\int_{-1}^{0} K(v)\left(\sigma_{ij}(x_0) +
hv \sigma'_{ij}(x_0) +
\frac{\sigma''_{ij}(y(v))(hv)^2}{2}\right)dv \\
= \sigma_{ij}(x_0) +
2\int_{-1}^{0} K(v)\left(hv \sigma'_{ij}(x_0) +
\frac{C(hv)^2}{2}\right) dv,$
\begin{eqnarray*}
&\mbox{where }&2\int_{-1}^{0} K(v)\left(hv \sigma'_{ij}(x_0) +
\frac{C(hv)^2}{2}\right) dv \\
&= &2 h \sigma'_{ij}(x_0) \int_{-1}^{0} v K(v)dv 
+\frac{Ch^2}{2}  \int_{-1}^{0} v^2 K(v) dv\\
& \leq & h \sigma'_{ij}(x_0) + \frac{C h^2}{4}, \mbox { where } y(v) - x_0 < hv.
\end{eqnarray*}
Thus $\Phi_1(i,j) - \sigma_{ij}(x_0) = O(h)$.
\end{proof}

We now move on to the large deviation bound for all entries of the 
smoothed empirical covariance matrix.
\begin{lemma}
\label{lemma:deviation}
For $\epsilon < 
\frac{C_1 \left(\sigma_{i}^2(x_0) \sigma_{j}^2(x_0)+ \sigma^2_{ij}(x_0)\right)}
{\max_{k = 1, \ldots, n}\left(2K\left(\frac{x_k- x_0}{h}\right) \sigma_{i}(x_k)  
\sigma_{j}(x_k) \right)}$, where $C_1$ is defined in Claim~\ref{claim:Phi-2-bound},
for some $C>0$,
$$
\prob{|\hat{S}_n(t, i, j) - \expct{\hat{S}_n(t, i, j)}| > \epsilon}
\leq \exp\left\{ - C n h \epsilon^2 \right\}.
$$
\end{lemma}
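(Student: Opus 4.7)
The plan is to apply a Chernoff/Bernstein argument to the sum of independent (but not identically distributed) weighted centered products of jointly Gaussian variables. Writing
$$\hat{S}_n(t,i,j) - \mathbb{E}[\hat{S}_n(t,i,j)] \;=\; \sum_{k=1}^{n} \ell_k(x_0)\bigl(Z_{ki}Z_{kj} - \sigma_{ij}(x_k)\bigr),$$
the summands are mean-zero and independent by \eqref{eq:ts-vectors}, with weights $w_k := \ell_k(x_0) = \tfrac{2}{nh}K((x_k-x_0)/h)$ supported on the window $|x_k - x_0|\le h$.

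First, I would derive a per-term MGF bound. For $(X,Y)\sim N(0,\Sigma_k)$ with entries $\sigma_i^2(x_k),\sigma_j^2(x_k),\sigma_{ij}(x_k)$, the exact formula $\mathbb{E}[e^{\lambda XY}] = [(1-\lambda\sigma_{ij}(x_k))^2 - \lambda^2\sigma_i^2(x_k)\sigma_j^2(x_k)]^{-1/2}$ holds for $\lambda$ in an open interval around $0$; expanding $-\tfrac12\log(1-u)\le u+u^2$ on this interval yields the sub-exponential bound
$$\log \mathbb{E}\bigl[e^{\lambda(XY - \sigma_{ij}(x_k))}\bigr] \;\le\; c\,\lambda^2\bigl(\sigma_i^2(x_k)\sigma_j^2(x_k) + \sigma_{ij}^2(x_k)\bigr)$$
provided $|\lambda|\,\sigma_i(x_k)\sigma_j(x_k)$ is at most an absolute constant. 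This is exactly the sub-exponential property of products of correlated Gaussians; it is the one genuinely analytic step.

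Next, I would multiply these bounds to control the MGF of the full sum: with $\lambda$ replaced by $\lambda w_k$, independence gives
$$\log \mathbb{E}\!\left[\exp\!\Bigl(\lambda\!\sum_{k=1}^n w_k(Z_{ki}Z_{kj}-\sigma_{ij}(x_k))\Bigr)\right] \;\le\; c\lambda^2 \sum_{k=1}^{n} w_k^2 \bigl(\sigma_i^2(x_k)\sigma_j^2(x_k) + \sigma_{ij}^2(x_k)\bigr).$$
The admissibility constraint now reads $|\lambda|\,w_k\,\sigma_i(x_k)\sigma_j(x_k)\le\text{const}$ for every $k$, that is, $|\lambda|\le \text{const}/\max_k\bigl(\tfrac{2}{nh}K((x_k-x_0)/h)\sigma_i(x_k)\sigma_j(x_k)\bigr)$. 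A Riemann-sum approximation of the form already used in \eqref{eq::kernel-weight}, together with the smoothness of $\sigma_i,\sigma_j,\sigma_{ij}$, shows
$$\sum_{k=1}^{n} w_k^2 \bigl(\sigma_i^2(x_k)\sigma_j^2(x_k) + \sigma_{ij}^2(x_k)\bigr) \;=\; \frac{C_1}{nh}\bigl(\sigma_i^2(x_0)\sigma_j^2(x_0) + \sigma_{ij}^2(x_0)\bigr)(1+o(1)),$$
which is the variance-like quantity $\Phi_2(i,j)$ invoked via Claim~\ref{claim:Phi-2-bound}.

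Finally, Markov's inequality gives $\prob{|\hat S_n - \mathbb{E}\hat S_n|>\epsilon}\le 2\exp(-\lambda\epsilon + c\lambda^2 \Phi_2)$, which is optimized by choosing $\lambda \asymp nh\epsilon/[C_1(\sigma_i^2(x_0)\sigma_j^2(x_0)+\sigma_{ij}^2(x_0))]$, yielding the tail $\exp(-Cnh\epsilon^2)$. The resulting $\lambda$ lies inside the admissible MGF interval precisely when $\epsilon$ satisfies the hypothesis of the lemma, namely $\epsilon \lesssim C_1(\sigma_i^2(x_0)\sigma_j^2(x_0)+\sigma_{ij}^2(x_0))/\max_k\bigl(2K((x_k-x_0)/h)\sigma_i(x_k)\sigma_j(x_k)\bigr)$; a symmetric argument with $-\lambda$ handles the lower tail. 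The main obstacle is the first step: producing a clean quadratic MGF bound for $XY-\mathbb{E}XY$ with correlated Gaussians and carefully tracking the radius of convergence, since this radius is what forces the upper bound on $\epsilon$ stated in the lemma. Once that is in hand, the remainder is a Riemann-sum estimate plus a standard Chernoff optimization.
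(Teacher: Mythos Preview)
Your proposal is correct and follows the same Chernoff strategy as the paper: exact MGF for the weighted Gaussian product (Lemma~\ref{lemma:moment2}), Taylor control of the log-MGF, Riemann-sum evaluation of the variance proxy $\Phi_2$ (Claim~\ref{claim:Phi-2-bound}), and optimization of the exponent at $t=\epsilon/(4\Phi_2)$, with the admissibility constraint on $t$ translating into exactly the stated upper bound on $\epsilon$. The only difference is cosmetic: the paper expands the log-MGF to fourth order, introducing $\Phi_3,\Phi_4$ and bounding them via Claim~\ref{claim:phi-bounds} and Lemma~\ref{lemma:taylor-sums}, whereas your direct second-order sub-exponential bound $\log\mathbb{E}[e^{\lambda(XY-\sigma_{ij})}]\le c\lambda^2(\sigma_i^2\sigma_j^2+\sigma_{ij}^2)$ reaches the same $\exp(-Cnh\epsilon^2)$ tail on the same range of $\epsilon$ with less bookkeeping.
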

\begin{proof}
Let us define $A_k = Z_{ki} Z_{kj} - \sigma_{ij}(x_k)$.
\begin{eqnarray*}
\lefteqn{\prob{|\hat{S}_n(t,i, j) - \expct{\hat{S}_n(t, i,j)}| > \epsilon}} 
\\ \nonumber
& = & \prob{\sum_{k=1}^n \ell_k(x_0) Z_{ki} Z_{kj} - \sum_{k=1}^n
\ell_k(x_0) \sigma_{ij}(x_k) > \epsilon} 
\end{eqnarray*}
For every $t > 0$, we have by Markov's inequality
\begin{eqnarray}
\label{eq:kernel-markov}
\nonumber
\lefteqn{\prob{\sum_{k=1}^n n \ell_k(x_0) A_k  > n \epsilon}} \\ \nonumber
 & = & 
\prob{e^{t\sum_{k=1}^n \frac{2}{h} K\left(\frac{x_i - x_0}{h}\right) A_k}  > e^{nt\epsilon}} \\
& \leq &
\frac{\expct{e^{t\sum_{k=1}^n \frac{2}{h} K\left(\frac{x_i - x_0}{h}\right)  A_k}}}
{e^{nt\epsilon}}.
\end{eqnarray}
Before we continue, for a given $t$, let  us first define the following quantities,
where $i,j$ are omitted from $\Phi_1(i, j)$
\begin{itemize}
\item
$a_k = \frac{2t}{h} K\left(\frac{x_k - x_0}{h}\right)
(\sigma_{i}(x_k) \sigma_{j}(x_k)  + \sigma_{ij}(x_k))$ 
\item
$b_k = \frac{2t}{h} K\left(\frac{x_k - x_0}{h}\right)
 (\sigma_{i}(x_k) \sigma_{j}(x_k) - \sigma_{ij}(x_k))$
\mbox{ thus }
\item
$\Phi_1 = \inv{n}\sum_{k=1}^n \frac{a_k - b_k}{2t},$ \;
$\Phi_2 = \inv{n}\sum_{k = 1}^n \frac{a_k^2 + b_k^2}{4t^2}$
\item
$\Phi_3 = \inv{n}\sum_{k = 1}^n \frac{a_k^3 - b_k^3}{6t^3},$ \; 
$\Phi_4 = \inv{n}\sum_{k = 1}^n \frac{a_k^4 + b_k^4}{8t^4}$
\item
$M = \max_{k=1, \ldots, n}\left(\frac{2}{h} K\left(\frac{x_k- x_0}{h}\right) \sigma_{i}(x_k)  \sigma_{j}(x_k) \right)$
\end{itemize}
We now establish some convenient comparisons; see 
Section~\ref{sec:append-kernel-dev-1} and \ref{sec:append-taylor-sums} for
their proofs.
\begin{claim}
\label{claim:phi-bounds}
$\frac{\Phi_3}{\Phi_2} \leq \frac{4 M}{3}$ and 
$\frac{\Phi_4}{\Phi_2} \leq 2 M^2$, where both equalities are established 
at $\rho_{ij}(x_k)  = 1, \forall k$.
\end{claim}
\begin{lemma}
\label{lemma:taylor-sums}
For $b_k \leq a_k \leq \half, \forall k$,
$\half\sum_{k=1}^n \ln \inv{(1 - a_k)(1+b_k)}$
$\leq  nt \Phi_1 + nt^2 \Phi_2 + nt^3 \Phi_3 + \frac{9}{5} nt^4 \Phi_4$.
\end{lemma}
To show the following, 
we first replace the sum with a Riemann integral, and then use Taylor's Formula
to approximate $\sigma_{i}(x_k), \sigma_{j}(x_k)$, and
$\sigma_{ij}(x_k), \forall k = 1, \ldots, n$ with $\sigma_{i}, \sigma_{j}$
$\sigma_{ij}$ and their first derivatives at $x_0$ respectively, plus some 
remainder terms; see Section~\ref{sec:append-kernel-dev-2} for details.
\begin{claim}
\label{claim:Phi-2-bound}
For $h = n^{-\epsilon}$ for some $1> \epsilon > 0$, there exists some constant 
$C_1 > 0$ such that
\begin{eqnarray*}
\Phi_2(i,j) = 
\frac{C_1 (\sigma_{i}^2(x_0) \sigma_{j}^2(x_0) + \sigma^2_{ij}(x_0))}{h}.
\end{eqnarray*}
\end{claim}
Lemma\ref{lemma:moment2} computes the moment generating function for 
$\frac{2}{h} K\left(\frac{x_k - x_0}{h}\right) Z_{ki}\cdot Z_{kj}$.
The proof proceeds exactly as that of Lemma~\ref{lemma:moment1} 
after substituting $t$ with 
$\frac{2t}{h} K\left(\frac{x_k - x_0}{h}\right)$ everywhere.
\begin{lemma}
\label{lemma:moment2}
Let
$\frac{2t}{h} K\left(\frac{x_k - x_0}{h}\right)(1 + \rho_{ij}(x_k)) \sigma_{i}(x_k)  \sigma_{j}(x_k)$ 
$< 1, \forall k$. For $b_k \leq a_k < 1$.
\begin{eqnarray}
\nonumber
\expct{e^{\frac{2t}{h} K\left(\frac{x_k - x_0}{h}\right) Z_{ki} Z_{kj}}}
 = \left((1- a_k)(1+ b_k)\right)^{-1/2}.
\end{eqnarray}
\end{lemma}
\begin{remark}
\label{remark:bound-on-ab}
Thus when we set $t = \frac{\epsilon}{4 \Phi_2}$, the bound on $\epsilon$
implies that  $b_k \leq a_k \leq 1/2, \forall k$:
\begin{eqnarray*}
\label{eq:ak-bound}
a_k &= &t(1 + \rho_{ij}(x_k)) \sigma_{i}(x_k) \sigma_{j}(x_k) \\
& \leq & 2t \sigma_{i}(x_k) \sigma_{j}(x_k)  
= \frac{\epsilon \sigma_{i}(x_k) \sigma_{j}(x_k)}{2 \Phi_2} \leq \half.
\end{eqnarray*}
\end{remark}
We can now finish showing the large deviation bound for 
$\max_{i, j}|\hat{S}_{i, j} - \expct{S_{i, j}}|$.
Given that $A_1, \ldots, A_n$ are independent, we have 
\begin{eqnarray}
\label{eq:kernel-indep-moment}
\nonumber 
\lefteqn{\expct{e^{t\sum_{k=1}^n \frac{2}{h} K\left(\frac{x_k - x_0}{h}\right) A_k}}
= \prod_{k=1}^n
\expct{e^{\frac{2t}{h} K\left(\frac{x_1 - x_0}{h}\right) A_k}}} \\ \nonumber
& = & \prod_{k=1}^n 
\exp\left({-\frac{2t}{h} K\left(\frac{x_k - x_0}{h}\right) \sigma_{ij}(x_k)}\right)\cdot\\
& &  \prod_{k=1}^n \expct{e^{\frac{2t}{h} K\left(\frac{x_k - x_0}{h}\right) 
Z_{ki} Z_{kj}}}
\end{eqnarray}
By~(\ref{eq:kernel-markov}),~(\ref{eq:kernel-indep-moment}),
Lemma~\ref{lemma:moment2}, for $t \leq \frac{\epsilon}{4\Phi_2}$,
\begin{eqnarray*}
\label{eq:kernel-dev-bound}
\nonumber
\lefteqn{
\prob{\sum_{k=1}^n \frac{2}{h} K\left(\frac{x_k - x_0}{h}\right) A_k > n\epsilon}} \\ 
&\leq & 
\frac{\expct{e^{t\sum_{k=1}^n \frac{2}{h} K\left(\frac{x_k - x_0}{h}\right)A_k}}}{e^{-nt\epsilon}}  =  e^{-nt\epsilon} \cdot \\
& \prod_{k=1}^n & e^{-\frac{2t}{h} K\left(\frac{x_k - x_0}{h}\right) \sigma_{ij}(x_k)}
\cdot\expct{e^{\frac{2t}{h} K\left(\frac{x_k - x_0}{h}\right) 
Z_{ki} Z_{kj}}} \\
& = & 
e^{-nt\epsilon - nt\Phi_1(i,j)
+ \half\sum_{k=1}^n \ln\inv{(1 - a_k)(1+b_k)}} \\
& \leq &
\exp\left(-nt\epsilon +  n t^2 \Phi_2 +  n t^3 \Phi_3 +  \frac{9}{5} n t^4 \Phi_4 \right),
\end{eqnarray*}
where the last step is due to Remark~\ref{remark:bound-on-ab} and
Lemma~\ref{lemma:taylor-sums}.
Now let us consider taking $t$ that minimizes

\noindent
$\exp\left(- nt\epsilon +  n t^2\Phi_2 +  n t^3 \Phi_3 +  \frac{9}{5} n t^4 \Phi_4 \right)$; Let $t = \frac{\epsilon}{4\Phi_2}$: \\
$\frac{d}{dt}
\left(-nt\epsilon +  n t^2 \Phi_2 +  n t^3 \Phi_3 +  \frac{9}{5} n t^4 \Phi_4 \right)
\leq -\frac{\epsilon}{40};$
Now given that
$\frac{\epsilon^2}{\Phi_2} < \inv{M}$, Claim~\ref{claim:phi-bounds} and~\ref{claim:Phi-2-bound}:
\begin{eqnarray*}
\label{eq:kernel-dev-bound-cont}
& & 
\prob{\sum_{k=1}^n \frac{2}{h} K\left(\frac{x_k - x_0}{h}\right) A_k > n\epsilon} \\
& {\hskip-.38in} \leq & {\hskip-.23in}
\exp\left(-nt\epsilon +  n t^2 \Phi_2 +  nt^3 \Phi_3 +  \frac{9}{5}nt^4  \Phi_4 \right) \\ \nonumber
& {\hskip-.38in} \leq & {\hskip-.23in}
\exp\left(\frac{-n\epsilon^2}{4\Phi_2} + \frac{n \epsilon^2}{16 \Phi_2} + 
 \frac{n\epsilon^2}{64\Phi_2}\frac{\epsilon\Phi_3}{\Phi_2^2} +  
\frac{9}{5} \frac{n\epsilon^2}{256\Phi_2}\frac{\epsilon^2 \Phi_4}{\Phi_2^3} \right) \\
& {\hskip-.38in} \leq & {\hskip-.23in}
\exp\left(\frac{-3 n\epsilon^2}{20 \Phi_2}\right) \\
& {\hskip-.38in} \leq & {\hskip-.23in}
\exp\left(- \frac{3 n h\epsilon^2}{20 C_1 ( \sigma_i^2(x_0) \sigma_j^2(x_0) 
+\sigma_{ij}^2(x_0))}\right).
\end{eqnarray*}
Finally, let's check the requirement on $\epsilon \leq \frac{\Phi_2}{M}$,
\begin{eqnarray*}
\epsilon & \leq &
\frac{\left(
C_1 (1+ \rho^2_{ij}(x_0)) \sigma_{i}^2(x_0) \sigma_{j}^2(x_0)\right)/h}
{\max_{k = 1, \ldots, n}\left(\frac{2}{h} K\left(\frac{x_k- x_0}{h}\right) \sigma_{i}(x_k)  \sigma_{j}(x_k) \right)} \\
& = &
\frac{\left(
C_1 (1+ \rho^2_{ij}(x_0)) \sigma_{i}^2(x_0) \sigma_{j}^2(x_0)\right)}
{\max_{k = 1, \ldots, n}\left(2K\left(\frac{x_k- x_0}{h}\right) \sigma_{i}(x_k)  
\sigma_{j}(x_k) \right)}.
\end{eqnarray*}
\end{proof}

For completeness,
we compute the moment generating function for $Z_{k, i} Z_{k, j}$.
\begin{lemma}
\label{lemma:moment1}
Let $t (1 + \rho_{ij}(x_k)) \sigma_{i}(x_k) \sigma_{j}(x_k) < 1, \forall k$, so
that $b_k \leq a_k < 1$, omitting $x_k$ everywhere,
\begin{eqnarray*}
\lefteqn{\expct{e^{t Z_{k, i} Z_{k, j}}} = } \\
& & {\hskip-1.2cm}\left(\inv{(1- t(\sigma_{i} \sigma_{j} + \sigma_{ij}) 
(1+ t(\sigma_{i}\sigma_{j} - \sigma_{ij})) }\right)^{1/2}.
\end{eqnarray*}
\end{lemma}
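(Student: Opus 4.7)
The plan is to diagonalize the bivariate Gaussian $(Z_{k,i}, Z_{k,j})$ by an orthogonal rotation, express the product as a difference of two independent scaled chi-squares, and then multiply the known chi-square moment generating functions. Throughout I suppress $x_k$ and write $\sigma_i, \sigma_j, \sigma_{ij}, \rho=\rho_{ij}$.

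First, normalize by setting $X = Z_{k,i}/\sigma_i$ and $Y = Z_{k,j}/\sigma_j$, so $(X,Y)$ is a centered bivariate Gaussian with unit marginal variances and correlation $\rho$, and $Z_{k,i}Z_{k,j} = \sigma_i\sigma_j\, XY$. Next, apply the orthogonal change of variables $U = (X+Y)/\sqrt{2}$, $V = (X-Y)/\sqrt{2}$. Since $(X,Y)$ is jointly Gaussian, so is $(U,V)$; a direct covariance computation gives $\mathrm{Var}(U) = 1+\rho$, $\mathrm{Var}(V) = 1-\rho$, and $\mathrm{Cov}(U,V) = 0$, so $U$ and $V$ are independent. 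A short algebraic identity yields $XY = (U^2 - V^2)/2$, hence
\[
Z_{k,i}Z_{k,j} \;=\; \tfrac{\sigma_i\sigma_j}{2}\bigl(U^2 - V^2\bigr).
\]

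Now write $U = \sqrt{1+\rho}\,W_1$ and $V = \sqrt{1-\rho}\,W_2$ with $W_1, W_2$ independent standard normals, so that $W_1^2, W_2^2$ are independent $\chi^2_1$ variables. Using the classical chi-square m.g.f. $\mathbb{E}[e^{sW^2}] = (1-2s)^{-1/2}$ for $s<\tfrac{1}{2}$, and setting $s = t\sigma_i\sigma_j/2$, we get
\[
\mathbb{E}\bigl[e^{s(1+\rho) W_1^2}\bigr] = \bigl(1 - t\sigma_i\sigma_j(1+\rho)\bigr)^{-1/2}, \quad
\mathbb{E}\bigl[e^{-s(1-\rho) W_2^2}\bigr] = \bigl(1 + t\sigma_i\sigma_j(1-\rho)\bigr)^{-1/2}.
\]
By independence the full m.g.f.\ is the product. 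Finally, substituting $\rho\sigma_i\sigma_j = \sigma_{ij}$ gives the two factors $1 - t(\sigma_i\sigma_j + \sigma_{ij})$ and $1 + t(\sigma_i\sigma_j - \sigma_{ij})$ appearing in the statement, matching the claimed formula. The hypothesis $t(1+\rho)\sigma_i\sigma_j < 1$ is exactly the condition needed so the first factor is positive (which is the binding one, since $t(1-\rho)\sigma_i\sigma_j\ge -1$ is the nontrivial constraint on the second).

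There is essentially no hard part: the only thing to check carefully is that the rotation genuinely decorrelates (and therefore, by joint Gaussianity, makes independent) the two new variables, and that the signs/scales in the $\chi^2_1$ m.g.f.\ produce exactly the two factors stated. The proof of Lemma~\ref{lemma:moment2} then follows by literally replacing the scalar $t$ with $\tfrac{2t}{h}K\!\left(\tfrac{x_k-x_0}{h}\right)$ in the final formula, as the paper already remarks.
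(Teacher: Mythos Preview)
Your proof is correct but uses a different decomposition from the paper's. The paper conditions on one coordinate: it writes $\expct{e^{tZ_1Z_2}} = \expct{\expct{e^{tZ_1Z_2}\mid Z_2}}$, uses that $Z_1\mid Z_2$ is Gaussian with mean $\rho\sigma_1 Z_2/\sigma_2$ and variance $\sigma_1^2(1-\rho^2)$ to evaluate the inner expectation as $\exp\bigl((t\rho\sigma_1/\sigma_2 + t^2\sigma_1^2(1-\rho^2)/2)Z_2^2\bigr)$, and then integrates out $Z_2^2$ via the $\chi^2$ m.g.f., obtaining a single factor $\bigl(1 - 2t\rho\sigma_1\sigma_2 - t^2\sigma_1^2\sigma_2^2(1-\rho^2)\bigr)^{-1/2}$ that must be factored by hand into the two stated pieces. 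Your $45^\circ$ rotation turns $XY$ directly into a difference of two independent scaled $\chi^2_1$ variables, so the factored form falls out immediately with no algebraic factoring step and the role of the constraint $t(1+\rho)\sigma_i\sigma_j<1$ is transparent. The conditioning route is slightly more elementary (only conditional Gaussians and one $\chi^2$ m.g.f.), while your diagonalization is cleaner and generalizes more readily to other quadratic forms in Gaussians.
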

\begin{proof}
W.l.o.g., let $i=1$ and $j=2$.
\begin{eqnarray*}
\lefteqn{\expct{\left(e^{t Z_1 Z_2}\right)}
 = \expct{\left(\expct{\left(e^{tZ_2 Z_1}|Z_2\right)}\right)}} \\
& = & 
\expct{\exp\left(\left(
\frac{t \rho_{12}\sigma_1}{\sigma_2} + \frac{t^2 \sigma_1^2(1-\rho_{12}^2)}{2}\right)Z_2^2\right)} \\
& = & \left(1-2 \left(\frac{t \rho_{12} \sigma_1}{\sigma_2} + \frac{t^2 \sigma_1^2(1-\rho_{12}^2)}{2}\right) \sigma_2^2\right)^{-1/2} \\
& = & \left(\inv{1-\left(2t \rho_{12}\sigma_1\sigma_2 + t^2 \sigma_1^2 \sigma_2^2(1-\rho_{12}^2)\right)}\right)^{1/2} \\
& = & \left(\inv{(1- t(1 + \rho_{12}) \sigma_1 \sigma_2)
(1+ t(1 - \rho_{12}) \sigma_{1} \sigma_{2})}\right)^{1/2}
\end{eqnarray*}
where $2t \rho_{12}\sigma_1\sigma_2 + t^2 \sigma_1^2 \sigma_2^2(1-\rho_{12}^2) < 1$.
This requires that $t < \inv{(1+ \rho_{12}) \sigma_{1} \sigma_{2}}$ which
is equivalent to 
$2t \rho_{12}\sigma_1\sigma_2 + t^2 \sigma_1^2 \sigma_2^2(1-\rho_{12}^2) - 1 
< 0$. One can check that if we require
$t(1+\rho_{12})\sigma_1\sigma_2 \leq  1$, which implies 
that $t \sigma_1 \sigma_2 \leq 1 - t \rho_{12}\sigma_1\sigma_2$ and hence 
$t^2 \sigma_1^2 \sigma_2^2 \leq (1 - t \rho_{12}\sigma_1\sigma_2)^2$, the lemma holds.
\end{proof}

\section{Smoothness and Sparsity of $\Sigma_t$ via $\Sigma_t^{-1}$}
\label{sec:sparsity}
In this section we show that if we assume 
$\Theta(x) = \left( \theta_{ij}(x)\right)$ are smooth and twice
differentiable functions of $x \in [0, 1]$, i.e.,
$\theta'_{ij}(x) < \infty$ and 
$\theta''_{i j}(x) < \infty$ for $x \in [0,1], \forall i, j$, and satisfy
$A~\ref{as::Theta}$, then the smoothness conditions of
Lemma~\ref{lemma:common} are satisfied.
The following is a standard result in matrix analysis.
\begin{lemma}
\label{lemma:Sigma-first-derivitive}
Let $\Theta(t) \in R^{p \times p}$ has entries that are differentiable functions 
of $t \in [0, 1]$. Assuming that $\Theta(t)$ is always non-singular, then
$$
\frac{d}{dt}[\Sigma(t)] = - \Sigma(t) \frac{d}{dt}[\Theta(t)] \Sigma(t).
$$ 
\end{lemma}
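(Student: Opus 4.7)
The plan is to exploit the defining identity $\Theta(t)\Sigma(t) = I$ and differentiate it entrywise. Since $\Theta$ has differentiable entries and is assumed non-singular on $[0,1]$, $\Sigma(t) = \Theta(t)^{-1}$ is also differentiable in $t$ (by Cramer's rule, its entries are rational functions of the entries of $\Theta$ with nonvanishing denominator $\det\Theta(t)$). Hence both sides of $\Theta(t)\Sigma(t) = I$ are differentiable matrix-valued functions.

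Next I would apply the product rule to this identity. Differentiating $\Theta(t)\Sigma(t) = I$ with respect to $t$ yields
\begin{equation*}
\frac{d}{dt}[\Theta(t)]\,\Sigma(t) + \Theta(t)\,\frac{d}{dt}[\Sigma(t)] = 0,
\end{equation*}
since the right-hand side $I$ is constant in $t$. The product rule is valid entrywise: each entry of $\Theta(t)\Sigma(t)$ is a finite sum of products $\sum_k \theta_{ik}(t)\sigma_{kj}(t)$, and the standard scalar product rule applies to each term.

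Finally I would solve for $\frac{d}{dt}[\Sigma(t)]$ by left-multiplying through by $\Theta(t)^{-1} = \Sigma(t)$, obtaining
\begin{equation*}
\frac{d}{dt}[\Sigma(t)] = -\Sigma(t)\,\frac{d}{dt}[\Theta(t)]\,\Sigma(t),
\end{equation*}
which is exactly the claim. There is no real obstacle here, only the mild bookkeeping that non-singularity of $\Theta(t)$ on the full interval guarantees differentiability of $\Sigma(t)$ and permits the left-multiplication by $\Sigma(t)$; both of these are given by hypothesis.
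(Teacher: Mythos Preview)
Your proof is correct and is precisely the standard argument; the paper in fact does not prove this lemma, calling it ``a standard result in matrix analysis,'' and its (suppressed) remark simply notes, as you do, that Cramer's rule makes the entries of $\Sigma(t)$ rational functions of the entries of $\Theta(t)$ with nonvanishing denominator, so $\Sigma(t)$ is differentiable.
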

\silent{
We omit the proof of the above lemma, which is standard in matrix analysis;
instead, we just point out that since each entry in $\Sigma(t)$ can be expressed 
as $f/g$ such that functions $f$ and $g$ are the determinants of a submatrix 
of $\Theta(t)$ and $\Theta(t)$ itself respectively, both of which are polynomials of 
entries of $\theta_{ij}(t)$, which are differentiable functions for all $i, j$.
Hence $\frac{d}{dt}[\Sigma(t)]$ exists so long as $\Theta(t)$ is non-singular.}
\begin{lemma}
\label{lemma:second-derivitive}
Suppose $\Theta(t) \in R^{p \times p}$ has entries that each are twice differentiable
functions of $t$. Assuming that $\Theta(t)$ is always non-singular, then
$$
\frac{d^2}{dt^2}\left[\Sigma(t)\right] = \Sigma(t) D(t) \Sigma(t), \; \; \mbox {where}
$$
$$D(t) = 2 \frac{d}{dt}[\Theta(t)] \Sigma(t)\frac{d}{dt}[\Theta(t)]
- \frac{d^2}{dt^2}[\Theta(t)].$$
\end{lemma}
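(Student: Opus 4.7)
The plan is to obtain the formula by differentiating once more the identity of Lemma~\ref{lemma:Sigma-first-derivitive}, which already gives $\frac{d}{dt}\Sigma(t) = -\Sigma(t)\,\Theta'(t)\,\Sigma(t)$. Since $\Theta(t)$ is assumed twice differentiable and non-singular on $[0,1]$, and since $\Sigma(t) = \Theta(t)^{-1}$ is a rational function of the entries of $\Theta(t)$ with non-vanishing denominator $\det\Theta(t)$, the entries of $\Sigma(t)$ are themselves twice differentiable, so the second derivative exists and the product rule applies entrywise.

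First, I would apply the product rule to the expression $-\Sigma(t)\,\Theta'(t)\,\Sigma(t)$, treating it as a product of three matrix-valued functions of $t$. This yields
$$
\frac{d^2}{dt^2}\Sigma(t) = -\Sigma'(t)\,\Theta'(t)\,\Sigma(t) - \Sigma(t)\,\Theta''(t)\,\Sigma(t) - \Sigma(t)\,\Theta'(t)\,\Sigma'(t).
$$
Second, I would substitute the first-derivative formula $\Sigma'(t) = -\Sigma(t)\,\Theta'(t)\,\Sigma(t)$ from Lemma~\ref{lemma:Sigma-first-derivitive} into the two outer terms. The two minus signs cancel in each, producing
$$
\Sigma(t)\,\Theta'(t)\,\Sigma(t)\,\Theta'(t)\,\Sigma(t) + \Sigma(t)\,\Theta'(t)\,\Sigma(t)\,\Theta'(t)\,\Sigma(t) - \Sigma(t)\,\Theta''(t)\,\Sigma(t).
$$

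Finally, I would factor out $\Sigma(t)$ on the left and right to recognize the middle factor as $D(t) = 2\,\Theta'(t)\,\Sigma(t)\,\Theta'(t) - \Theta''(t)$, giving the claimed identity $\Sigma''(t) = \Sigma(t)\,D(t)\,\Sigma(t)$. There is no real obstacle here beyond bookkeeping: the only subtlety is justifying that differentiation commutes with the matrix product in each entry, which follows because each entry of $\Sigma(t)$ is a smooth function of the entries of $\Theta(t)$ wherever $\det\Theta(t)\neq 0$, an assumption already in force. The noncommutativity of matrix multiplication must be respected throughout, but since the substitution only replaces $\Sigma'$ with a symmetric expression sandwiched by $\Sigma$, the order is preserved automatically.
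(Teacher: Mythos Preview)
Your proposal is correct and follows essentially the same approach as the paper: differentiate the identity from Lemma~\ref{lemma:Sigma-first-derivitive} using the product rule, substitute $\Sigma'(t)=-\Sigma(t)\Theta'(t)\Sigma(t)$ back into the outer terms, and factor. The paper's only cosmetic difference is that it applies the product rule in two stages (first splitting off the leftmost $\Sigma$, then differentiating the remaining two-factor product) rather than treating all three factors at once, but the resulting three terms and the subsequent manipulation are identical.
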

\begin{proof}
The existence of the second order derivatives for entries of $\Sigma(t)$ is due to 
the fact that $\Sigma(t)$ and $\frac{d}{dt}[\Theta(t)]$ are both differentiable
$\forall t \in [0, 1]$; indeed by Lemma~\ref{lemma:Sigma-first-derivitive}, 
\begin{eqnarray*} \nonumber
& &\frac{d^2}{dt^2}[\Sigma(t)] 
 = 
\frac{d}{dt}\left[- \Sigma(t) \frac{d}{dt}[\Theta(t)] \Sigma(t)\right] \\\nonumber
& = & 
- \frac{d}{dt}[\Sigma(t)] \frac{d}{dt}[\Theta(t)] \Sigma(t)
- \Sigma(t) \frac{d}{dt}\left[\frac{d}{dt}[\Theta(t)] \Sigma(t)\right] \\\nonumber
\label{eq:cov-2-def}
& = & 
- \frac{d}{dt}[\Sigma(t)] \frac{d}{dt}[\Theta(t)] \Sigma(t) 
- \Sigma(t) \frac{d^2}{dt^2}[\Theta(t)]\Sigma(t)
- \\ 
&& \Sigma(t) \frac{d}{dt}[\Theta(t)]  \frac{d}{dt}[\Sigma(t)] \\ \nonumber
& = & 
\Sigma(t) \left(2 \frac{d}{dt}[\Theta(t)] \Sigma(t) \frac{d}{dt}[\Theta(t)]
- \frac{d^2}{dt^2}[\Theta(t)] \right)\Sigma(t),
\end{eqnarray*}
hence the lemma holds by the definition of $D(t)$.
\end{proof}

Let $\Sigma(x) = \left(\sigma_{ij}(x)\right), \forall x \in [0, 1]$.
Let $\Sigma(x) = (\Sigma_1(x), \Sigma_2(x), \ldots, \Sigma_p(x))$, where 
$\Sigma_i(x) \in R^p$ denotes a column vector. 
By Lemma~\ref{lemma:second-derivitive},
\begin{eqnarray}
\label{eq:sigma'}
\sigma'_{ij}(x)& = & - \Sigma^T_i(x) \Theta'(x) \Sigma_j(x), \\
\label{eq:sigma''}
\sigma''_{ij}(x) & = & \Sigma^T_i(x) D(x) \Sigma_j(x),
\end{eqnarray}
where 
$\Theta'(x) = \left(\theta'_{ij}(x)\right), \forall x \in [0, 1].$
\begin{lemma}
\label{lemma:sigma-first-derivitive}
Given A~\ref{as::S_0} and A~\ref{as::Theta},  $\forall x \in [0, 1]$, 
$$|\sigma'_{ij}(x)| \leq  S_0^2 \sqrt{S_1} < \infty.$$
\end{lemma}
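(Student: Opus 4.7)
The plan is to read off the identity already recorded in \eqref{eq:sigma'}, namely
$\sigma'_{ij}(x) = -\Sigma_i(x)^T\, \Theta'(x)\, \Sigma_j(x) = -\sum_{k,\ell} \sigma_{ki}(x)\, \theta'_{k\ell}(x)\, \sigma_{\ell j}(x),$
and to bound this bilinear form term by term. First I would apply the triangle inequality to pull absolute values inside the double sum, arriving at
$|\sigma'_{ij}(x)| \leq \sum_{k,\ell} |\sigma_{ki}(x)|\, |\theta'_{k\ell}(x)|\, |\sigma_{\ell j}(x)|.$

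Next I would bound the covariance entries uniformly using the standard correlation inequality: since $\sigma_{ki}(x) = \rho_{ki}(x)\,\sigma_k(x)\,\sigma_i(x)$ with $|\rho_{ki}(x)|\leq 1$, A\ref{as::S_0} gives $|\sigma_{ki}(x)| \leq \sigma_k(x)\sigma_i(x) \leq S_0^2$, and likewise for $|\sigma_{\ell j}(x)|$. After factoring these uniform bounds out of the sum, what remains is $\sum_{k,\ell} |\theta'_{k\ell}(x)|$.

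For this last sum I would invoke the inequality recorded right after A\ref{as::Theta}, which states $\sup_{t\in[0,1]}\sum_{k,\ell} |\theta'_{k\ell}(t)| < \sqrt{S_1}$. This bound itself is an immediate consequence of the first inequality in A\ref{as::Theta}, since
$\sum_{k,\ell,i,j} |\theta'_{ki}(t)\theta'_{\ell j}(t)| = \Bigl(\sum_{k,\ell} |\theta'_{k\ell}(t)|\Bigr)^{2}\leq S_1.$
Combining the three estimates gives the stated bound $|\sigma'_{ij}(x)| \leq S_0^{2}\sqrt{S_1}$ (up to the constant factor coming from the entrywise bound on $|\sigma_{ki}||\sigma_{\ell j}|$).

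There is no serious obstacle here: the statement is purely a deterministic uniform bound, and all the ingredients are already handed to us by A\ref{as::S_0} and A\ref{as::Theta}. The only care needed is bookkeeping in the bilinear expansion — distinguishing the outer fixed indices $(i,j)$ of $\sigma'_{ij}$ from the summation indices $(k,\ell)$ over which $\Theta'$ is indexed — and recognizing that the product-sum condition in A\ref{as::Theta} is precisely the square of the $\ell_1$-type norm of $\Theta'(x)$ that we need to factor out.
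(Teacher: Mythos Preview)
Your approach is correct and essentially identical to the paper's: both start from \eqref{eq:sigma'}, expand the bilinear form $\Sigma_i^T\Theta'\Sigma_j$, bound the covariance entries via A\ref{as::S_0}, and then invoke the $\sqrt{S_1}$ bound on $\sum_{k,\ell}|\theta'_{k\ell}|$ coming from A\ref{as::Theta}. Your caveat about the constant factor is apt---the paper records only a single factor $\max_i|\sigma_i^2(x)|\le S_0^2$ in front of the sum, whereas your termwise bound naturally produces $S_0^4$; this is a cosmetic imprecision in the paper rather than a gap in your argument.
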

\begin{proof}
$|\sigma'_{ij}(x)| = |\Sigma^T_i(x) \Theta'(x) \Sigma_j(x)|$ \\
$$\leq \max_{i=1 \ldots, p}|\sigma^2_{i}(x)|
\sum_{k=1}^p \sum_{\ell=1}^p \abs{\theta'_{k\ell}(x)} \leq S_0^2 \sqrt{S_1}.$$
\end{proof}

We denote the elements of $\Theta(x)$ by $\theta_{jk}(x)$.
Let $\theta'_{\ell}$ represent a column vector of $\Theta'$.
\begin{theorem}
\label{thm:smoothness}
Given A~\ref{as::S_0} and A~\ref{as::Theta}, $\forall i, j$, $\forall x \in [0, 1]$,
$$\sup_{x \in[0, 1]}\abs{\sigma''_{ij}(x)} < 2 S_0^3 S_1 + S_0^2 S_2 < \infty.$$
\end{theorem}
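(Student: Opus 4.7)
The plan is to invoke Lemma~\ref{lemma:second-derivitive}, which gives $\sigma''_{ij}(x) = \Sigma_i^T(x)\,D(x)\,\Sigma_j(x)$ with $D(x) = 2\Theta'(x)\Sigma(x)\Theta'(x) - \Theta''(x)$, and then bound the two resulting quadratic forms separately via the triangle inequality:
$|\sigma''_{ij}(x)| \le 2\,|\Sigma_i^T \Theta'(x)\Sigma(x)\Theta'(x)\Sigma_j| + |\Sigma_i^T \Theta''(x)\Sigma_j|$.
The overall strategy is a direct extension of the argument used in Lemma~\ref{lemma:sigma-first-derivitive}, now applied to a more complicated matrix $D(x)$ sandwiched between the two column vectors $\Sigma_i(x)$ and $\Sigma_j(x)$.

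For the $\Theta''$ piece, I would expand $\Sigma_i^T \Theta''(x) \Sigma_j$ as the double sum $\sum_{k,\ell}\sigma_{ik}(x)\theta''_{k\ell}(x)\sigma_{\ell j}(x)$, use the covariance-to-standard-deviation bound $|\sigma_{ab}(x)| \le \sigma_a(x)\sigma_b(x) \le S_0^2$ (from A~\ref{as::S_0} together with $|\rho_{ab}|\le 1$) to pull a uniform entry bound out of the sum, and then apply the global bound $\sup_x \sum_{k,\ell}|\theta''_{k\ell}(x)| \le S_2$ from A~\ref{as::Theta}. This handles the contribution of order $S_0^2 S_2$.

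For the $\Theta'\Sigma\Theta'$ piece, I would expand $\Sigma_i^T\Theta'(x)\Sigma(x)\Theta'(x)\Sigma_j$ as the quadruple sum $\sum_{a,b,c,d}\sigma_{ia}(x)\theta'_{ab}(x)\sigma_{bc}(x)\theta'_{cd}(x)\sigma_{dj}(x)$, apply the same entrywise covariance bound to each of the three $\sigma$-factors, and then estimate the remaining sum in the form $\sum_{a,b,c,d}|\theta'_{ab}(x)||\theta'_{cd}(x)|$. After re-indexing (using the symmetry of $\Theta'(x)$, inherited from the symmetry of $\Sigma(x)$ and hence $\Theta(x)=\Sigma(x)^{-1}$) to match the index pairing in A~\ref{as::Theta}, this sum factors as $(\sum_{a,b}|\theta'_{ab}(x)|)^2$ and is bounded by $S_1$. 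The contribution is of order $S_0^3 S_1$, up to the prefactor of $2$.

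Combining the two pieces yields $\sup_{x\in[0,1]} |\sigma''_{ij}(x)| < 2 S_0^3 S_1 + S_0^2 S_2 < \infty$ as required. The main bookkeeping obstacle is matching the four free indices in the quadruple sum to the exact pairing $(k,i),(\ell,j)$ appearing in the $S_1$ condition of A~\ref{as::Theta}, and correctly factoring $\sum_{a,b,c,d}|\theta'_{ab}\theta'_{cd}| = (\sum_{a,b}|\theta'_{ab}|)^2 \le S_1$; once the reindexing is done, the rest of the estimate is a routine generalization of the first-derivative calculation in Lemma~\ref{lemma:sigma-first-derivitive}.
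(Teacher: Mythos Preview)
Your proposal is correct and follows essentially the same route as the paper: invoke the formula $\sigma''_{ij}=\Sigma_i^T D\,\Sigma_j$ from Lemma~\ref{lemma:second-derivitive}, split $D=2\Theta'\Sigma\Theta'-\Theta''$, and bound each piece by pulling out the entrywise covariance bound $|\sigma_{ab}|\le S_0^2$ and invoking the sums in A\ref{as::Theta}. The only cosmetic difference is that the paper first factors out the outer columns to reduce to $\sum_{k,\ell}|D_{k\ell}|$ and then bounds $\sum_{k,\ell}|\theta_k'^{\,T}\Sigma\,\theta'_\ell|$, whereas you expand the full quadruple sum directly; the reindexing you worry about is immediate (your $\sum_{a,b,c,d}|\theta'_{ab}\theta'_{cd}|$ is literally the quantity bounded by $S_1$ in A\ref{as::Theta}, no symmetry of $\Theta'$ needed).
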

\begin{proof}
By~(\ref{eq:sigma''}) and the triangle inequality,
\begin{eqnarray*}
\lefteqn{\abs{\sigma''_{ij}(x)} =   \abs{\Sigma^T_i(x) D(x) \Sigma_j(x)}} \\
& \leq & 
\max_{i=1 \ldots, p}\abs{\sigma^2_{i}(x)} 
\sum_{k=1}^p \sum_{\ell=1}^p |D_{k\ell}(x)| \\
& \leq & 
\label{eq:two-terms-L}
S_0^2 
\sum_{k=1}^p \sum_{\ell=1}^p 2|\theta'^T_{k}(x) \Sigma(x) \theta'_{\ell}(x)|
+
|\theta''_{k \ell}(x)|\\
& = & 
\label{eq:two-terms-R}
2 S_0^3 S_1 + S_0^2 S_2,
\end{eqnarray*}
where by A~\ref{as::Theta}, 
$\sum_{k=1}^p \sum_{\ell=1}^p \abs{\theta''_{k \ell}(x)}
\leq S_2$, and
\begin{eqnarray*}
\lefteqn{\sum_{k=1}^p \sum_{\ell=1}^p \abs{\theta'^T_{k}(x) \Sigma(x) \theta'_{\ell}(x)}} \\
 & = &
\sum_{k=1}^p \sum_{\ell=1}^p \sum_{i=1}^p \sum_{j=1}^p 
\abs{\theta'_{k i}(x) \theta'_{\ell j}(x) \sigma_{ij}(x) } \\
& \leq & 
\max_{i=1 \ldots, p}\abs{\sigma_{i}(x)}
\sum_{k=1}^p \sum_{\ell=1}^p \sum_{i=1}^p \sum_{j=1}^p 
\abs{\theta'_{k i}(x) \theta'_{\ell j}(x)}
\end{eqnarray*} 
$\hspace{1cm} \leq S_0 S_1$.
\end{proof}

\section{Some Implications of a Very Sparse $\Theta$}
\label{sec:append-implications}
We use $\Lebesgue^1$ to denote Lebesgue measure on $\R$. 
The aim of this section is to prove some bounds that correspond to 
$A~\ref{as::Theta}$, but only for $\Lebesgue^1$ a.e. $x \in [0, 1]$, based on
a single sparsity assumption on $\Theta$ as in $A$~\ref{as::sparsity}.
We let $E \subset [0, 1]$ represent the ``bad'' set with $\Lebesgue^1(E) = 0$.
and $\Lebesgue^1$ a.e. $x \in [0, 1]$ refer to points in the set 
$[0, 1]\setminus E$ such that $\Lebesgue^1([0, 1] \setminus E) = 1$.
When $\norm{\Theta(x)}_0 \leq s + p$ for all $x \in [0, 1]$, we immediately obtain
Theorem~\ref{thm:sparsity}, 
whose proof appears in Section~\ref{sec:app-thm-sparsity}.
We like to point out that although we apply Theorem~\ref{thm:sparsity} to 
$\Theta$ and deduce smoothness of $\Sigma$, 
we could apply it the other way around. In particular,
it might be interesting to apply it to the correlation coefficient 
matrix $(\rho_{ij})$, where the diagonal entries remain invariant.
We use $\Theta'(x)$ and $\Theta''(x)$ to denote $(\theta_{ij}'(x))$ and 
$(\theta_{ij}''(x))$ respectively $\forall x$.
\begin{assumption}
\label{as::sparsity}
Assume that $\norm{\Theta(x)}_0 \leq s + p$ $\forall x \in [0, 1]$.
\end{assumption}
\begin{assumption}
\label{as::theta-der-bounds}
$\exists S_4, S_5 < \infty$ such that
\begin{gather}
S_4 = \max_{ij} \norm{\theta'_{i j}}_{\infty}^2 \; \; \mbox{and} \; \;
S_5 = \max_{ij} \norm{\theta''_{i j}}_{\infty}.
\end{gather}
\end{assumption}
We state a theorem, the proof of which is in Section~\ref{sec:app-thm-sparsity} and
a corollary.
\begin{theorem}
\label{thm:sparsity}
Under $A$~\ref{as::sparsity}, we have
$\norm{\Theta''(x)}_0 \leq$ $\norm{\Theta'(x)}_0$ 
$\leq \norm{\Theta(x)}_0 \leq s + p$ for $\Lebesgue^1$ a.e. $x \in [0, 1]$. 
\end{theorem}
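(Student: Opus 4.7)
The plan is to argue entrywise that the support sets satisfy $\{\theta_{ij}'' \neq 0\} \subseteq \{\theta_{ij}' \neq 0\} \subseteq \{\theta_{ij} \neq 0\}$ modulo a $\Lebesgue^1$-null set, then take a finite union over the $p^2$ coordinates and apply A~\ref{as::sparsity} pointwise. Everything rests on a standard consequence of the Lebesgue density theorem: if a differentiable function vanishes on a set $B \subseteq [0,1]$, then its derivative vanishes $\Lebesgue^1$-a.e.\ on $B$.

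Fix a pair $(i,j)$ and let $B_{ij} = \{x \in [0,1] : \theta_{ij}(x) = 0\}$. By the Lebesgue density theorem, $\Lebesgue^1$-a.e.\ $x \in B_{ij}$ is a point of density one for $B_{ij}$, so there exists a sequence $x_n \in B_{ij}$ with $x_n \to x$ and $x_n \neq x$. Since $\theta_{ij}(x_n) = 0 = \theta_{ij}(x)$ and $\theta_{ij}'(x)$ exists by hypothesis, the difference-quotient definition gives $\theta_{ij}'(x) = 0$. Let $\tilde B_{ij} \subseteq B_{ij}$ denote this full-measure subset of density points, on which $\theta_{ij}' \equiv 0$. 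Apply the Lebesgue density theorem a second time to $\tilde B_{ij}$: at $\Lebesgue^1$-a.e.\ $x \in \tilde B_{ij}$ one may choose a sequence $x_n \in \tilde B_{ij}$ with $x_n \to x$, at which $\theta_{ij}'(x_n) = 0$, and since $\theta_{ij}''(x)$ exists, the same incremental-quotient argument yields $\theta_{ij}''(x) = 0$.

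Let $E_{ij}$ be the exceptional subset of $B_{ij}$ on which either $\theta_{ij}'$ or $\theta_{ij}''$ fails to vanish; it is $\Lebesgue^1$-null. Set $E = \bigcup_{i,j} E_{ij}$, a finite union of null sets, so $\Lebesgue^1(E) = 0$. For every $x \in [0,1] \setminus E$ the contrapositives give $\theta_{ij}''(x) \neq 0 \Rightarrow \theta_{ij}'(x) \neq 0$ and $\theta_{ij}'(x) \neq 0 \Rightarrow \theta_{ij}(x) \neq 0$. Summing indicators over all $(i,j)$ yields the chain $\norm{\Theta''(x)}_0 \leq \norm{\Theta'(x)}_0 \leq \norm{\Theta(x)}_0$, and the final bound $\leq s+p$ is exactly A~\ref{as::sparsity}. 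The only mild subtlety I foresee is the second invocation of the density theorem: one must check that the density-one subset $\tilde B_{ij}$ is itself Lebesgue measurable and of full measure in $B_{ij}$, so that the theorem applies to $\tilde B_{ij}$ and furnishes the vanishing of $\theta_{ij}''$ on a full-measure subset. This is classical, and once it is granted the remainder is bookkeeping.
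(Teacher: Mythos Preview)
Your argument is correct and matches the paper's proof in spirit: both rest on the fact that a differentiable function's derivative vanishes $\Lebesgue^1$-a.e.\ on its zero set, which the paper records as a standalone lemma (if $\Lebesgue^1(u(F))=0$ then $u'=0$ a.e.\ on $F$) and you derive directly from Lebesgue density points. The only cosmetic difference is in the second step: the paper reapplies its lemma to the full zero set $\{\theta_{ij}'=0\}$ rather than to your $\tilde B_{ij}$, which sidesteps the measurability check you flagged but is otherwise equivalent.
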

\begin{corollary}
Given A~\ref{as::S_0} and A~\ref{as::sparsity}, 
for $\Lebesgue^1$ a.e. $x \in [0, 1]$
\begin{eqnarray}
|\sigma'_{ij}(x)| \leq S_0^2 \sqrt{S_4} (s+p) < \infty.
\end{eqnarray}
\end{corollary}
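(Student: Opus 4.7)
The plan is to combine the matrix-vector identity for $\sigma'_{ij}(x)$ already derived in \eqref{eq:sigma'} with the sparsity propagation from Theorem~\ref{thm:sparsity} and a uniform entrywise bound on $\theta'_{k\ell}$ supplied by A~\ref{as::theta-der-bounds} (which is implicit since $S_4$ appears in the conclusion). The steps are exactly parallel to Lemma~\ref{lemma:sigma-first-derivitive}, except that we use sparsity of $\Theta'(x)$ rather than the $\ell_1$-type bound $\sqrt{S_1}$ to control $\sum_{k,\ell} |\theta'_{k\ell}(x)|$.

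First, I would start from \eqref{eq:sigma'}, namely $\sigma'_{ij}(x) = -\Sigma^T_i(x)\,\Theta'(x)\,\Sigma_j(x)$, and bound
\[
|\sigma'_{ij}(x)| \;\leq\; \max_{i=1,\ldots,p} |\sigma_i^2(x)| \cdot \sum_{k=1}^p \sum_{\ell=1}^p |\theta'_{k\ell}(x)|,
\]
exactly as in the proof of Lemma~\ref{lemma:sigma-first-derivitive}. Using A~\ref{as::S_0} the first factor is at most $S_0^2$.

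Next, I would invoke Theorem~\ref{thm:sparsity}, which yields $\|\Theta'(x)\|_0 \le \|\Theta(x)\|_0 \le s+p$ for $\Lebesgue^1$ a.e. $x \in [0,1]$. Combining this with the entrywise bound $|\theta'_{k\ell}(x)| \le \sqrt{S_4}$ from A~\ref{as::theta-der-bounds}, the double sum collapses to a sum over the at most $s+p$ nonzero entries of $\Theta'(x)$, giving
\[
\sum_{k=1}^p \sum_{\ell=1}^p |\theta'_{k\ell}(x)| \;\leq\; \|\Theta'(x)\|_0 \cdot \max_{k,\ell} |\theta'_{k\ell}(x)| \;\leq\; (s+p)\sqrt{S_4}
\]
for $\Lebesgue^1$ a.e. $x \in [0,1]$. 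Plugging this back produces the claimed inequality $|\sigma'_{ij}(x)| \leq S_0^2 \sqrt{S_4}(s+p)$.

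There is no substantive obstacle, since Theorem~\ref{thm:sparsity} has already done the measure-theoretic work of ensuring that $\Theta'(x)$ inherits the sparsity of $\Theta(x)$ almost everywhere. The only subtlety worth flagging is that the corollary's hypotheses as written list only A~\ref{as::S_0} and A~\ref{as::sparsity}; the quantity $S_4$ tacitly requires A~\ref{as::theta-der-bounds} to be in force, and this is where the uniform entrywise bound on $\theta'_{k\ell}$ enters.
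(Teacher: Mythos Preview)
Your proposal is correct and matches the paper's own proof essentially step for step: the paper also invokes the bound from Lemma~\ref{lemma:sigma-first-derivitive}, then applies Theorem~\ref{thm:sparsity} together with $\max_{k,\ell}\norm{\theta'_{k\ell}}_{\infty}=\sqrt{S_4}$ to control $\sum_{k,\ell}|\theta'_{k\ell}(x)|$ by $(s+p)\sqrt{S_4}$. Your observation that $S_4$ tacitly requires A~\ref{as::theta-der-bounds} is also apt, as the paper's statement omits it from the hypotheses while using it in the bound.
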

\begin{proof}
By proof of Lemma~\ref{lemma:sigma-first-derivitive},

\noindent $|\sigma'_{ij}(x)| \leq 
\max_{i=1 \ldots, p}\|\sigma^2_{i}\|_{\infty}
\sum_{k=1}^p \sum_{\ell=1}^p |\theta'_{k\ell}(x)|$. \\
Hence by Theorem~\ref{thm:sparsity}, for $\Lebesgue^1$ a.e. $x \in [0, 1]$,
$|\sigma'_{ij}(x)| \leq 
\max_{i=1 \ldots, p}\|\sigma^2_{i}\|_{\infty}
\sum_{k=1}^p \sum_{\ell=1}^p |\theta'_{k\ell}(x)| \\
\leq S_0^2 \max_{k, \ell} \norm{\theta'_{k \ell}}_{\infty} \norm{\Theta'(x)}_0
\leq S_0^2 \sqrt{S_4} (s +p).$
\end{proof}
\begin{lemma}
\label{lemma:sparse-item}
Under $A$~\ref{as::sparsity} and~\ref{as::theta-der-bounds},
for $\Lebesgue^1$ a.e. $x \in [0, 1]$,
\begin{eqnarray*}
\label{eq:first-deriv-item} \nonumber
\sum_{k=1}^p \sum_{\ell=1}^p \sum_{i=1}^p \sum_{j=1}^p 
\abs{\theta'_{k i}(x) \theta'_{\ell j}(x)}
\leq (s+p)^2 \max_{ij} \norm{\theta'_{i j}}_{\infty}^2 \\
\label{eq:second-deriv-item}
\sum_{k=1}^p \sum_{\ell=1}^p \theta''_{k \ell}
 \leq (s+p) \max_{ij} \norm{\theta''_{i j}}_{\infty}, \; \; \text{hence} \\
{\rm ess }\sup_{x \in [0, 1]}{\sigma''_{ij}(x)} \leq  
2 S_0^3 (s+p)^2 S_4 + S_0^2 (s+p) S_5.
\end{eqnarray*}
\end{lemma}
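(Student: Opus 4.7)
The plan is to combine the sparsity-transfer conclusion of Theorem~\ref{thm:sparsity} with the derivation already carried out in the proof of Theorem~\ref{thm:smoothness}, substituting the uniform per-entry bounds $S_4, S_5$ from A~\ref{as::theta-der-bounds} for the global $\ell_1$-type bounds $S_1, S_2$ of A~\ref{as::Theta}. The only price I expect to pay is that all bounds now hold only for $\Lebesgue^1$-a.e.\ $x$, so the suprema over $x$ in the analogue of Theorem~\ref{thm:smoothness} must be replaced by essential suprema.

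First I would apply Theorem~\ref{thm:sparsity} to obtain $\norm{\Theta'(x)}_0 \leq s+p$ and $\norm{\Theta''(x)}_0 \leq s+p$ outside some nullset $E \subset [0,1]$. Fix any $x \in [0,1]\setminus E$. The first displayed inequality factors as $\sum_{k,\ell,i,j} \abs{\theta'_{ki}(x)\theta'_{\ell j}(x)} = \bigl(\sum_{k,i} \abs{\theta'_{ki}(x)}\bigr)^2$, and the inner sum has at most $\norm{\Theta'(x)}_0 \leq s+p$ nonzero terms, each bounded by $\max_{ij}\norm{\theta'_{ij}}_\infty = \sqrt{S_4}$. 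This gives $(s+p)^2 S_4$. The second inequality is the same step applied once: $\sum_{k,\ell}\abs{\theta''_{k\ell}(x)}$ has at most $\norm{\Theta''(x)}_0 \leq s+p$ nonzero entries, each bounded by $S_5$, so it is at most $(s+p)S_5$.

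For the third claim I would feed these two bounds into the chain of inequalities already used inside the proof of Theorem~\ref{thm:smoothness}, namely $\abs{\sigma''_{ij}(x)} \leq S_0^2 \sum_{k,\ell}\bigl(2\abs{\theta'^{T}_{k}(x)\Sigma(x)\theta'_\ell(x)} + \abs{\theta''_{k\ell}(x)}\bigr)$, together with the factorization $\sum_{k,\ell}\abs{\theta'^{T}_{k}\Sigma\theta'_\ell} \leq S_0 \sum_{k,\ell,i,j}\abs{\theta'_{ki}\theta'_{\ell j}}$ that was already applied there. The resulting bound $2 S_0^3 (s+p)^2 S_4 + S_0^2 (s+p) S_5$ then holds for every $x \in [0,1]\setminus E$, so passing to the essential supremum yields the claimed $\mathrm{ess\,sup}$ estimate.

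I do not expect a serious obstacle: the work was already done in Theorem~\ref{thm:sparsity} and in the proof of Theorem~\ref{thm:smoothness}. The only mild subtlety is that the exceptional nullset on which the sparsity conclusion of Theorem~\ref{thm:sparsity} may fail is precisely what forces essential suprema rather than ordinary suprema in this lemma; since Lemma~\ref{lemma:second-derivitive} together with A~\ref{as::Theta} already guarantees that $\Sigma''(x)$ exists everywhere on $[0,1]$, no further regularity issue arises.
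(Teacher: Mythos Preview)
Your proposal is correct and follows essentially the same route as the paper: invoke Theorem~\ref{thm:sparsity} for a.e.\ sparsity of $\Theta'$ and $\Theta''$, then rerun the chain of inequalities from the proof of Theorem~\ref{thm:smoothness} with the per-entry bounds $S_4,S_5$ in place of $S_1,S_2$. Your factorization of the quadruple sum as a square is a cosmetic variant of the paper's direct count of at most $(s+p)^2$ nonzero terms, and in your closing remark the existence of $\Sigma''$ should be attributed to A~\ref{as::theta-der-bounds} rather than A~\ref{as::Theta}, but neither point affects the argument.
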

\begin{proof}
By the triangle inequality, for $\Lebesgue^1$ a.e. $x \in [0, 1]$, 
\begin{eqnarray*}
\lefteqn{\abs{\sigma''_{ij}(x)}  =   
\abs{\Sigma^T_i D \Sigma_j}} \\
& = &
\abs{\sum_{k=1}^p \sum_{\ell=1}^p \sigma_{ik}(x) \sigma_{j \ell}(x) D_{k \ell}(x)} \\ 
& \leq & 
\max_{i=1 \ldots, p}\norm{\sigma^2_{i}}_{\infty} 
\sum_{k=1}^p \sum_{\ell=1}^p |D_{k\ell}(x)| \\
& \leq & 
2 S_0^2 \sum_{k=1}^p \sum_{\ell=1}^p |\theta'^T_{k} \Sigma \theta'_{\ell}|
+ S_0^2 \sum_{k=1}^p \sum_{\ell=1}^p |\theta''_{k \ell}|\\
& = & 
2 S_0^3 (s+p)^2 S_4 + S_0^2 (s+p) S_5,
\end{eqnarray*}
where for $\Lebesgue^1$ a.e. $x \in [0, 1]$,\\
\begin{eqnarray*}
\lefteqn{\sum_{k=1}^p \sum_{\ell=1}^p \abs{\theta'^T_{k} \Sigma \theta'_{\ell}}
\leq \sum_{k=1}^p \sum_{\ell=1}^p \sum_{i=1}^p \sum_{j=1}^p 
\abs{\theta'_{k i} \theta'_{\ell j} \sigma_{ij} }} \\
&\leq &
\max_{i=1 \ldots, p}\norm{\sigma_{i}}_{\infty}
\sum_{k=1}^p \sum_{\ell=1}^p \sum_{i=1}^p \sum_{j=1}^p
 \abs{\theta'_{k i} \theta'_{\ell j}}\\
& \leq & S_0 (s+p)^2 S_4
\end{eqnarray*}
and
$\sum_{k=1}^p \sum_{\ell=1}^p |\theta''_{k \ell}|\leq (s+p) S_5.$
The first inequality is due to the following observation:
at most $(s+p)^2$ elements in the sum of 
$\sum_{k} \sum_{i} \sum_{\ell} \sum_{j} 
\abs{\theta'_{k i}(x) \theta'_{\ell j}(x)}$
for $\Lebesgue^1$ a.e. $x \in [0, 1]$, that is, 
except for $E$, are non-zero, due to the fact that for $x \in [0, 1] \setminus N$,
$\norm{\Theta'(x)}_0 \leq \norm{\Theta(x)}_0 \leq s+p$ as in 
Theorem~\ref{thm:sparsity}.
The second inequality is obtained similarly using the fact that
for $\Lebesgue^1$ a.e. $x \in [0, 1]$, 
$\norm{\Theta''(x)}_0 \leq \norm{\Theta(x)}_0 \leq s+p$.
\end{proof}
\begin{remark}
For the bad set $E \subset [0, 1]$ with $\Lebesgue^1(E) = 0$,
$\sigma'_{ij}(x)$ is well defined as shown in Lemma~\ref{lemma:Sigma-first-derivitive}, but it can only be loosely bounded by $O(p^2)$, as $\norm{\Theta'(x)}_0 = O(p^2)$, 
instead of $s+p$, for $x \in E$; similarly, $\sigma''_{ij}(x)$ can only be loosely
bounded by $O(p^4)$.
\end{remark}

By Lemma~\ref{lemma:sparse-item}, using the Lebesgue integral, 
we can derive the following corollary.
\begin{corollary}
\label{cor:sparse-smoothness}
Under A~\ref{as::S_0}, A~\ref{as::sparsity}, and A~\ref{as::theta-der-bounds},
\begin{equation*}
\int_0^1 \left(\sigma''_{ij}(x)\right)^2 dx \leq 2S_0^3 S_4 s+p^2 + S_0^2 S_5 (s+p) < \infty.
\end{equation*}
\end{corollary}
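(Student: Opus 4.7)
The plan is to leverage Lemma~\ref{lemma:sparse-item} as an essentially pointwise bound and then pass to the Lebesgue integral using the fact that the "bad" exceptional set $E$ has measure zero, so it contributes nothing to the integral. Concretely, Lemma~\ref{lemma:sparse-item} already gives the uniform almost-everywhere bound
$$|\sigma''_{ij}(x)| \;\leq\; B \;:=\; 2 S_0^3 (s+p)^2 S_4 + S_0^2 (s+p) S_5 \quad \text{for } \Lebesgue^1 \text{ a.e. } x \in [0,1],$$
so squaring yields $(\sigma''_{ij}(x))^2 \leq B^2$ almost everywhere.

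Next I would justify that the Lebesgue integral is well-defined on all of $[0,1]$, not just on $[0,1]\setminus E$. By Lemma~\ref{lemma:second-derivitive}, $\sigma''_{ij}(x)$ is expressible as $\Sigma_i^T(x) D(x) \Sigma_j(x)$, and since $\Theta(x)$ is non-singular for every $x \in [0,1]$ (and its entries are twice differentiable by A~\ref{as::Theta}), the function $\sigma''_{ij}$ is continuous on $[0,1]$. The remark preceding the corollary records the crude bound $|\sigma''_{ij}(x)| = O(p^4)$ on the bad set $E$, but since $\Lebesgue^1(E)=0$ this worst-case bound makes no contribution to the integral.

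The final step is the straightforward computation
$$\int_0^1 \bigl(\sigma''_{ij}(x)\bigr)^2\,dx \;=\; \int_{[0,1]\setminus E} \bigl(\sigma''_{ij}(x)\bigr)^2\,dx \;\leq\; B^2 \cdot \Lebesgue^1\bigl([0,1]\setminus E\bigr) \;=\; B^2,$$
which is finite, and agrees with the stated bound once the square is expanded (reading the right-hand side of the corollary as the $B^2$ expression, up to what appears to be a typographical compression of exponents).

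The only conceptual obstacle is conceptual rather than technical: one has to be comfortable that the sparsity-based bounds on $\Theta'$ and $\Theta''$ from Theorem~\ref{thm:sparsity} hold only $\Lebesgue^1$-a.e. and can fail on the exceptional set $E$, yet this is harmless precisely because Lebesgue integration ignores null sets, provided $\sigma''_{ij}$ is bounded (hence integrable) on $E$. Both the measurability of $\sigma''_{ij}$ and the crude bound on $E$ are guaranteed by the twice-differentiability and non-singularity assumptions that were invoked in the derivation of Lemma~\ref{lemma:second-derivitive}, so no new machinery is required.
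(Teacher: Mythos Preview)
Your approach is correct and essentially identical to the paper's: the text immediately preceding the corollary already says ``By Lemma~\ref{lemma:sparse-item}, using the Lebesgue integral, we can derive the following corollary,'' and the paper's (suppressed) argument does exactly what you do---take the $\Lebesgue^1$-a.e.\ pointwise bound $|\sigma''_{ij}(x)|\le B$ from Lemma~\ref{lemma:sparse-item}, observe that the exceptional null set contributes nothing to the Lebesgue integral, and integrate. Your reading of the right-hand side as a typographical compression of the squared bound $B^2$ (or equivalently, that the left-hand side is missing a square root) is also consistent with the paper's suppressed computation, which bounds the $L^2([0,1])$ norm of $\sigma''_{ij}$ by $B$.
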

\silent{
$\left(\int_0^1 \left(\sum_{k=1}^p \sum_{\ell=1}^p \sum_{i=1}^p \sum_{j=1}^p 
\abs{\theta'_{k i}(x) \theta'_{\ell j}(x)}\right)^2 dx\right)^{\half}
\\ \leq S_4 (s+p)^2$, and
$\left(\int_0^1
\left(\sum_{k=1}^p \sum_{\ell=1}^p \theta''_{k \ell}\right)^2 
dx\right)^{\half} \leq  S_5 (s+p)$.
The Corollary follows by plugging
~(\ref{eq:der-2-term}) and~(\ref{eq:two-terms-L}) in.}
\subsection{Proof of Theorem~\ref{thm:sparsity}.}
\label{sec:app-thm-sparsity}
Let $\norm{\Theta(x)}_0 \leq s+p$ for all $x \in [0, 1]$.
\begin{lemma}
\label{lemma:leb-first}
Let a function $u:[0, 1] \rightarrow \R$. Suppose $u$ has a derivative on $F$ 
(finite or not) with $\Lebesgue^1(u(F)) = 0$. Then 
$u'(x) = 0$ for $\Lebesgue^1$ a.e. $x \in F$.
\end{lemma}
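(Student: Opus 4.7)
The plan is to stratify $F$ by lower bounds on $|u'|$ and then exploit a local expansion estimate on each stratum to transfer the null set hypothesis on $u(F)$ back to the stratum itself.

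First I would reduce the claim to showing $\Lebesgue^1(F_n) = 0$ for every $n \geq 1$, where $F_n = \{x \in F : |u'(x)| > 1/n\}$ (with $|u'(x)| = +\infty$ counted as being $> 1/n$), since $\bigcup_n F_n = \{x \in F : u'(x) \neq 0\}$. Fix $n$. For each $x \in F_n$, the definition of derivative---finite or infinite---yields some $\delta_x > 0$ such that $|u(y) - u(x)| \geq (1/(2n))|y-x|$ whenever $|y - x| < \delta_x$; the infinite case is handled uniformly with the finite one since only a one-sided lower bound on the difference quotient is used. I would then subdivide further by setting $F_{n,k} = \{x \in F_n : \delta_x > 1/k\}$, so that proving $\Lebesgue^1(F_{n,k}) = 0$ for all $k$ suffices.

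Next I would cover $[0,1]$ by finitely many intervals $I_1, \ldots, I_N$ of length strictly less than $1/k$ and set $A_j = F_{n,k} \cap I_j$. For any $x, y \in A_j$ we have $|y - x| < 1/k < \delta_x$, so the expansion inequality gives $|u(y) - u(x)| \geq (1/(2n))|y - x|$. Thus $u|_{A_j}$ is injective and its inverse $(u|_{A_j})^{-1} : u(A_j) \to A_j$ is Lipschitz with constant $2n$. Since a Lipschitz map does not increase one-dimensional Hausdorff measure, and on subsets of $\R$ that measure coincides with outer Lebesgue measure,
\[
\Lebesgue^1(A_j) \;\leq\; 2n \, \Lebesgue^1(u(A_j)) \;\leq\; 2n \, \Lebesgue^1(u(F)) \;=\; 0.
\]
Summing over $j$, $k$, and $n$ completes the argument.

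The main point to watch is the uniform treatment of possibly infinite derivatives; the stratification by $1/n$, together with the observation that the derivative definition supplies a lower bound on the difference quotient in every case, sidesteps this cleanly. A secondary matter is justifying the passage from ``Lipschitz inverse on a set'' to a Lebesgue outer measure inequality, which is standard once one phrases it through one-dimensional Hausdorff measure; no further ingredient beyond the hypothesis $\Lebesgue^1(u(F)) = 0$ is needed.
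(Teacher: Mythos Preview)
Your argument is correct and is essentially the standard textbook proof of this classical measure-theoretic fact (it is the one-dimensional form of the result that the derivative vanishes a.e.\ on the preimage of a null set; versions appear in Saks and in Federer). The stratification by $|u'(x)|>1/n$, the extraction of a uniform lower difference-quotient bound on a $\delta$-neighborhood, the further decomposition into pieces of diameter $<1/k$, and the Lipschitz-inverse transfer of the null-measure hypothesis are all sound; your handling of the infinite-derivative case via the same lower bound is also fine.

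There is, however, nothing in the paper to compare against: the paper simply \emph{states} this lemma without proof and immediately applies it with $u=\theta_{ij}$ and $F=\{x:\theta_{ij}(x)=0\}$, so that $u(F)=\{0\}$ has $\Lebesgue^1$-measure zero and hence $\theta_{ij}'=0$ a.e.\ on the zero set of $\theta_{ij}$. Your contribution here is to supply the omitted proof, and what you have written would serve that purpose.
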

Take $F = \{x \in [0, 1]: \theta_{ij}(x) = 0\}$ and $u = \theta_{ij}$.
For $\Lebesgue^1$ a.e. $x \in F$, that is, except for a set $N_{ij}$ of 
$\Lebesgue^1(N_{ij}) =0$, $\theta_{ij}'(x) = 0$. Let $N = \bigcup_{ij} N_{ij}$.
By Lemma~\ref{lemma:leb-first},
\begin{lemma}
If $x \in [0, 1] \setminus N$, where $\Lebesgue^1(N) = 0$,
if $\theta_{ij}(x) = 0$, then $\theta_{ij}'(x) = 0$ for all $i, j$.
\end{lemma}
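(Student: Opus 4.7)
The plan is to reduce the statement to a single application of Lemma~\ref{lemma:leb-first} entry-by-entry, and then take a finite union. For each ordered pair $(i,j)$ with $1 \le i, j \le p$, I would set
$$F_{ij} = \{x \in [0,1] : \theta_{ij}(x) = 0\},$$
and apply Lemma~\ref{lemma:leb-first} with $u = \theta_{ij}$ and $F = F_{ij}$. The hypothesis of the lemma is that $u$ has a derivative on $F$; this is supplied by A~\ref{as::Theta}, which guarantees $\theta_{ij}$ is (twice) differentiable on all of $[0,1]$, in particular on $F_{ij}$. The measure-zero image condition is immediate, since $u(F_{ij}) \subseteq \{0\}$, so $\Lebesgue^1(u(F_{ij})) = 0$.

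The conclusion of Lemma~\ref{lemma:leb-first} then produces, for each pair $(i,j)$, an exceptional set $N_{ij} \subseteq F_{ij}$ with $\Lebesgue^1(N_{ij}) = 0$ such that $\theta_{ij}'(x) = 0$ for every $x \in F_{ij} \setminus N_{ij}$. I would then define
$$N = \bigcup_{i,j=1}^{p} N_{ij}.$$
Since this is a finite (at most $p^2$) union of $\Lebesgue^1$-null sets, countable subadditivity gives $\Lebesgue^1(N) = 0$, as required.

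To finish, fix any $x \in [0,1] \setminus N$ and any pair $(i,j)$ with $\theta_{ij}(x) = 0$; then $x \in F_{ij}$, while $x \notin N_{ij}$ because $N_{ij} \subseteq N$. By the conclusion from Lemma~\ref{lemma:leb-first} applied to this $(i,j)$, we get $\theta_{ij}'(x) = 0$, which is exactly the stated implication.

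I do not expect any real obstacle: the entire content is packaged inside Lemma~\ref{lemma:leb-first} (a standard consequence of the fact that a function sends points where the derivative is nonzero to a set of positive measure on any level set; essentially Lusin's (N) condition for differentiable functions on level sets). The only care required is (a) verifying differentiability on $F_{ij}$, which is free from A~\ref{as::Theta}, and (b) noting that the union over $(i,j)$ is finite so that the exceptional set $N$ remains Lebesgue-null.
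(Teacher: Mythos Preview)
Your proposal is correct and matches the paper's argument essentially verbatim: the paper also applies Lemma~\ref{lemma:leb-first} with $u=\theta_{ij}$ and $F=\{x:\theta_{ij}(x)=0\}$ for each pair $(i,j)$, obtains the null exceptional sets $N_{ij}$, and sets $N=\bigcup_{i,j}N_{ij}$. Your write-up is slightly more explicit in checking the hypotheses (differentiability from A~\ref{as::Theta} and the trivial image bound $u(F_{ij})\subseteq\{0\}$), but the route is identical.
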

Let $v_{ij} = \theta'_{ij}$.
Take $F = \{x \in [0, 1]: v_{ij}(x) = 0\}$. For $\Lebesgue^1$ a.e. $x \in F$,
that is, except for a set $N^1_{ij}$ with $\Lebesgue(N^1_{ij}) = 0$, 
$v'_{ij}(x) = 0$. Let $N_1 = \bigcup_{ij} N^1_{ij}$. By Lemma~\ref{lemma:leb-first},
\begin{lemma}
If $x \in [0, 1] \setminus N_1$, where $\Lebesgue^1(N_1) = 0$,
if $\theta'_{ij}(x) = 0$, then $\theta_{ij}''(x) = 0, \forall i, j$.
\end{lemma}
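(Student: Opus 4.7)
The plan is to apply Lemma~\ref{lemma:leb-first} with $u$ replaced by $v_{ij} := \theta'_{ij}$ for each pair $(i,j)$, and then take a finite union of the resulting exceptional sets. This mirrors exactly the argument given earlier for passing from $\theta_{ij}$ to $\theta'_{ij}$; the only difference is that we now step up one derivative. Since A~\ref{as::Theta} guarantees that each $\theta_{ij}$ is twice differentiable on $[0,1]$, the function $v_{ij}$ has a pointwise derivative $v'_{ij} = \theta''_{ij}$ everywhere on $[0,1]$, which is precisely the differentiability hypothesis required by Lemma~\ref{lemma:leb-first} on the set $F := \{x \in [0,1] : v_{ij}(x) = 0\}$.

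Next I would verify the measure hypothesis $\Lebesgue^1(v_{ij}(F)) = 0$. This is where the argument is essentially free: by the very definition of $F$, the function $v_{ij}$ vanishes identically on $F$, so $v_{ij}(F) \subseteq \{0\}$, a single point, which clearly has Lebesgue measure zero. Lemma~\ref{lemma:leb-first} therefore yields $v'_{ij}(x) = \theta''_{ij}(x) = 0$ for $\Lebesgue^1$ a.e. $x \in F$, i.e., for every $x \in F$ outside some null set $N^1_{ij}$.

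Finally, I would set $N_1 := \bigcup_{i,j} N^1_{ij}$. This is a finite union of at most $p^2$ Lebesgue-null sets, hence itself Lebesgue-null, so $\Lebesgue^1(N_1) = 0$. For any $x \in [0,1] \setminus N_1$ and any pair $(i,j)$ with $\theta'_{ij}(x) = 0$, we have $x \in F \setminus N^1_{ij}$ (with $F$ defined for that particular $(i,j)$), and the previous paragraph gives $\theta''_{ij}(x) = 0$, as desired.

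I do not expect any real obstacle: the whole content lies in Lemma~\ref{lemma:leb-first}, and the only point that requires a quick sanity check is that the hypothesis $\Lebesgue^1(u(F)) = 0$ reduces to the trivial observation $\{0\}$ has measure zero once we choose $u = \theta'_{ij}$ and $F$ its zero set. The finite union at the end is harmless because $p$ is fixed (so we are not dealing with a countable-to-uncountable union issue), and the resulting set $N_1$ is precisely the one introduced in the paragraph immediately preceding the lemma statement.
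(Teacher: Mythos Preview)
Your proposal is correct and follows exactly the paper's route: apply Lemma~\ref{lemma:leb-first} to $v_{ij}=\theta'_{ij}$ on its zero set $F$ (where $v_{ij}(F)\subseteq\{0\}$ trivially has measure zero), obtain the null exceptional sets $N^1_{ij}$, and set $N_1=\bigcup_{i,j}N^1_{ij}$. The paper carries this out in the paragraph immediately preceding the lemma and offers no further details, so your write-up is in fact more complete than the original.
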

Thus this allows to conclude that
\begin{lemma}
If $x \in [0, 1] \setminus N \cup N_1$, where $\Lebesgue^1(N \cup N_1) = 0$,
if $\theta_{ij}(x) = 0$, then  $\theta_{ij}'(x) = 0$ and $\theta_{ij}''(x) = 0, \forall i, j$.
\end{lemma}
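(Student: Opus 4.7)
The plan is to obtain this lemma as an immediate chaining of the two preceding lemmas, so essentially no new analytic content is required; the only care is in bookkeeping the exceptional null sets so that a single set works uniformly in $i,j$.

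Concretely, I would fix any $x \in [0,1] \setminus (N \cup N_1)$ and any pair $(i,j)$, and assume $\theta_{ij}(x) = 0$. Since $x \notin N$, the preceding lemma (applied to this $(i,j)$) gives $\theta'_{ij}(x) = 0$. Now invoke the second of the preceding lemmas: since $x \notin N_1$ and $\theta'_{ij}(x) = 0$, it yields $\theta''_{ij}(x) = 0$. This is the conclusion. The logical structure is simply: the good set for the first implication is $[0,1]\setminus N$, the good set for the second is $[0,1]\setminus N_1$, and their intersection $[0,1]\setminus(N\cup N_1)$ is where both implications are simultaneously valid.

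For the measure claim, note that $N = \bigcup_{i,j} N_{ij}$ and $N_1 = \bigcup_{i,j} N^1_{ij}$ are each finite unions over the $p^2$ index pairs of sets that were already constructed to have $\Lebesgue^1(N_{ij}) = 0$ and $\Lebesgue^1(N^1_{ij}) = 0$ via the repeated application of Lemma~\ref{lemma:leb-first}. By countable (here finite) subadditivity of Lebesgue measure, $\Lebesgue^1(N) = \Lebesgue^1(N_1) = 0$, and hence $\Lebesgue^1(N \cup N_1) \leq \Lebesgue^1(N) + \Lebesgue^1(N_1) = 0$. There is no real obstacle in this proof; the only point to watch is that the exceptional set be chosen once and for all, independently of $(i,j)$, so that the quantifier ``$\forall i,j$'' can be placed inside the statement for the same $x$. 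This is exactly what taking the union over all index pairs accomplishes.
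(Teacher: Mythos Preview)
Your proposal is correct and matches the paper's approach exactly: the paper simply writes ``Thus this allows to conclude that'' before stating this lemma, so it is treating the result as an immediate chaining of the two preceding lemmas, which is precisely what you do. Your additional remark about the finite subadditivity of Lebesgue measure to justify $\Lebesgue^1(N\cup N_1)=0$ is a harmless (and welcome) bit of explicitness that the paper leaves implicit.
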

Thus for all $x \in [0, 1] \setminus N \cup N_1$,
$\norm{\Theta''(x)}_0 \leq  \norm{\Theta'(x)}_0 \leq \norm{\Theta(x)}_0 \leq (s+p)$.
\; \; \; $\square$

\def\sleft{\hskip-5pt}
\def\lleft{\hskip-25pt}
\section{Examples}
In this section, we demonstrate the effectiveness of the method in a
simulation. Starting at time $t=t_0$, the original graph is as shown
at the top of Figure~1.
The graph evolves according to a type of Erd\H{o}s-R\'{e}nyi random graph 
model. Initially we set $\Theta = 0.25 I_{p \times p}$, where $p = 50$.
Then, we randomly select $50$ edges and update $\Theta$ as follows:
for each new edge $(i, j)$, a weight $a >0$ is chosen
uniformly at random from $[0.1, 0.3]$; we subtract $a$ from 
$\theta_{ij}$ and $\theta_{ji}$, and increase $\theta_{ii}, \theta_{jj}$ 
by $a$. This keeps $\Sigma$ positive definite.
\begin{figure}[htbp]
\begin{center}
\begin{tabular}{c}
\includegraphics[width=.30\textwidth,angle=-0]{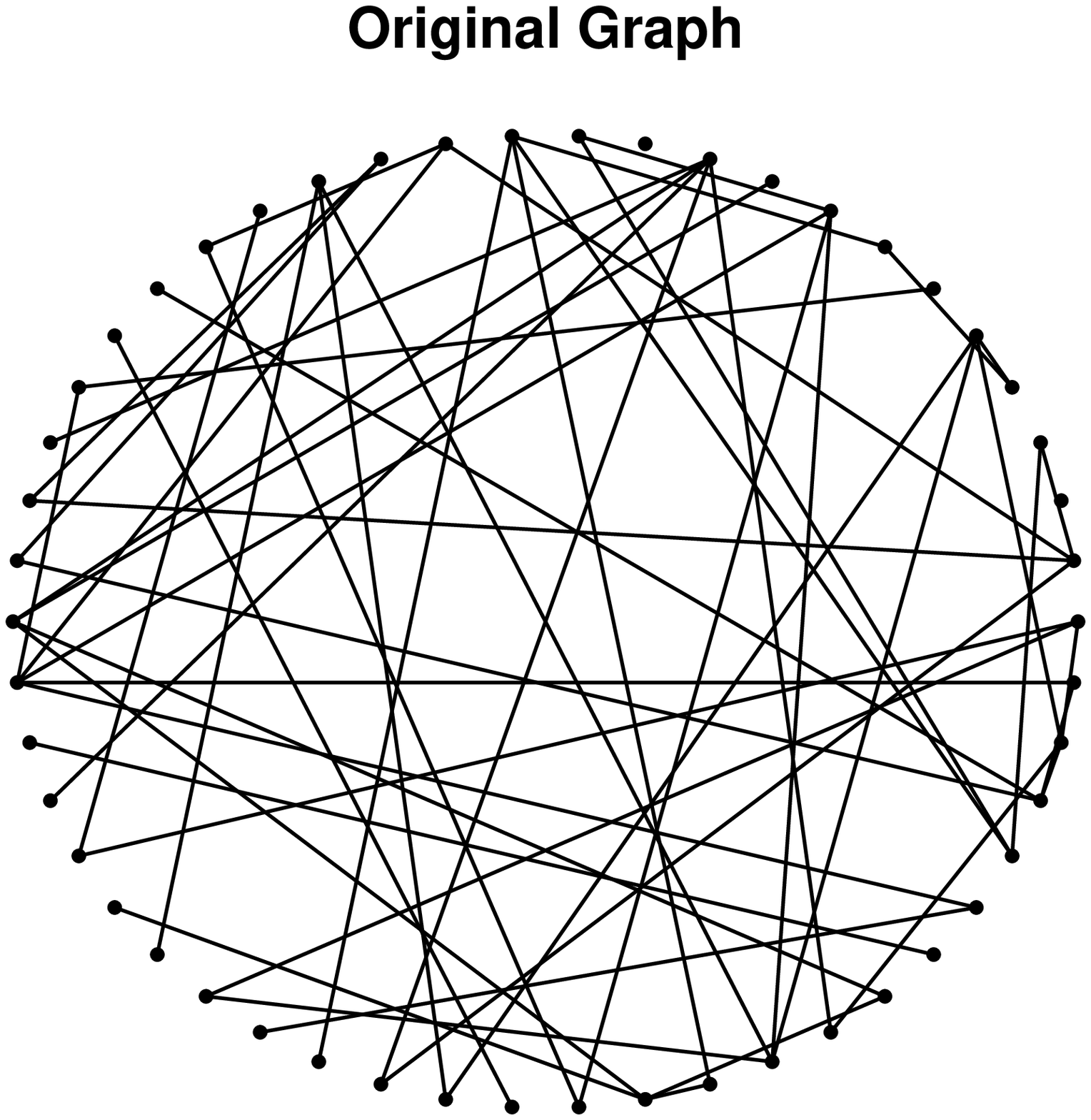} \\[-20pt]
\includegraphics[width=.30\textwidth,angle=-90]{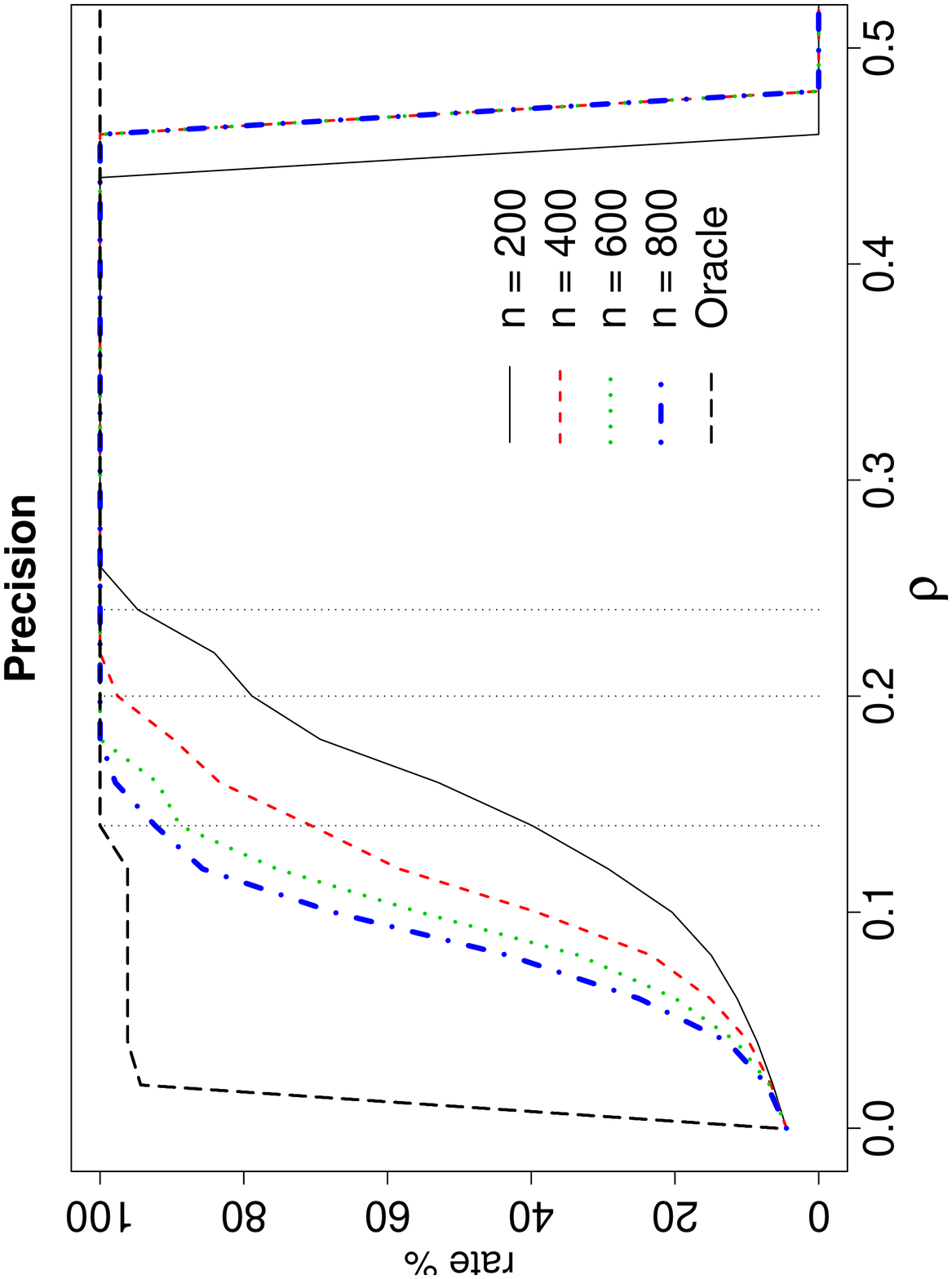} \\[-10pt]
\includegraphics[width=.30\textwidth,angle=-90]{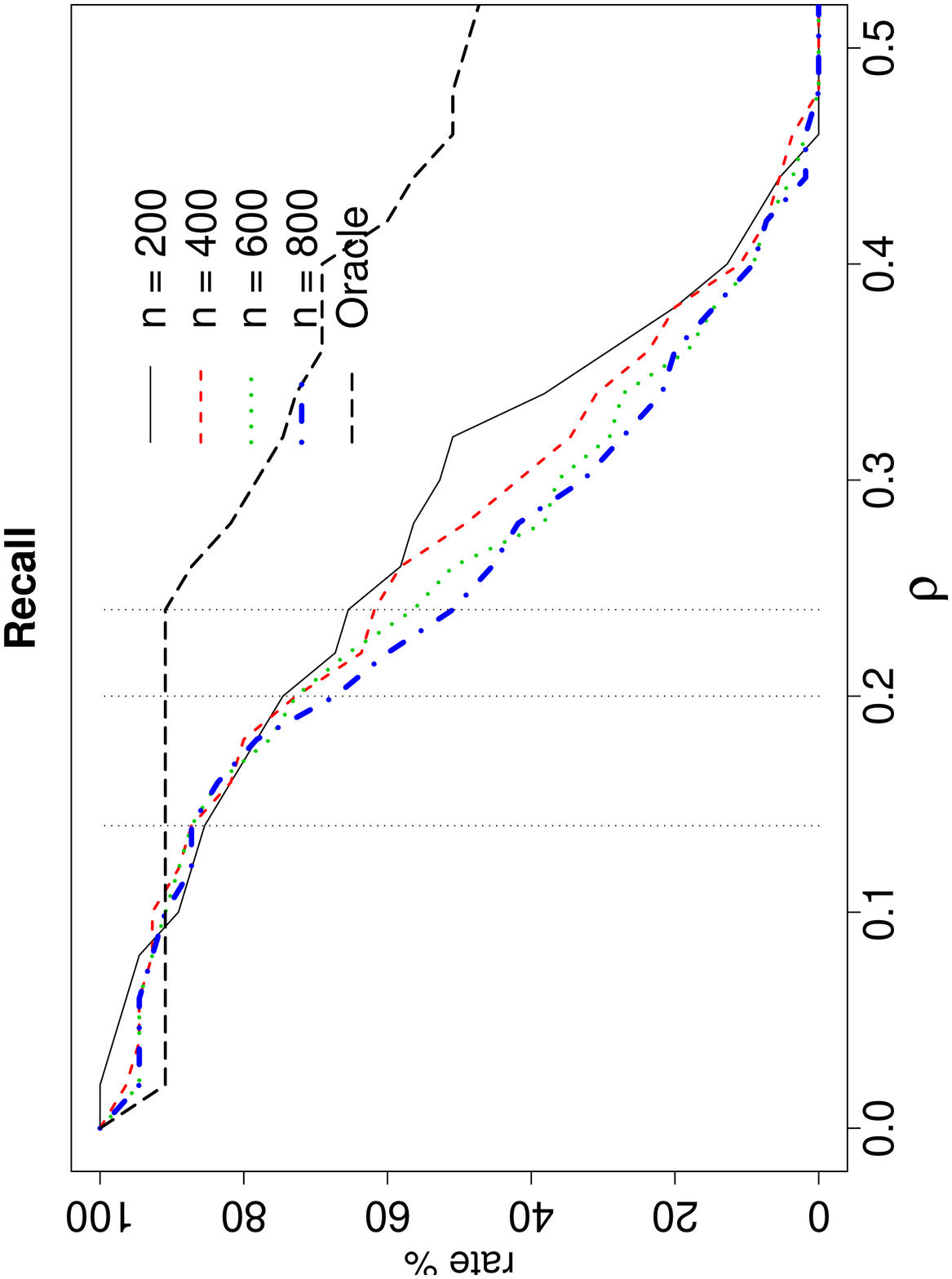}\\[-10pt]
\includegraphics[width=.30\textwidth,angle=-90]{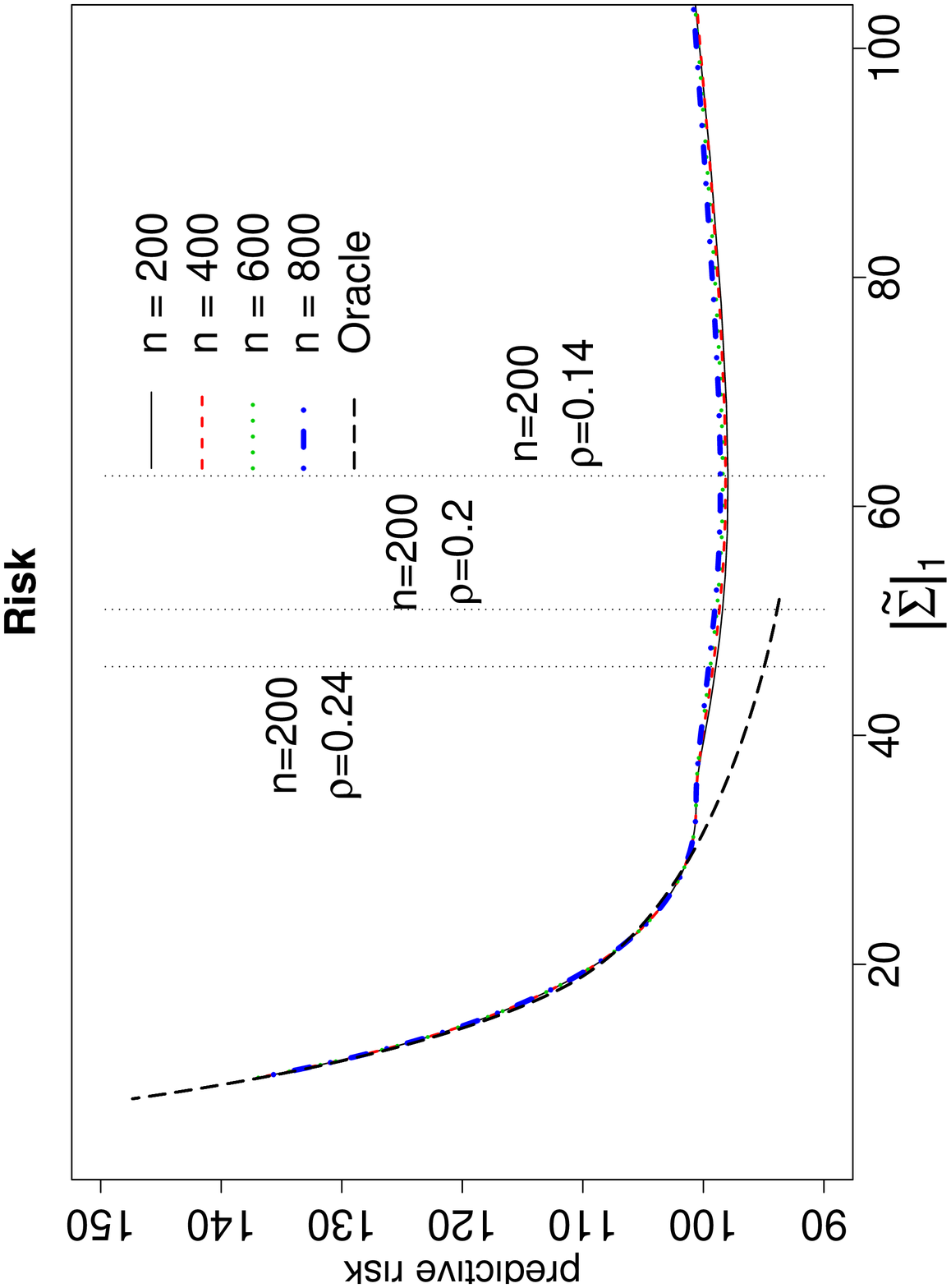}\\
\end{tabular}
\caption{Plots from top to bottom show that
as the penalization parameter $\rho$ increases, precision goes up, 
and then down as no edges are predicted in the end.  
Recall goes down as the estimated graphs are missing more
and more edges.  The oracle $\Sigma^*$ performs the best, given the
same value for $|\hat{\Sigma}_n(t_0)|_1 = |\Sigma^*|_1, \forall n$.
}
\end{center}
\label{fig:penalty}
\end{figure}
When we later delete an existing edge from
the graph, we reverse the above procedure with its weight.
Weights are assigned to the initial $50$ edges, and then we change
the graph structure periodically as follows: Every $200$ discrete time 
steps, five existing edges are deleted, and five new edges are added.
However, for each of the five new edges, a target weight is chosen,
and the weight on the edge is gradually changed over the ensuing $200$
time steps in order ensure smoothness.  Similarly, for each of the
five edges to be deleted, the weight gradually decays to zero over the
ensuing $200$ time steps. Thus, almost always, there are $55$ edges in
the graph and $10$ edges have weights that are varying smoothly.

\subsection{Regularization Paths}

We increase the sample size from $n = 200,$ to $400$, $600$, and $800$ 
and use a Gaussian kernel with bandwidth $h = \frac{5.848}{n^{1/3}}$. 
We use the following metrics to evaluate model
consistency risk for~$(\ref{eq::model-risk})$ and predictive 
risk~$(\ref{eq::future-risk})$ in Figure~$1$ as the $\ell_1$ regularization
parameter $\rho$ increases.
\begin{itemize}
\item
Let $\hat{F}_n$ denote edges in 
estimated $\hat{\Theta}_n(t_0)$ and $F$ denote edges in 
$\Theta(t_0)$. Let us define
\begin{eqnarray*}
\prec & = & 1 - \frac{\hat{F}_n \setminus F}{\hat{F}_n}
 = \frac{\hat{F}_n \cap F}{\hat{F}_n}, \\
\recall & = & 1 - \frac{F \setminus \hat{F}_n}{F}
 = \frac{\hat{F}_n \cap F}{F}.
\end{eqnarray*}
Figure~$1$ shows how they change with $\rho$.
\item
Predictive risks in~$(\ref{eq::future-risk})$ are plotted
for both the oracle estimator~$(\ref{eq::oracle-estimator})$ and 
empirical estimators~$(\ref{eq::emp-estimator})$ for each $n$.
They are indexed with the $\ell_1$ norm of various 
estimators vectorized; hence 
$|\cdot|_1$ for $\hat{\Sigma}_n(t_0)$ and $\Sigma^*(t_0)$ are the same along a 
vertical line. 
Note that $|\Sigma^*(t_0)|_1 \leq |\Sigma(t_0)|_1, \forall \rho \geq 0$;
for every estimator $\tilde\Sigma$ (the oracle or empirical), 
$|\tilde\Sigma|_1$ decreases as $\rho$ increases, as shown in Figure~$1$ 
for $|\hat\Sigma_{200}(t_0)|_1.$
\end{itemize}
Figure~2 shows a subsequence of estimated graphs 
as $\rho$ increases for sample size $n = 200$. The original graph at $t_0$ 
is shown in Figure~$1$.
\begin{figure}[ht]
\begin{center}
\begin{tabular}{ccc}
$G(p, \hat{F}_n)$ & $G(p, \hat{F}_n \setminus F)$ & $G(p, F \setminus \hat{F}_n)$ \\ 
\hspace{-10pt}
\includegraphics[width=.135\textwidth,angle=-0]{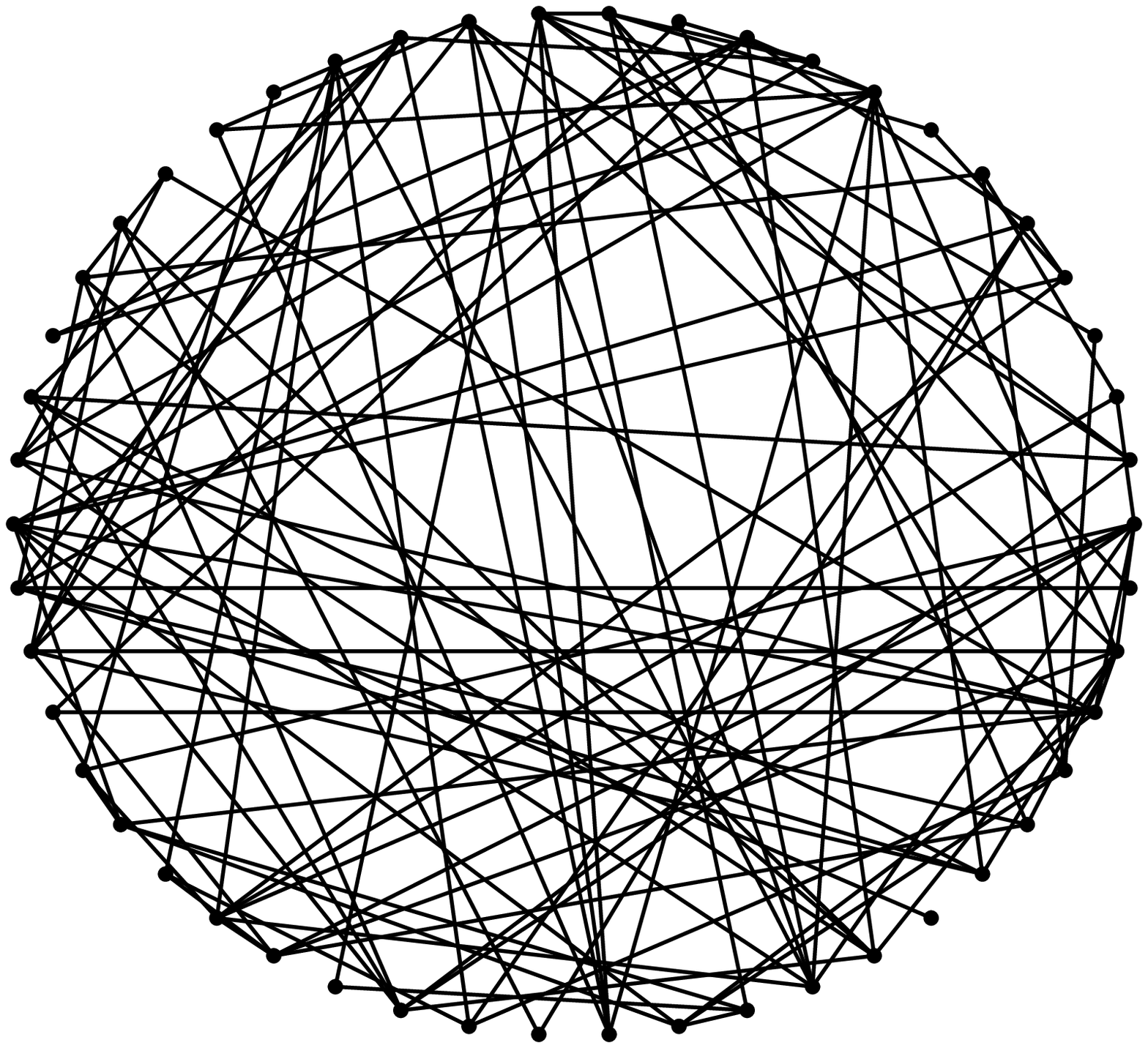}&
\hspace{-10pt}
\includegraphics[width=.135\textwidth,angle=-0]{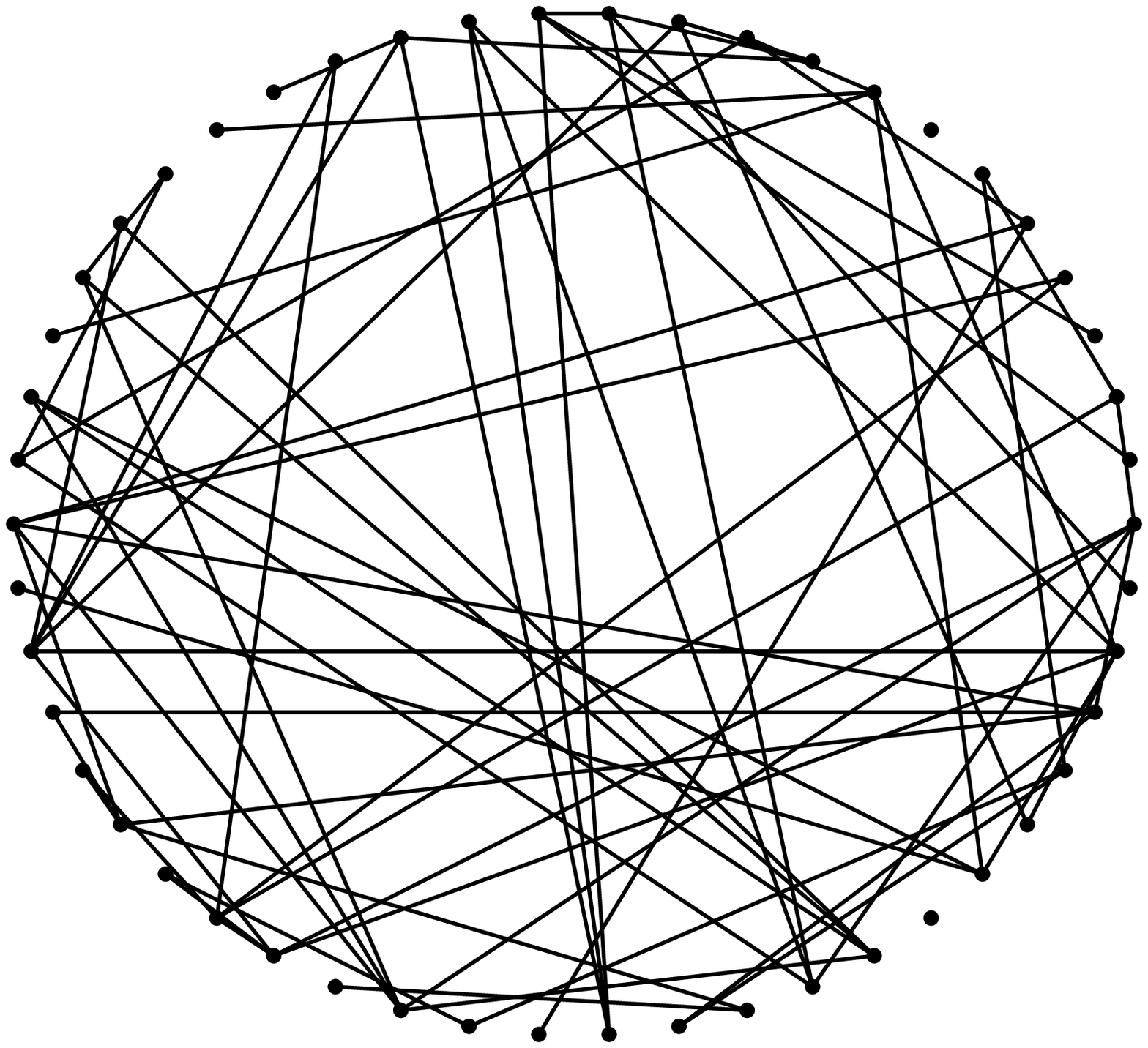}&
\hspace{-10pt}
\includegraphics[width=.135\textwidth,angle=-0]{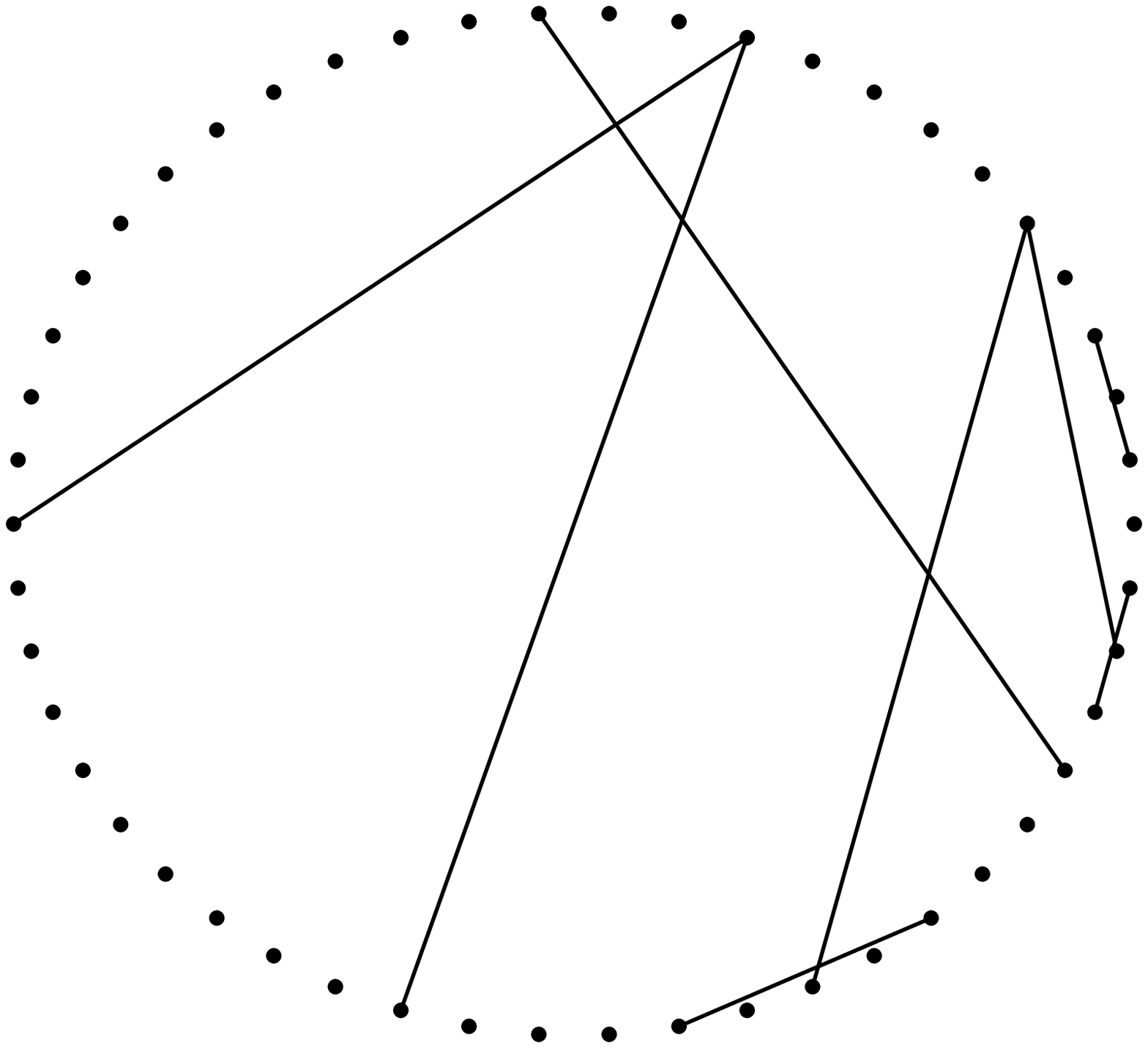}\\
\hspace{-10pt}
\includegraphics[width=.135\textwidth,angle=-0]{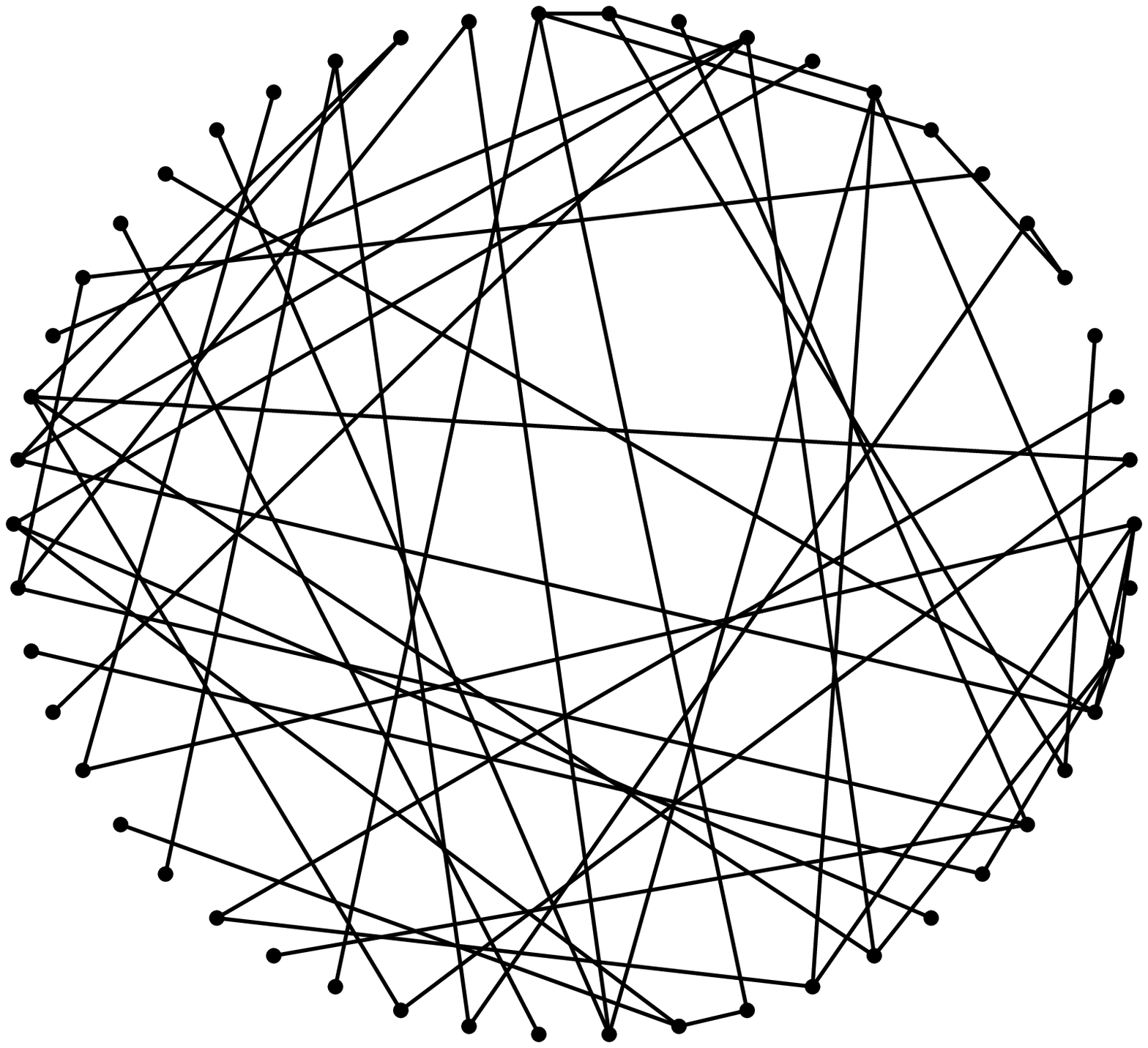}&
\hspace{-10pt}
\includegraphics[width=.135\textwidth,angle=-0]{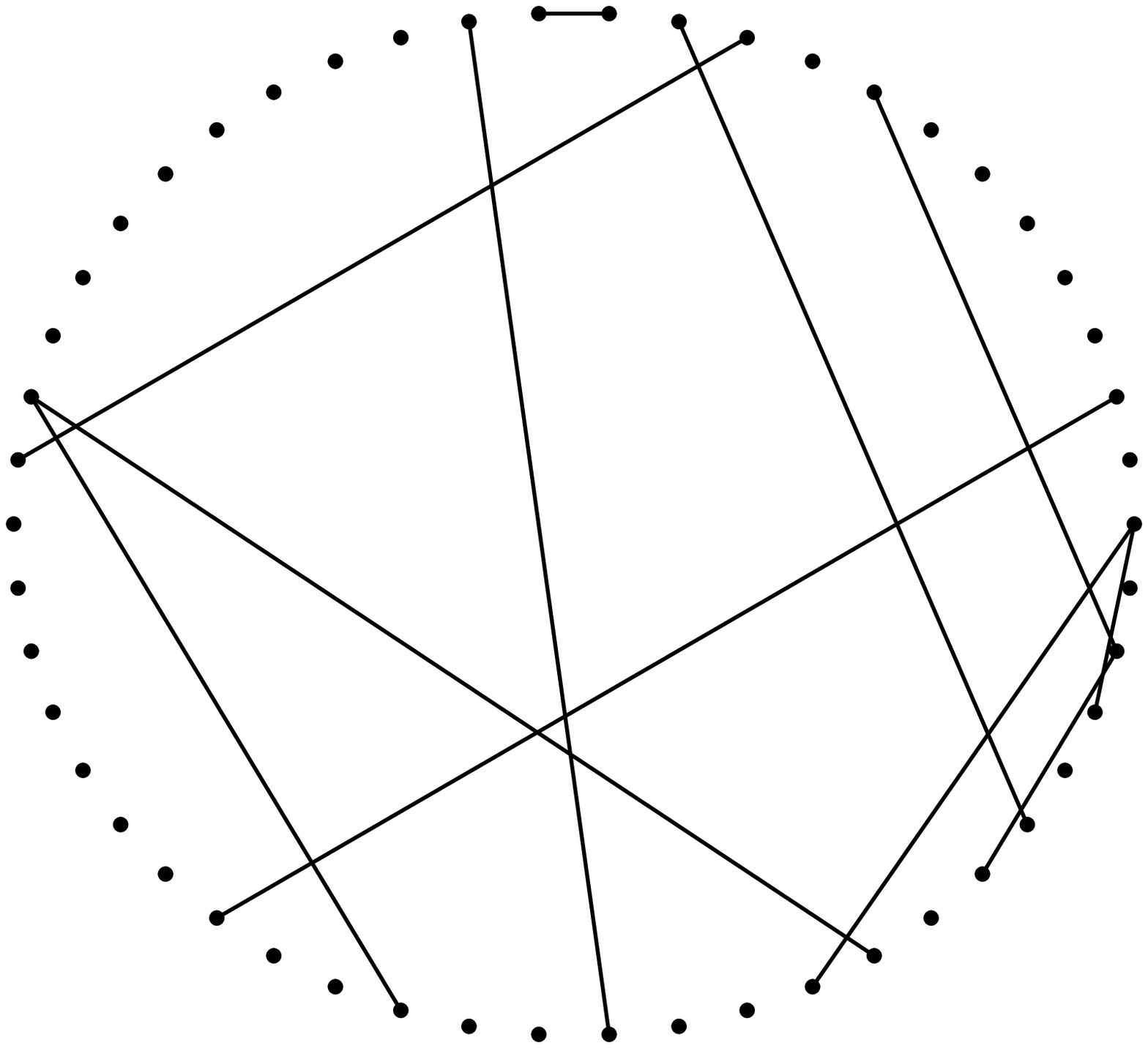}&
\hspace{-10pt}
\includegraphics[width=.135\textwidth,angle=-0]{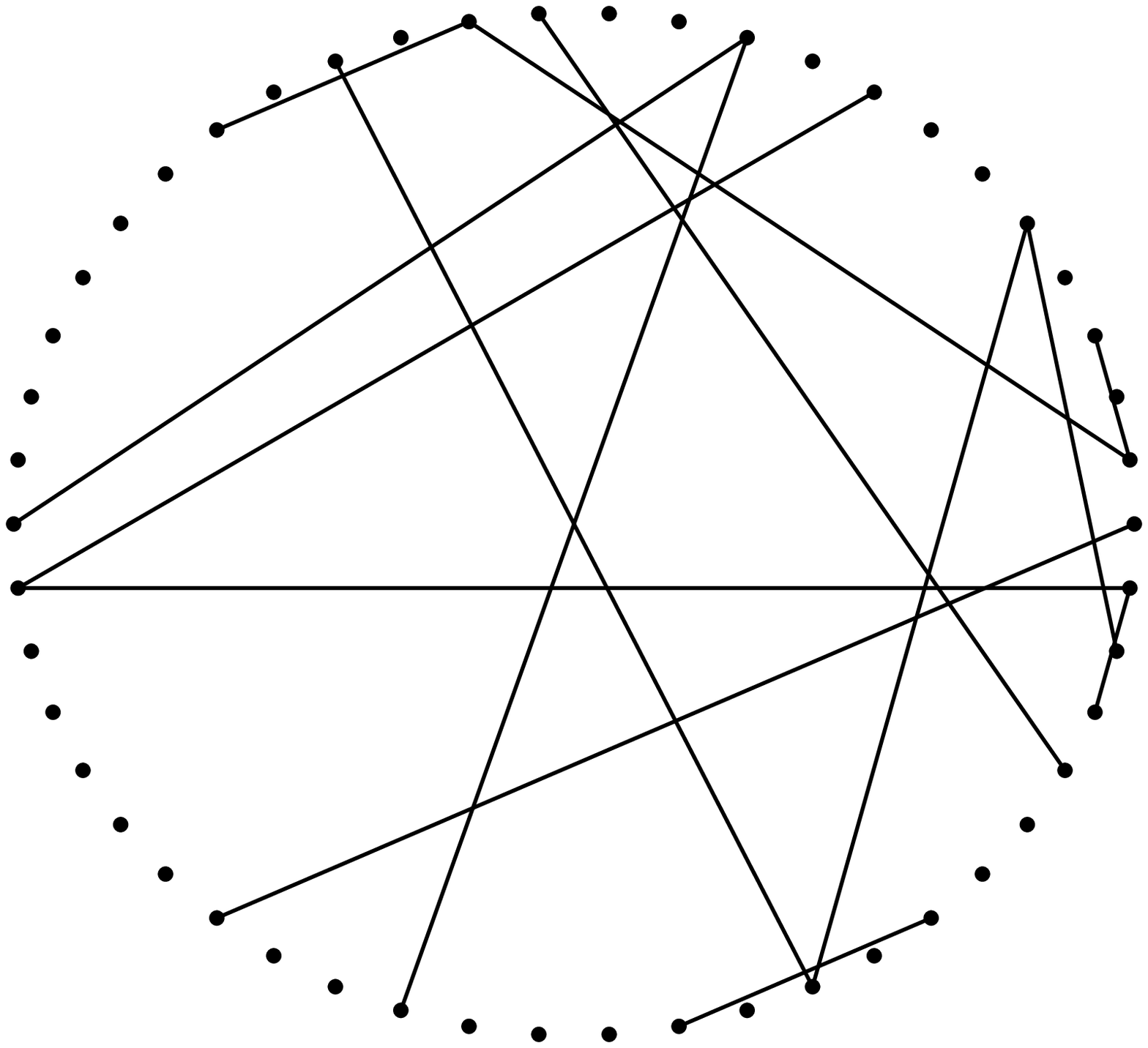} \\
\hspace{-10pt}
\includegraphics[width=.135\textwidth,angle=-0]{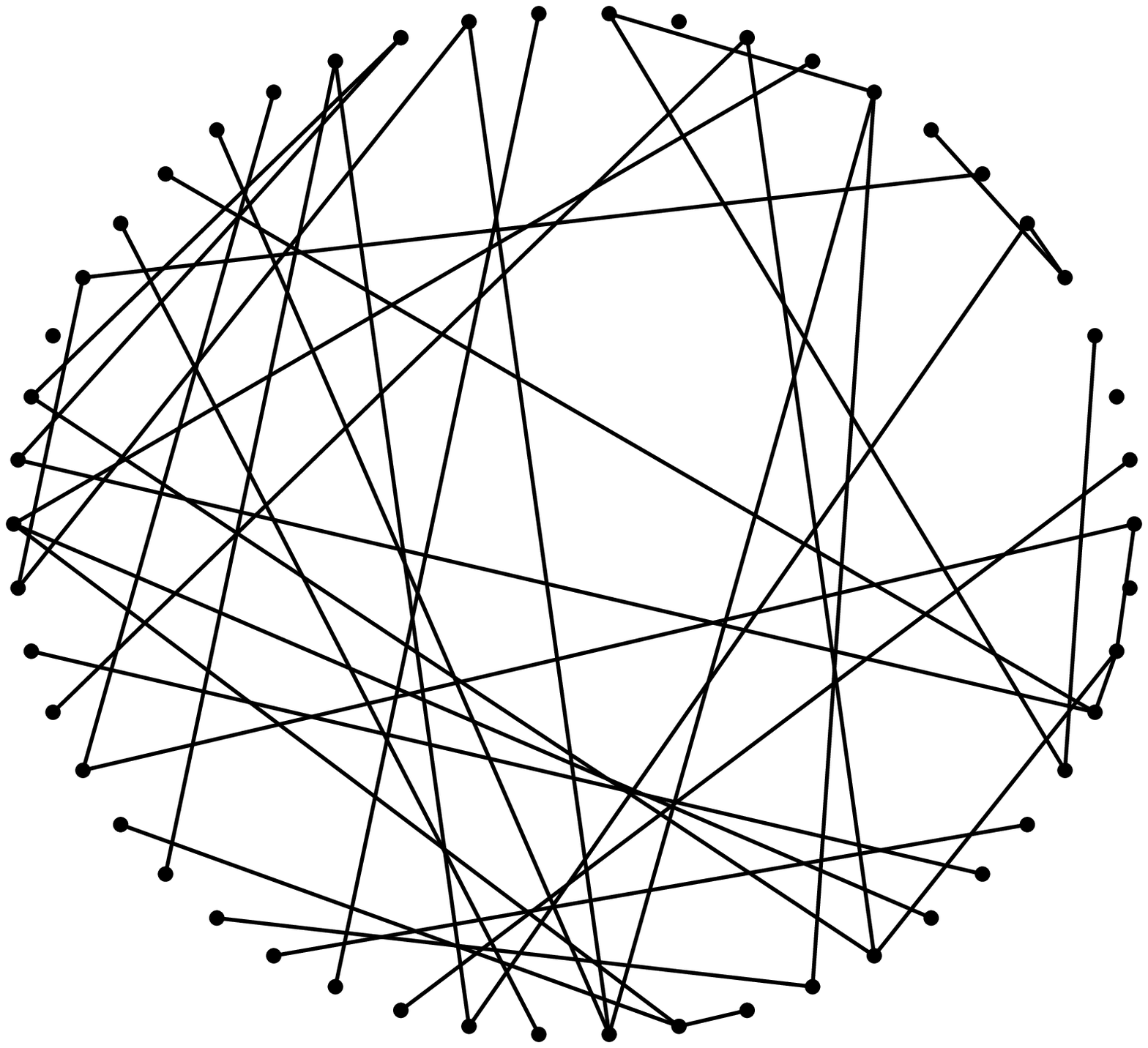}&
\hspace{-10pt}
\includegraphics[width=.135\textwidth,angle=-0]{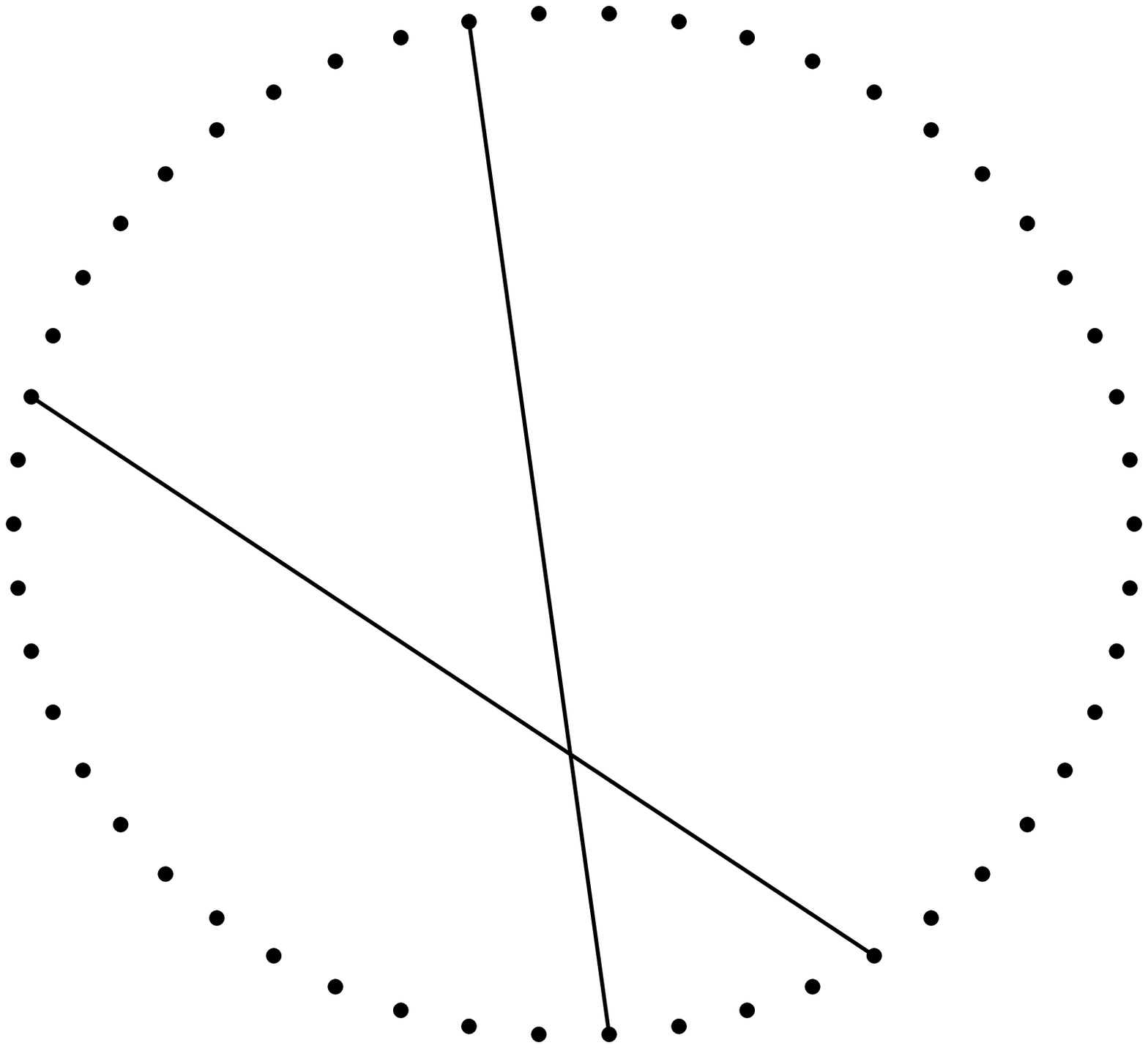}&
\hspace{-10pt}
\includegraphics[width=.135\textwidth,angle=-0]{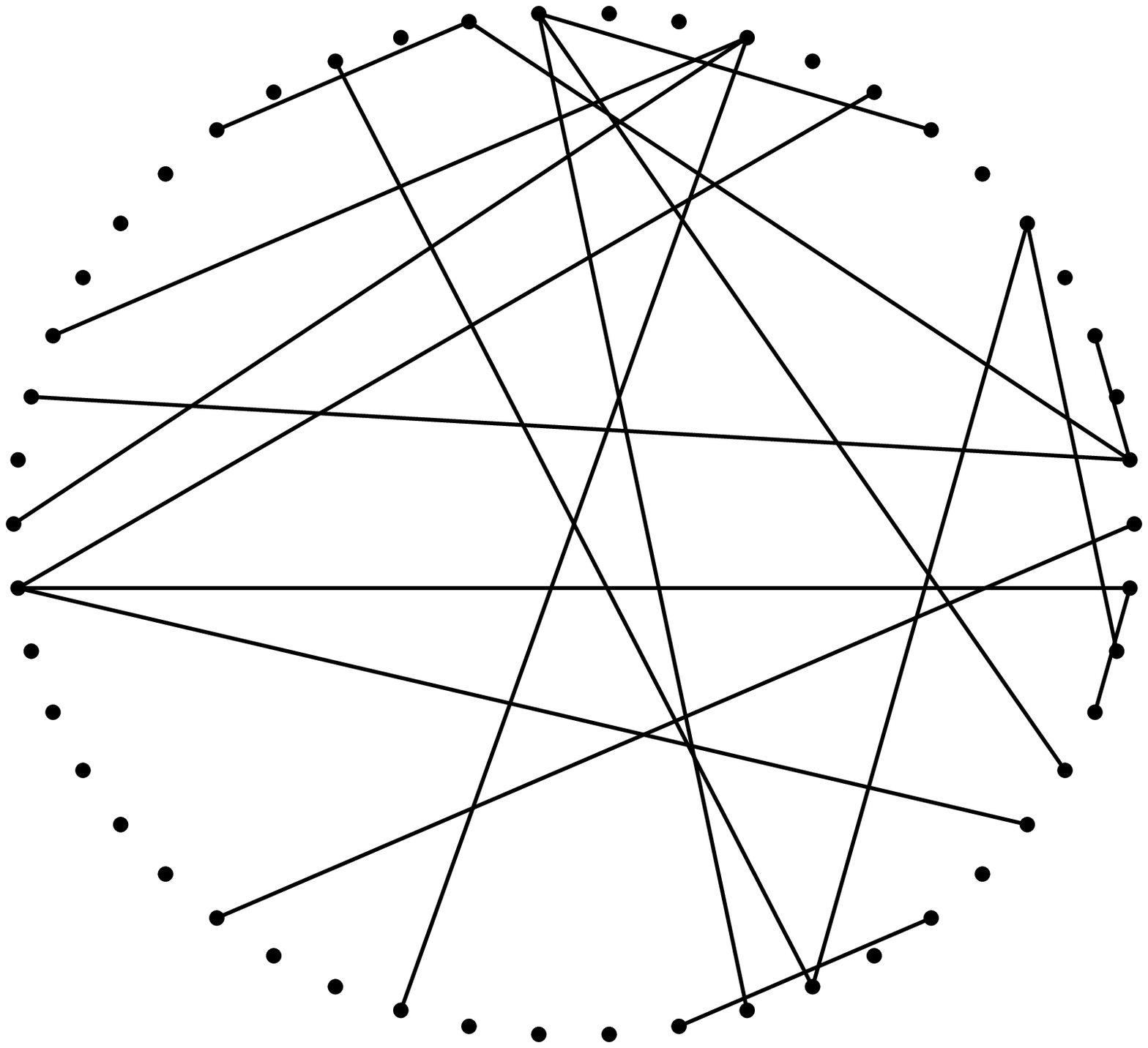}
\hspace{-10pt}
\end{tabular}
\caption{
$n = 200$ and $h = 1$ with $\rho = 0.14, 0.2, 0.24$ indexing each row.
The three columns show sets of edges in $\hat{F}_n$,
extra edges, and missing edges with respect to the true graph $G(p, F)$.
This array of plots show that $\ell_1$ regularization is effective in selecting
the subset of edges in the true model $\Theta(t_0)$, even when the samples 
before $t_0$ were from graphs that evolved over time. }
\end{center}
\label{fig:path}
\end{figure}

\subsection{Chasing the Changes}
Finally, we show how quickly the smoothed estimator using GLASSO~\cite{FHT07} 
can include the edges that are being added in the beginning of interval $[0, 1]$,
and get rid of edges being replaced, whose weights start to decrease at $x = 0$ and
become $0$ at $x = 0.5$ in Figure~$3$.
\begin{figure*}[htbp]
\begin{center}
\begin{tabular}{cccccc}
Edges &&&&&\\
\multirow{2}{*}{\includegraphics[width=.2\textwidth,angle=-0]{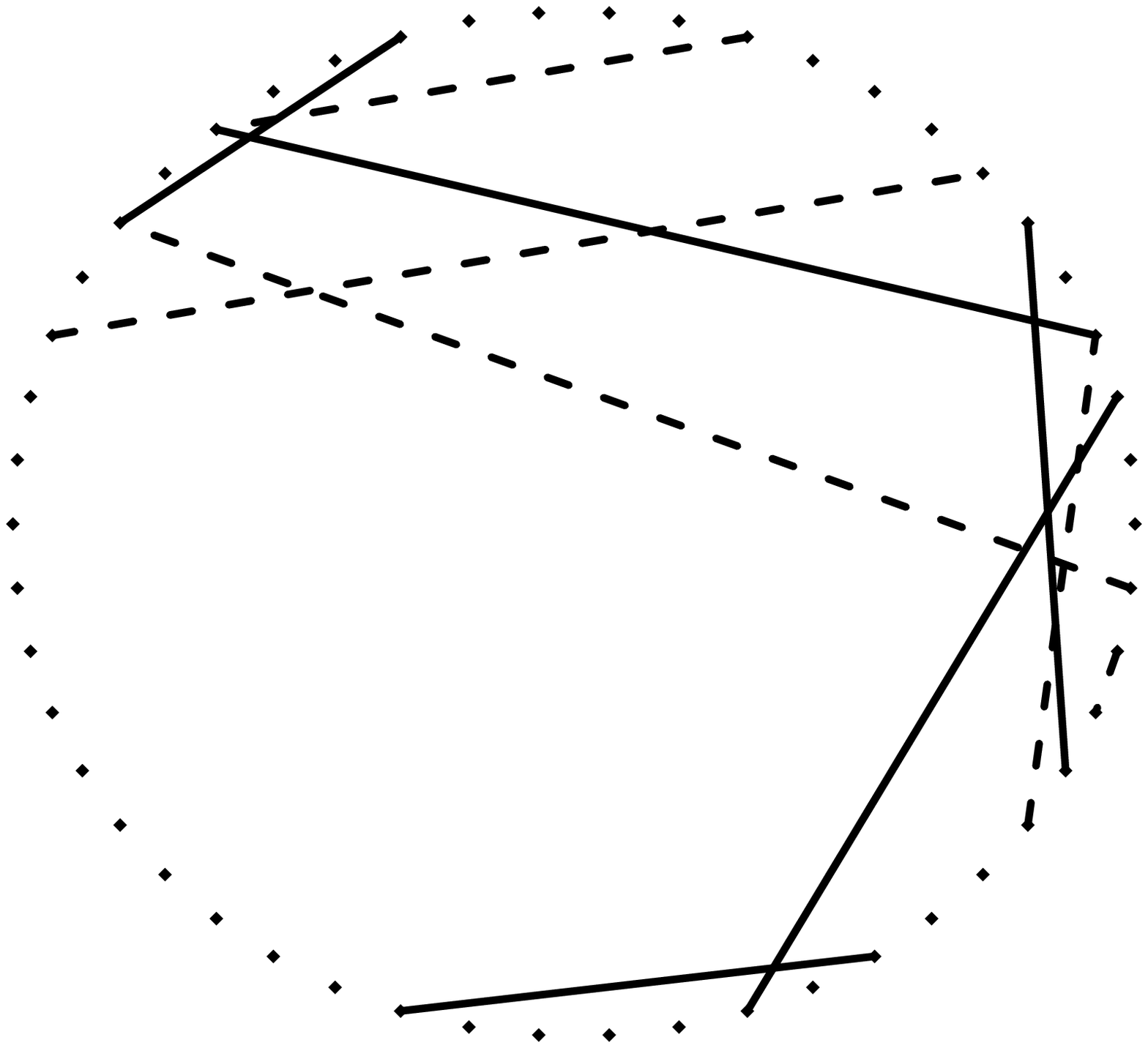}}
&
0.35 & 0.4875 & 0.52 & 0.5275 & 0.6125 \\
&
\includegraphics[width=.12\textwidth,angle=-0]{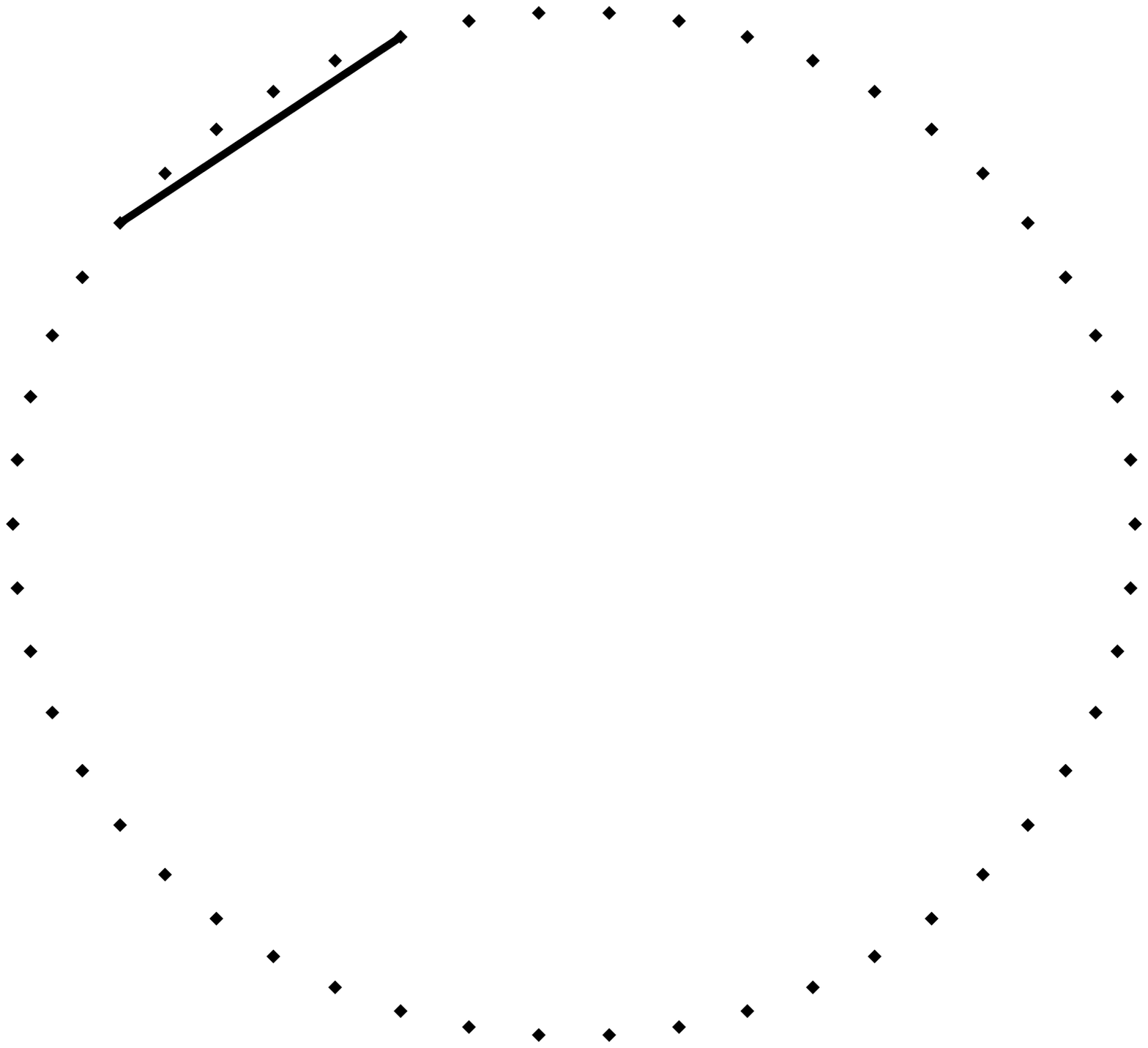}&
\includegraphics[width=.12\textwidth,angle=-0]{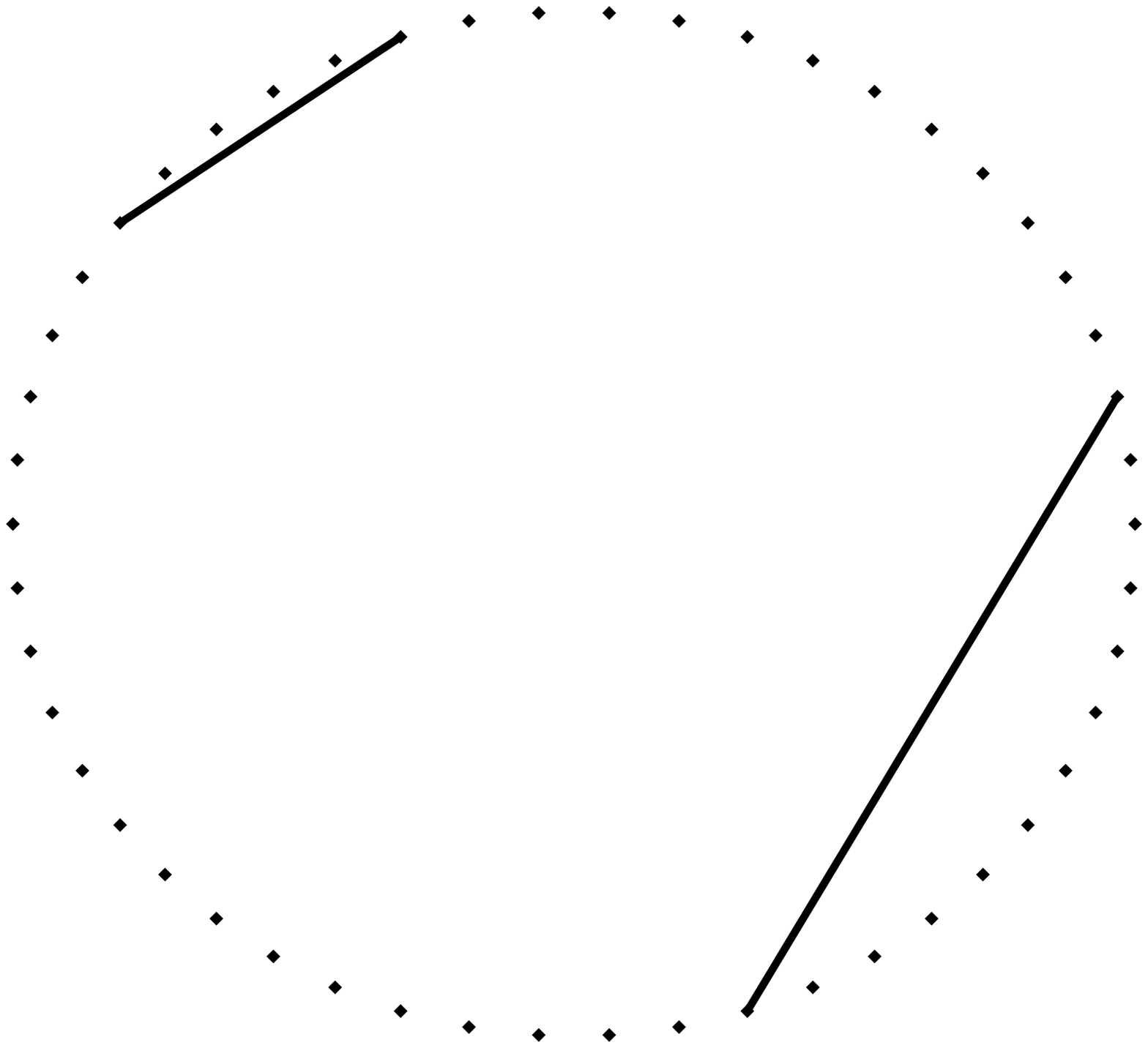}&
\includegraphics[width=.12\textwidth,angle=-0]{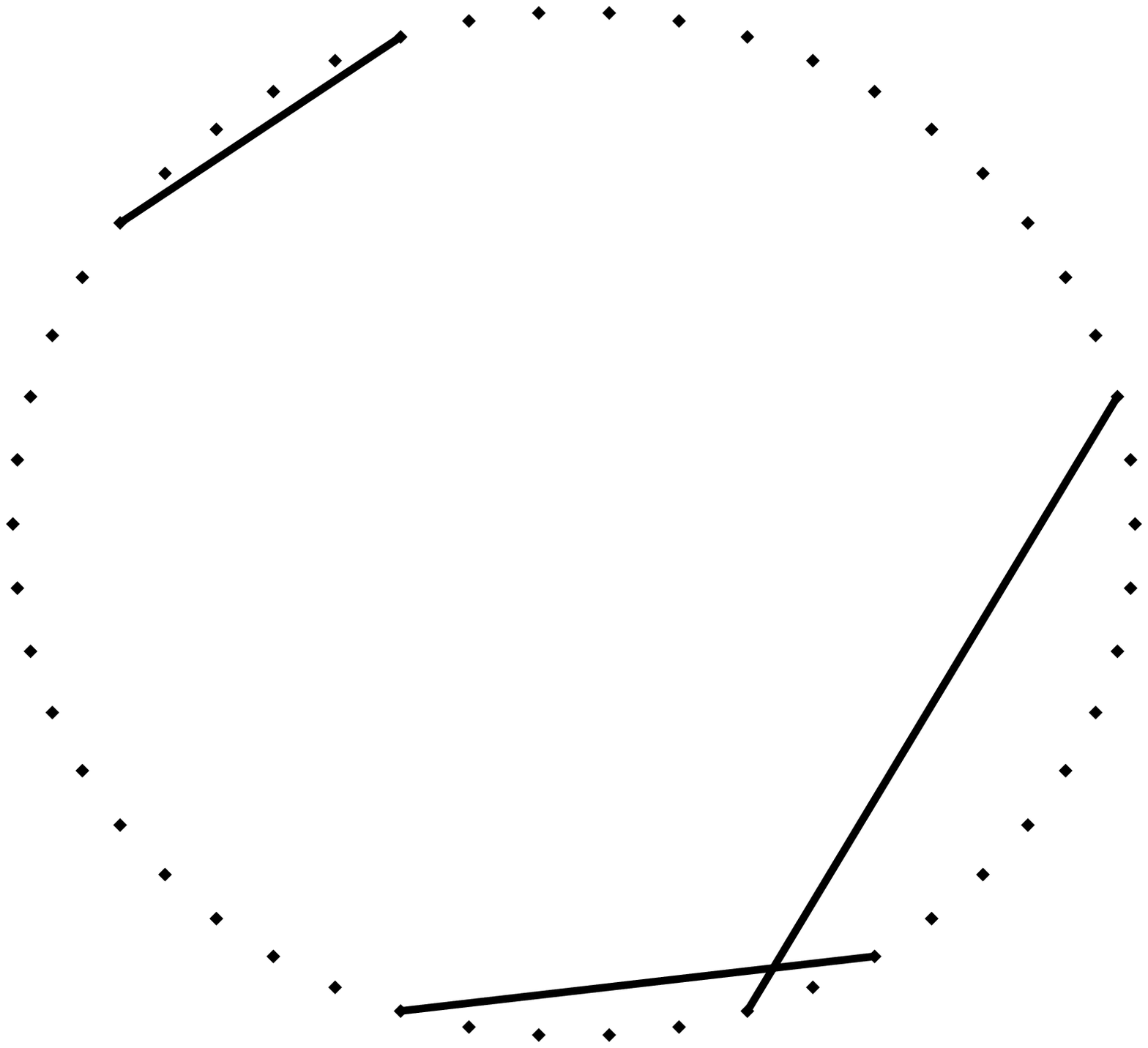}&
\includegraphics[width=.12\textwidth,angle=-0]{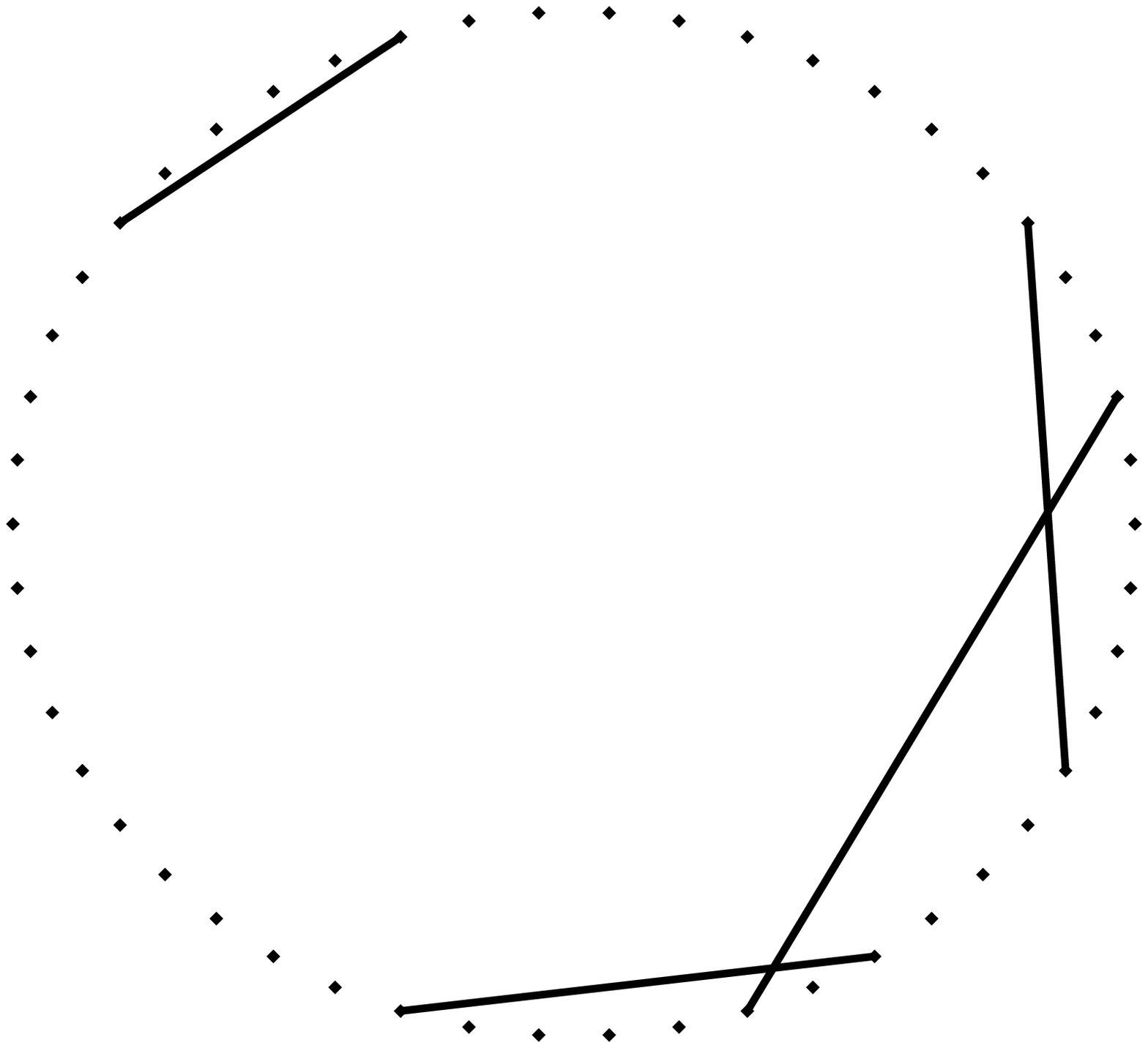}&
\includegraphics[width=.12\textwidth,angle=-0]{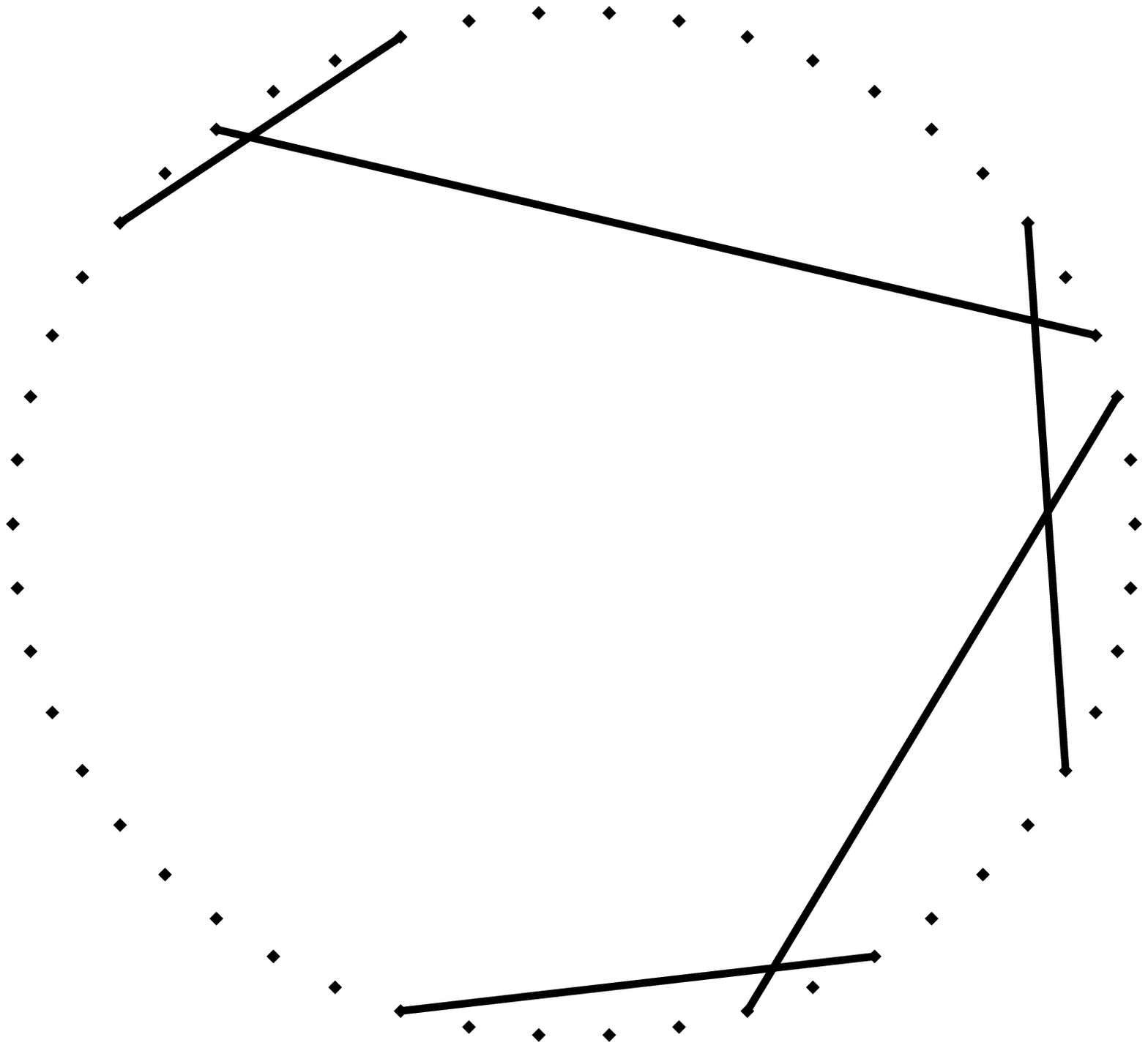}\\
&
0.02& 0.0825 & 0.1275 & 0.21 & 0.595 \\
&
\includegraphics[width=.12\textwidth,angle=-0]{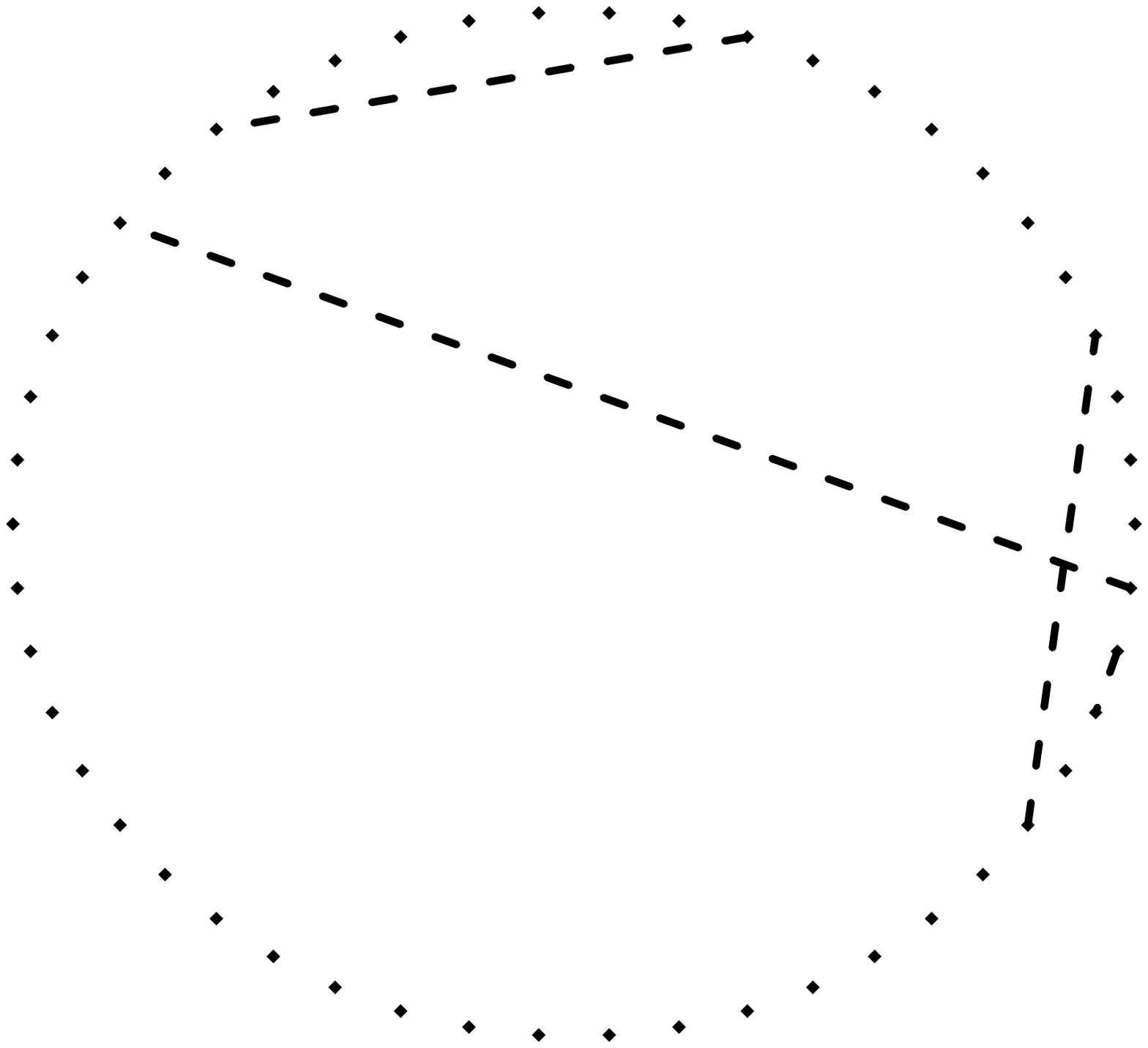}&
\includegraphics[width=.12\textwidth,angle=-0]{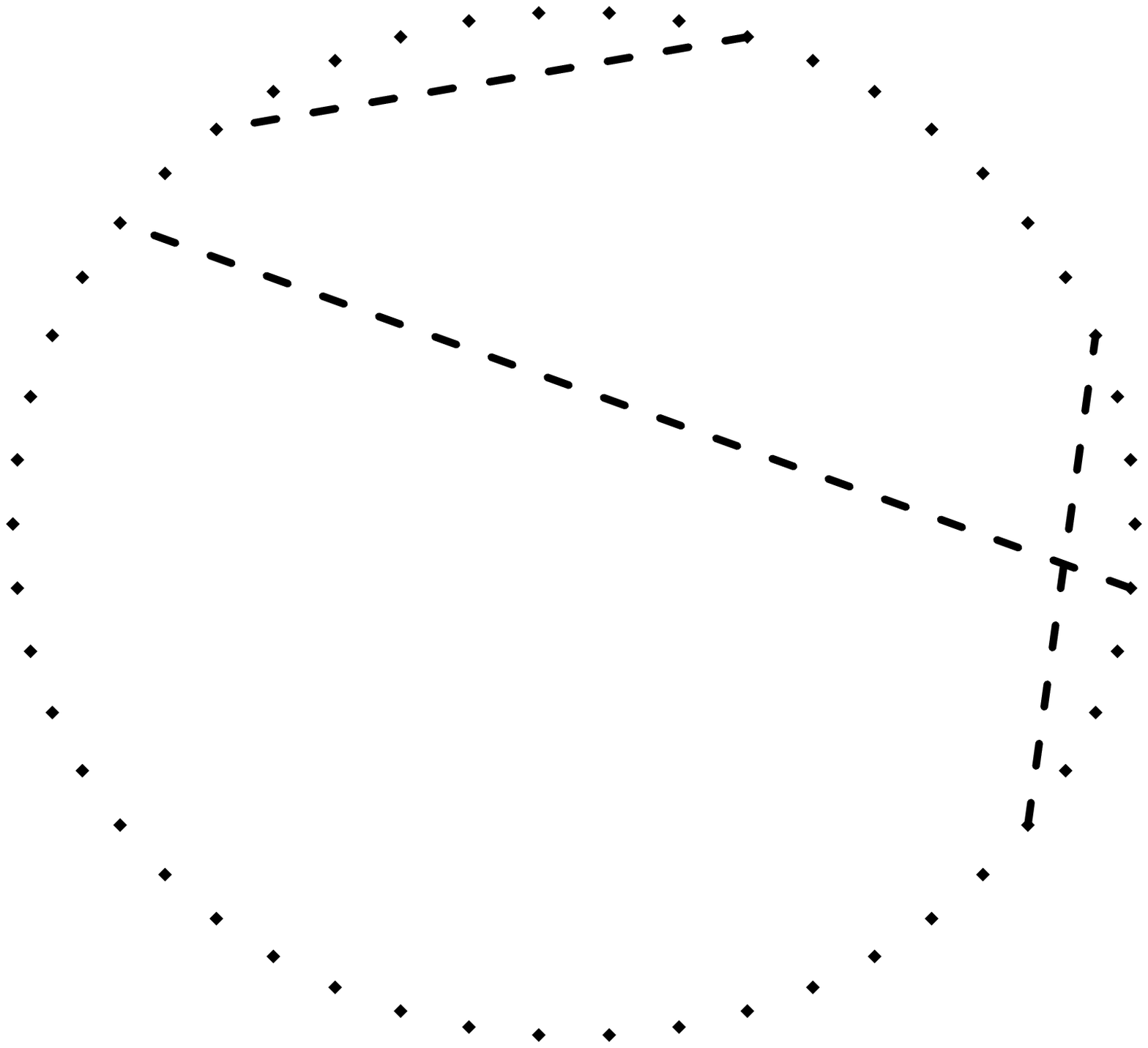}&
\includegraphics[width=.12\textwidth,angle=-0]{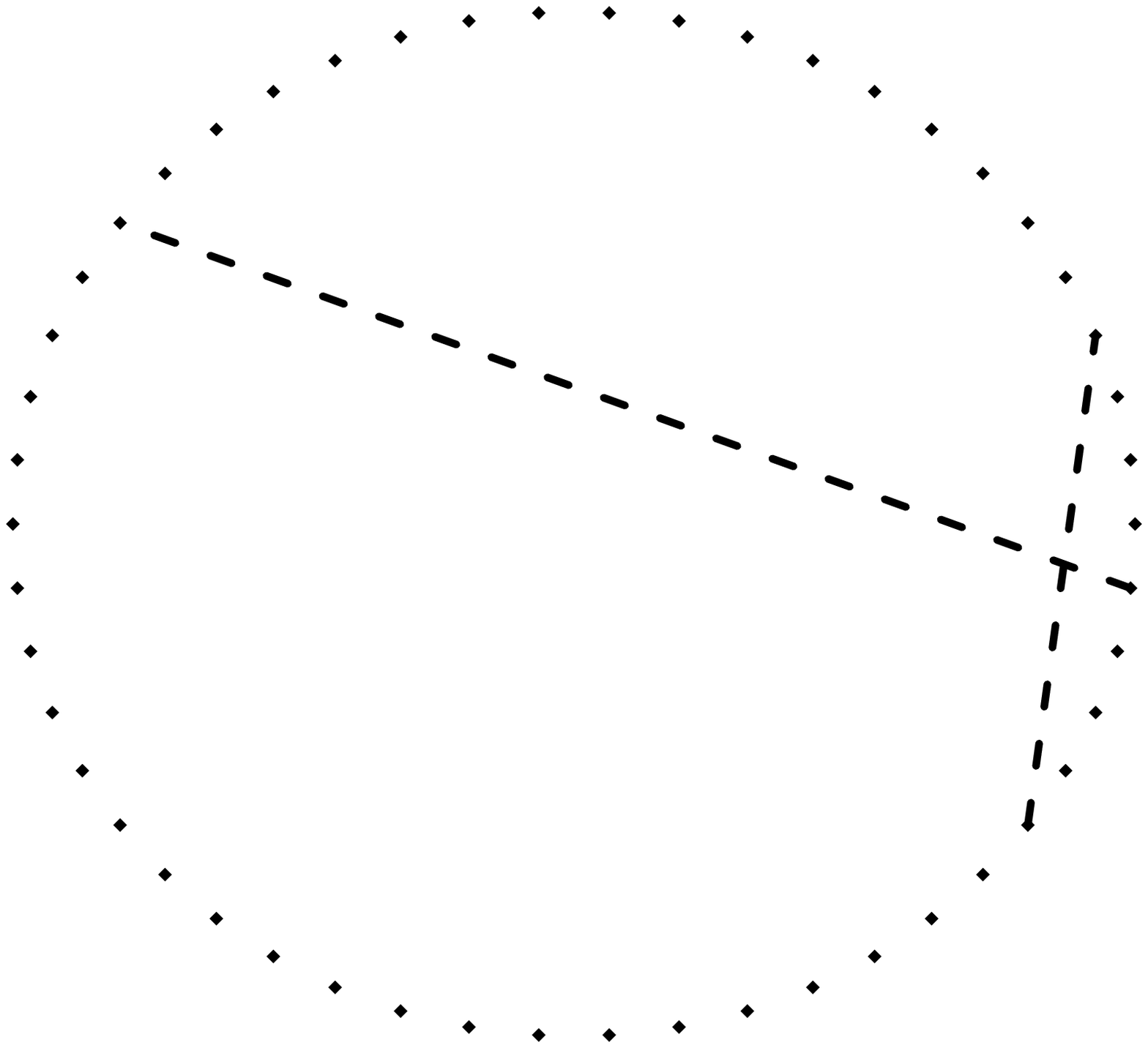}&
\includegraphics[width=.12\textwidth,angle=-0]{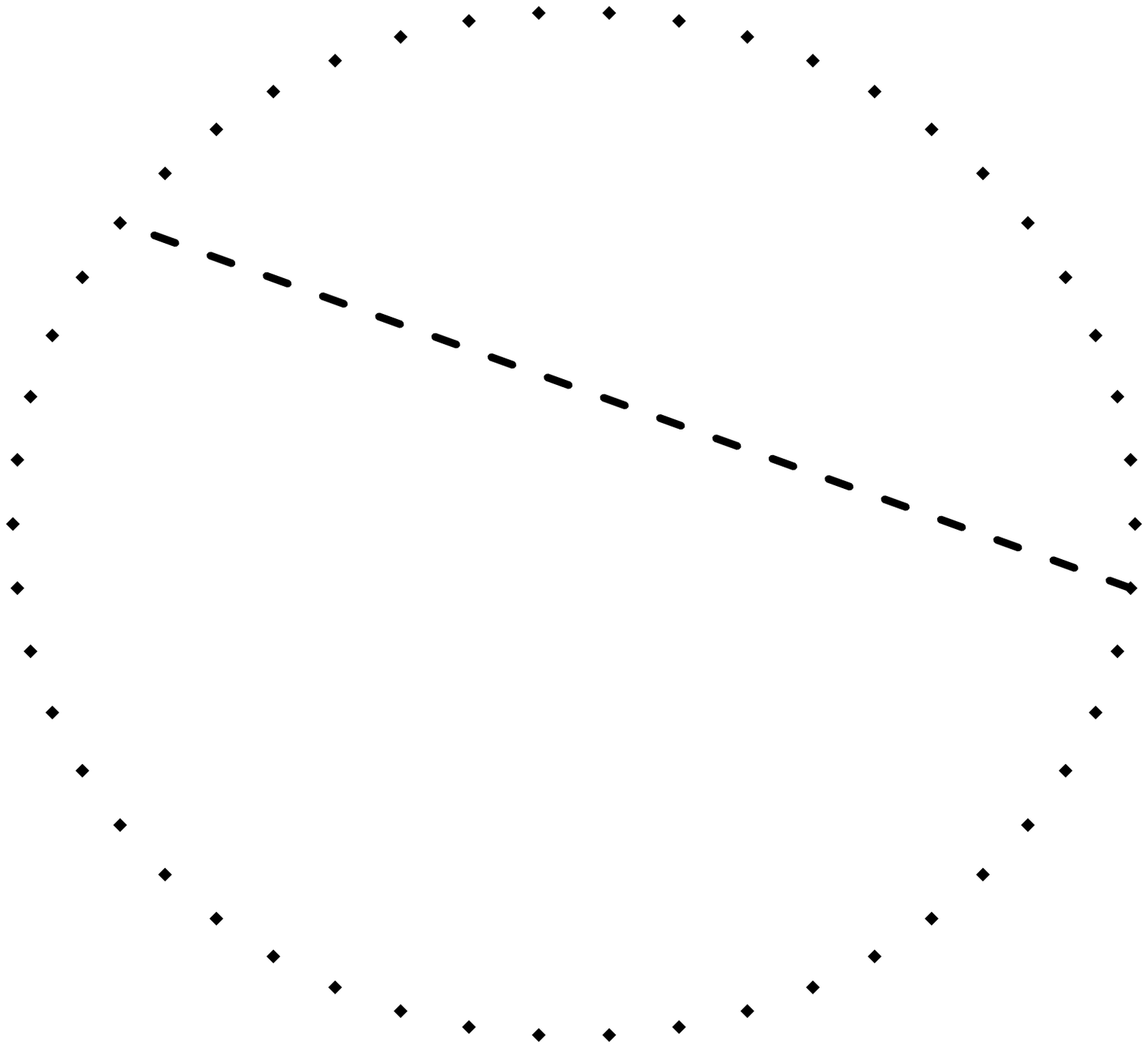}&
\includegraphics[width=.12\textwidth,angle=-0]{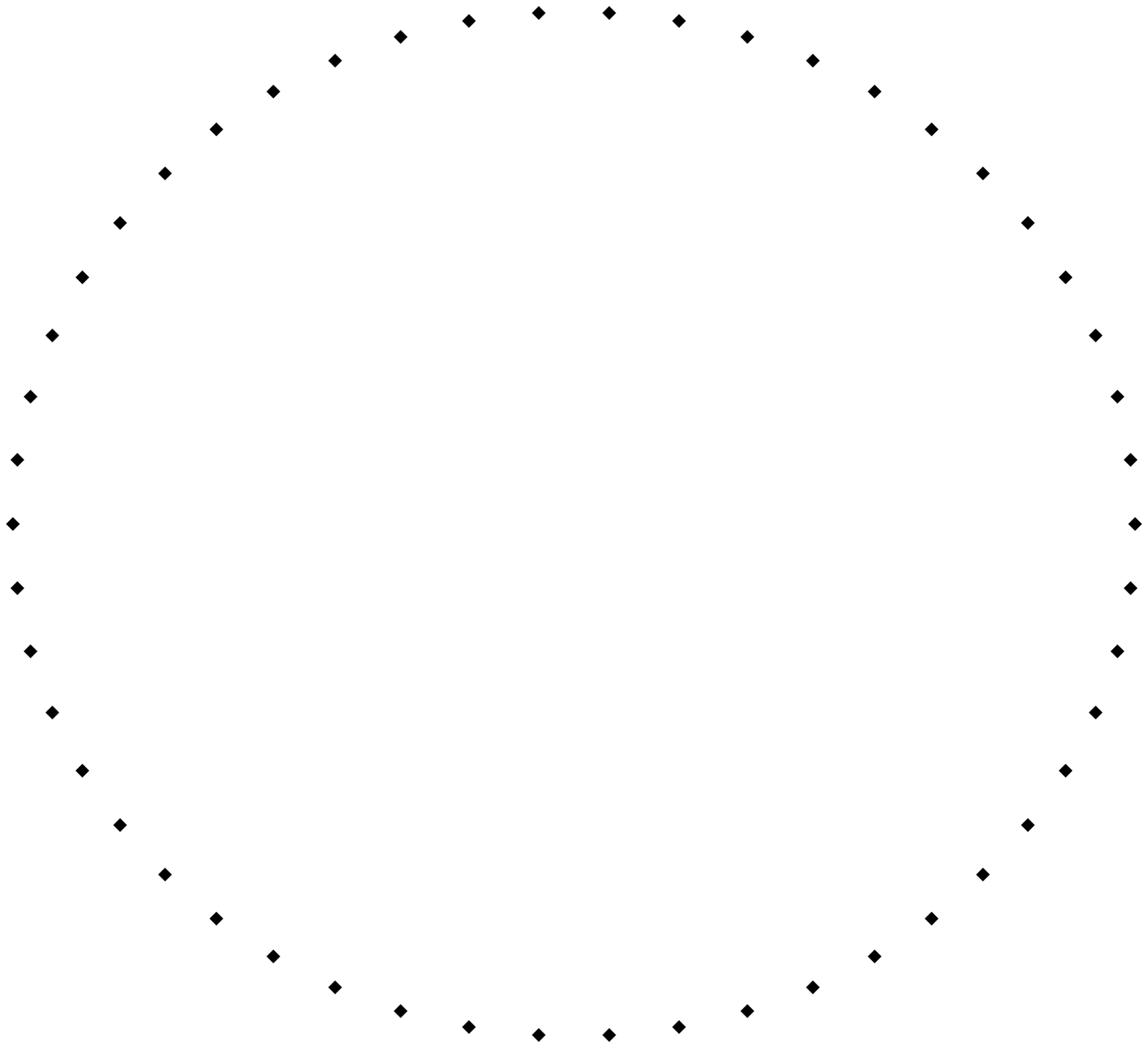}
\end{tabular}
\caption{There are $400$ discrete steps in $[0,1]$ such that 
the edge set $F(t)$ remains unchanged before or after $t = 0.5$.
This sequence of plots shows the times
at which each of the new edges added at $t=0$ appears in the
estimated graph (top row), and the times at which each of the old
edges being replaced is removed from the estimated graph (bottom
row), where the weight decreases from a positive value in $[0.1,0.3]$ 
to zero during the time interval $[0,0.5]$. 
Solid and dashed lines denote new and old edges respectively.}
\end{center}
\label{fig:new-old}
\vskip-10pt
\end{figure*}

\section{Conclusions and Extensions}

We have shown that if the covariance changes smoothly
over time, then minimizing an $\ell_1$-penalized
kernel risk function leads to good estimates of the
covariance matrix.
This, in turn, allows estimation of time varying graphical structure.
The method is easy to apply and is feasible in high dimensions.

We are currently addressing several extensions to this work.
First, with stronger conditions 
we expect that we can establish {\em sparsistency}, that is,
we recover the edges  with probability approaching one.
Second, we can relax the
smoothness assumption using nonparametric changepoint methods
~\cite{GH02} which allow for jumps.
Third, we used a very simple time series model; extensions to 
more general time series models
are certainly feasible.

\bibliography{local} \bibliographystyle{alpha}
\appendix
\appendix
\section{Large Deviation Inequalities for Boxcar Kernel Function}
\label{sec:append-boxcar}
In this section, we prove the following lemma, which implies the
i.i.d case as in the corollary.
\begin{lemma}
\label{lemma:boxcar-deviation}
Using a boxcar kernel that weighs uniformly over $n$ samples 
$Z_k \sim N(0, \Sigma(k)), k = 1, \ldots, n$, that are independently but
not identically distributed, we have for $\epsilon$ small enough, for some $c_2>0$, 
$$
\prob{|\hat{S}_n(t, i, j) - \expct{\hat{S}_n(t, i, j)}| > \epsilon}
\leq \exp\left\{ -c_2 n \epsilon^2 \right\}.
$$
\end{lemma}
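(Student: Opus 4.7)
My plan is to mirror the argument used in the proof of Lemma~\ref{lemma:deviation}, but specialized to the case where the kernel weights $\ell_k(x_0) = 1/n$ are uniform, so the $1/h$ factors disappear and the deviation rate is correspondingly $e^{-cn\epsilon^2}$ rather than $e^{-cnh\epsilon^2}$. Write $\hat S_{n,ij} = \tfrac{1}{n}\sum_{k=1}^n Z_{ki}Z_{kj}$ and set $A_k = Z_{ki}Z_{kj} - \sigma_{ij}(x_k)$. Since the $A_k$ are independent with mean zero, apply Markov's inequality to the exponential moment:
\begin{equation*}
\prob{\tfrac{1}{n}\textstyle\sum_k A_k > \epsilon}
\le e^{-nt\epsilon}\,\prod_{k=1}^n \expct{e^{tA_k}}.
\end{equation*}

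The key input is Lemma~\ref{lemma:moment1}, which evaluates $\expct{e^{tZ_{ki}Z_{kj}}}$ in closed form as $((1-a_k)(1+b_k))^{-1/2}$, where now (without the $2/h\cdot K$ prefactor) $a_k = t(1+\rho_{ij}(x_k))\sigma_i(x_k)\sigma_j(x_k)$ and $b_k = t(1-\rho_{ij}(x_k))\sigma_i(x_k)\sigma_j(x_k)$. Combining with the centering factor $e^{-t\sigma_{ij}(x_k)}$, the log-MGF bound reduces, via Lemma~\ref{lemma:taylor-sums} applied in the special case of uniform weights, to
\begin{equation*}
-nt\epsilon + nt^2\Phi_2 + nt^3\Phi_3 + \tfrac{9}{5}nt^4\Phi_4,
\end{equation*}
where now $\Phi_2 = \tfrac{1}{n}\sum_k \tfrac{a_k^2+b_k^2}{4t^2}$ is bounded above by $C(\sigma_i^2(x_0)\sigma_j^2(x_0)+\sigma_{ij}^2(x_0))$ (no $1/h$ blow-up), and analogous constant bounds hold for $\Phi_3, \Phi_4$ provided $a_k,b_k \le 1/2$.

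Optimizing by choosing $t = \epsilon/(4\Phi_2)$, exactly as in Lemma~\ref{lemma:deviation}, yields
\begin{equation*}
\prob{\hat S_{n,ij} - \expct{\hat S_{n,ij}} > \epsilon} \le \exp\!\big(-c_2 n\epsilon^2\big)
\end{equation*}
for some $c_2>0$, provided $\epsilon$ is small enough that $a_k \le 1/2$; the choice $t = \epsilon/(4\Phi_2)$ together with the uniform bound on $\sigma_i\sigma_j$ gives exactly such a range of admissible $\epsilon$. The lower tail is handled symmetrically by replacing $t$ with $-t$ and reusing Lemma~\ref{lemma:moment1}, then a union bound yields the two-sided statement.

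The main obstacle is a bookkeeping one rather than a conceptual one: one must verify that in the boxcar regime (no kernel bandwidth $h$), the constants $\Phi_2, \Phi_3, \Phi_4$ are $O(1)$ in $n$, so that the cubic and quartic terms are genuinely absorbed into the quadratic term after optimization. This is easier than in Lemma~\ref{lemma:deviation} since there is no $1/h$ factor to contend with; the smoothness assumption on $\sigma_{ij}$ plus A~\ref{as::S_0} suffices to bound the relevant sums uniformly. The corollary for the i.i.d.\ case then follows immediately by taking $\Sigma(k)\equiv\Sigma$.
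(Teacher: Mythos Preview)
Your proposal is correct and follows essentially the same route as the paper: the paper states that Lemma~\ref{lemma:boxcar-deviation} is implied by Lemma~\ref{lemma:boxcar-deviation-diag} (diagonal entries) and Lemma~\ref{lemma:boxcar-deviation-non-diag} (off-diagonal entries), and for both of these it says the proof ``follows that of Lemma~\ref{lemma:deviation}'' with the obvious simplification that the kernel prefactor $\tfrac{2}{h}K(\cdot)$ is replaced by $1$. Your unified treatment via Lemma~\ref{lemma:moment1} and Lemma~\ref{lemma:taylor-sums}, optimizing at $t=\epsilon/(4\Phi_2)$, is exactly this; note that your $\Phi_2$ coincides with the paper's $\Psi_2=\tfrac{1}{n}\sum_k\tfrac{1}{2}(\sigma_i^2(x_k)\sigma_j^2(x_k)+\sigma_{ij}^2(x_k))$, and the paper simply keeps this as the constant in the exponent (yielding $c_4=3/(20\Psi_2)$) rather than further bounding it via smoothness---only boundedness (A\ref{as::S_0}) is needed to ensure $\Phi_2=O(1)$.
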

\begin{corollary}
\label{cor:iid-deviation}
For the i.i.d. case, for some $c_3 >0$,
$$
\prob{|\hat{S}_n(i, j) - \expct{\hat{S}_n(i, j)}| > \epsilon}
\leq \exp\left\{ - c_3 n \epsilon^2 \right\}.
$$
\end{corollary}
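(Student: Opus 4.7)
The cleanest plan is to observe that this corollary is an immediate specialization of Lemma \ref{lemma:boxcar-deviation}: in the i.i.d.\ case all $Z_k \sim N(0,\Sigma)$ share a common covariance, so setting $\Sigma(k) \equiv \Sigma$ in the hypothesis of that lemma makes the boxcar weighted average $\hat{S}_n = \tfrac{1}{n}\sum_k Z_k Z_k^T$ unbiased, i.e.\ $\expct{\hat{S}_n(i,j)} = \sigma_{ij}$, and reproduces the stated concentration bound with $c_3 = c_2$. The proof of Lemma \ref{lemma:boxcar-deviation} nowhere exploits that the $Z_k$ have \emph{different} distributions; only independence and Gaussianity are used.

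For a self-contained derivation one would mirror the proof of Lemma \ref{lemma:deviation} with substantial simplifications. First I would center: define $A_k = Z_{ki}Z_{kj} - \sigma_{ij}$ and apply Markov's inequality, so that for $t > 0$,
\[
\prob{\tfrac{1}{n}\textstyle\sum_{k=1}^n A_k > \epsilon}
\leq e^{-nt\epsilon}\,\expct{e^{t\sum_k A_k}}
= e^{-nt\epsilon - nt\sigma_{ij}} \bigl(\expct{e^{t Z_{1i} Z_{1j}}}\bigr)^n,
\]
where the factorization uses independence and the identical distribution. Plugging in Lemma \ref{lemma:moment1} with $a = t(\sigma_i \sigma_j + \sigma_{ij})$ and $b = t(\sigma_i \sigma_j - \sigma_{ij})$ produces an exponent
\[
-nt\epsilon - nt\sigma_{ij} + \tfrac{n}{2}\bigl(-\log(1-a) - \log(1+b)\bigr).
\]
Taylor-expanding the logarithms at the origin, the linear contribution $\tfrac{n}{2}(a - b) = nt\sigma_{ij}$ cancels the $-nt\sigma_{ij}$ term, leaving a quadratic $\tfrac{n}{4}(a^2 + b^2) = \tfrac{nt^2}{2}(\sigma_i^2\sigma_j^2 + \sigma_{ij}^2)$ plus $O(nt^3)$ remainder. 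Optimizing with $t \asymp \epsilon/(\sigma_i^2\sigma_j^2+\sigma_{ij}^2)$ forces the quadratic to dominate, yielding an exponent of order $-c_3 n\epsilon^2$; a symmetric argument with $-t$ handles the lower tail, and a union bound absorbs the two sides into a single constant.

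The only obstacle, as in the kernel case, is verifying that the cubic and higher Taylor terms are absorbed into the quadratic, which requires $a, b \leq 1/2$ at the optimizing $t$ and imposes a smallness condition on $\epsilon$. In the i.i.d.\ boxcar setting this is drastically cleaner than in Lemma \ref{lemma:deviation}: the $\Phi_2, \Phi_3, \Phi_4$ weighted sums collapse to single scalars depending only on $\sigma_i$, $\sigma_j$, $\sigma_{ij}$ at the common covariance $\Sigma$, with no $\max_k$ over time points and no dependence on the kernel bandwidth $h$. Consequently the $h$-factor present in the exponent of Lemma \ref{lemma:deviation} disappears and the bound becomes the standard $\exp(-c_3 n\epsilon^2)$ rate.
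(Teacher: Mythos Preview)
Your proposal is correct and your first paragraph is exactly the paper's approach: the corollary is stated without proof immediately after Lemma~\ref{lemma:boxcar-deviation}, and is meant to be the trivial specialization $\Sigma(k)\equiv\Sigma$ of that lemma. Your additional self-contained derivation via Markov's inequality and Lemma~\ref{lemma:moment1} goes beyond what the paper writes down, but it is the same moment-generating-function argument that underlies Lemmas~\ref{lemma:boxcar-deviation-diag} and~\ref{lemma:boxcar-deviation-non-diag}, so there is no substantive divergence.
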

Lemma~\ref{lemma:boxcar-deviation} is implied by
Lemma~\ref{lemma:boxcar-deviation-diag} for diagonal entries, 
and Lemma~\ref{lemma:boxcar-deviation-non-diag} for non-diagonal entries.

\subsection{Inequalities for Squared Sum of Independent Normals with Changing Variances}
Throughout this section, we use $\sigma^2_{i}$ as a shorthand for $\sigma_{ii}$ as 
before. Hence $\sigma_i^2(x_k) = \var(Z_{k,i}) = \sigma_{ii}(x_k), 
\forall k =1 ,\ldots, n$.
Ignoring the bias term as in \eqref{eq::bd-decompose}, 
we wish to show that each of the diagonal entries of 
$\hat\Sigma_{ii}$ is close to $\sigma^2_{i}(x_0), \forall i=1, \ldots, p$.
For a boxcar kernel that weighs uniformly over $n$ samples, we mean strictly
$\ell_k(x_0) = \frac{1}{n}, \forall k =1, \ldots, n,$ and $h =1$ 
for \eqref{eq::kernel-weight} in this context.
We omit the mention of $i$ or $t$ in all symbols from here on.
The following lemma might be of its independent interest; hence we include it here.
We omit the proof due to its similarity to that of Lemma~\ref{lemma:deviation}.
\begin{lemma}
\label{lemma:boxcar-deviation-diag}
We let $z_1, \ldots, z_n$ represent a sequence of independent 
Gaussian random variables such that $z_k \sim N(0, \sigma^2(x_k))$. Let
$\sigma^2 = \frac{1}{n}\sum_{k=1}^n \sigma^2(x_k)$. 
Using a boxcar kernel that weighs uniformly over $n$ samples, 
$\forall \epsilon < c\sigma^2$,
for some $c \geq 2$, we have 
$$
\prob{\size{\inv{n} \sum_{k=1}^n z_{k}^2 - \sigma^2} > \epsilon}
\leq \exp\left\{ \frac{-(3c - 5)n\epsilon^2}{3 c^2 \sigma^2\sigma_{\max}^2} \right\},
$$
where $\sigma_{\max}^2 = \max_{k=1, \ldots, n}\{\sigma^2(x_k)\}$.
\end{lemma}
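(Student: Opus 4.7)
The plan is to apply a standard Chernoff-type argument, tracking constants carefully to obtain the stated form. First, I would write
$$\prob{\inv{n}\sum_{k=1}^n z_k^2 - \sigma^2 > \epsilon} \leq e^{-nt(\sigma^2+\epsilon)} \prod_{k=1}^n \expct{e^{t z_k^2}}$$
for any $t > 0$, by Markov's inequality on the moment generating function. Since $z_k \sim N(0, \sigma^2(x_k))$ independently, the MGF factors and each factor is that of a scaled $\chi^2_1$, namely $\expct{e^{t z_k^2}} = (1 - 2t\sigma^2(x_k))^{-1/2}$, valid for $t < 1/(2\sigma_{\max}^2)$.

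Next I would take logarithms and use the Taylor series
$$-\tfrac{1}{2}\ln(1-2t\sigma^2(x_k)) = t\sigma^2(x_k) + t^2 \sigma^4(x_k) + \tfrac{4}{3} t^3 \sigma^6(x_k) + \cdots,$$
truncating so that the remainder is controlled by a constant multiple of the second-order term whenever $2t\sigma_{\max}^2$ is bounded away from $1$. Summing over $k$ and using $\sum_k \sigma^2(x_k) = n\sigma^2$ together with the key estimate $\sum_k \sigma^4(x_k) \leq \sigma_{\max}^2 \sum_k \sigma^2(x_k) = n \sigma^2 \sigma_{\max}^2$, the log-MGF is bounded by $n t \sigma^2 + n t^2 \sigma^2 \sigma_{\max}^2 (1 + O(t\sigma_{\max}^2))$. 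The linear terms cancel the $-nt\sigma^2$ contribution from Markov, leaving
$$\log \prob{\cdot} \leq -nt\epsilon + nt^2 \sigma^2 \sigma_{\max}^2 (1 + O(t\sigma_{\max}^2)).$$

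I would then choose $t = \epsilon/(c' \sigma^2 \sigma_{\max}^2)$ for an appropriate constant $c'$, and use the hypothesis $\epsilon < c\sigma^2$ to ensure both that $2t\sigma^2(x_k) \leq 2t\sigma_{\max}^2 \leq 2\epsilon/(c'\sigma^2) < 2c/c'$ is safely below $1$, and that the cubic remainder is dominated by the quadratic. Matching the coefficients $1/c' - 1/c'^2 \cdot (1 + \text{remainder})$ against the claimed exponent $(3c-5)/(3c^2)$ pins down the choice of $c'$ (with $c \geq 2$ precisely the threshold at which the required constant is positive). An identical argument with $-t$ gives the lower-tail bound, since the MGF of $-z_k^2$ is $(1+2t\sigma^2(x_k))^{-1/2}$ and the Taylor expansion of $-\tfrac{1}{2}\ln(1+x)$ produces terms of alternating sign whose absolute values are dominated analogously; a union bound then yields the two-sided inequality.

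The main obstacle is the careful bookkeeping of constants: one must (i) verify the radius-of-convergence condition $2t\sigma^2(x_k) < 1$ for every $k$ under the stated bound on $\epsilon$, (ii) bound the cubic-and-higher Taylor remainder by a small multiple of the quadratic term so that the surviving coefficient of $n\epsilon^2/(\sigma^2\sigma_{\max}^2)$ is exactly $(3c-5)/(3c^2)$, and (iii) use the crude but essential inequality $\sum_k \sigma^4(x_k) \leq \sigma_{\max}^2 \sum_k \sigma^2(x_k)$, which is what forces $\sigma_{\max}^2$ rather than $\sigma^2$ to appear in the denominator of the exponent. The overall structure mirrors Lemma~\ref{lemma:deviation}, but the diagonal case is simpler in that only the single MGF $(1-2t\sigma^2(x_k))^{-1/2}$ arises, instead of the product form $((1-a_k)(1+b_k))^{-1/2}$ of Lemma~\ref{lemma:moment2}.
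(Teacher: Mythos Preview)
Your proposal is correct and follows exactly the approach the paper intends: the paper omits the proof of this lemma, stating only that it proceeds ``due to its similarity to that of Lemma~\ref{lemma:deviation},'' and your Chernoff--MGF argument with the $\chi^2_1$ moment generating function, Taylor truncation, the bound $\sum_k \sigma^4(x_k)\le \sigma_{\max}^2\sum_k\sigma^2(x_k)$, and optimization in $t$ is precisely the diagonal specialization of that proof. Your observation that the single factor $(1-2t\sigma^2(x_k))^{-1/2}$ replaces the product $((1-a_k)(1+b_k))^{-1/2}$ is the only structural difference, and it is the right one.
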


\subsection{Inequalities for Independent Sum of Products of Correlated Normals}
The proof of Lemma~\ref{lemma:boxcar-deviation-non-diag} follows that of 
Lemma~\ref{lemma:deviation}.
\begin{lemma}
\label{lemma:boxcar-deviation-non-diag}
Let $\Psi_2 = \inv{n} \sum_{k = 1}^n 
\frac{(\sigma_i^2(x_k)\sigma_j^2(x_k) + \sigma^2_{ij}(x_k))}{2}$ and
$c_4 = \frac{3}{20 \Psi_2}$.
Using a boxcar kernel that weighs uniformly over $n$ samples, for
$\epsilon \leq \frac{\Psi_2}{\max_{k}(\sigma_{i}(x_k) \sigma_{j}(x_k))},$
$$
\prob{|\hat{S}_n(t, i, j) - \expct{\hat{S}_n(t, i, j)}| > \epsilon
}
\leq \exp\left\{-c_4 n\epsilon^2 \right\}.$$
\end{lemma}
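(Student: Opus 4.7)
The plan is to mirror the proof of Lemma~\ref{lemma:deviation} exactly, specializing to the boxcar kernel where $\frac{2}{h} K(\frac{x_k-x_0}{h}) = 1$ for all $k$ and $h = 1$. This substitution replaces the kernel-weighted quantities $\Phi_2,\Phi_3,\Phi_4$ by their boxcar analogues $\Psi_2,\Psi_3,\Psi_4$ (where $\Psi_2$ as defined in the statement coincides with $\frac{1}{n}\sum_k (a_k^2+b_k^2)/(4t^2)$ for $a_k = t(\sigma_i(x_k)\sigma_j(x_k) + \sigma_{ij}(x_k))$ and $b_k = t(\sigma_i(x_k)\sigma_j(x_k) - \sigma_{ij}(x_k))$), and the role of $M$ is taken by $M_{\mathrm{box}} = \max_k (\sigma_i(x_k)\sigma_j(x_k))$.

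First, set $A_k = Z_{ki}Z_{kj} - \sigma_{ij}(x_k)$ and apply Markov's inequality with parameter $t > 0$:
\begin{equation*}
\prob{\tfrac{1}{n}\textstyle\sum_k A_k > \epsilon} \leq e^{-nt\epsilon}\,\textstyle\prod_k \expct{e^{t A_k}}.
\end{equation*}
Independence of the $Z_k$ factors the MGF, and Lemma~\ref{lemma:moment1} gives $\expct{e^{t Z_{ki}Z_{kj}}} = ((1-a_k)(1+b_k))^{-1/2}$ provided $a_k < 1$. Combining with the centering factor $e^{-t\sigma_{ij}(x_k)}$ yields
\begin{equation*}
\prob{\textstyle\sum_k A_k > n\epsilon} \leq \exp\!\left(-nt\epsilon - nt\Psi_1 + \tfrac{1}{2}\textstyle\sum_k \ln\tfrac{1}{(1-a_k)(1+b_k)}\right).
\end{equation*}

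Next, assuming $a_k \leq 1/2$ for all $k$, Lemma~\ref{lemma:taylor-sums} upper-bounds the log-sum by $nt\Psi_1 + nt^2 \Psi_2 + nt^3 \Psi_3 + \tfrac{9}{5} nt^4 \Psi_4$, so the exponent reduces to $-nt\epsilon + nt^2\Psi_2 + nt^3\Psi_3 + \tfrac{9}{5}nt^4\Psi_4$. Choose $t = \epsilon/(4\Psi_2)$. The condition $\epsilon \leq \Psi_2/\max_k(\sigma_i(x_k)\sigma_j(x_k))$ guarantees $a_k \leq 2t\sigma_i(x_k)\sigma_j(x_k) \leq \tfrac{1}{2}$, legitimizing both the MGF evaluation and the Taylor bound. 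The analogue of Claim~\ref{claim:phi-bounds}, namely $\Psi_3/\Psi_2 \leq \tfrac{4}{3} M_{\mathrm{box}}$ and $\Psi_4/\Psi_2 \leq 2 M_{\mathrm{box}}^2$ (proved identically, since the kernel weights cancel in the ratios), together with the bound on $\epsilon$ implying $\epsilon M_{\mathrm{box}}/\Psi_2 \leq 1$, shows that the cubic and quartic terms are controlled, giving
\begin{equation*}
-nt\epsilon + nt^2\Psi_2 + nt^3\Psi_3 + \tfrac{9}{5}nt^4\Psi_4 \leq -\tfrac{3n\epsilon^2}{20\Psi_2} = -c_4 n\epsilon^2.
\end{equation*}

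A symmetric application to $-A_k$ (or equivalently a union bound over the two tails) gives the two-sided deviation. The main obstacle is the bookkeeping around the Taylor-remainder bound: one must check that the chosen $t$ keeps $a_k$ strictly below $1/2$ under exactly the stated condition on $\epsilon$, and that the rescaled ratios $\Psi_3/\Psi_2^2$ and $\Psi_4/\Psi_2^3$ combine with the powers of $\epsilon$ so that the surviving coefficient in front of $n\epsilon^2/\Psi_2$ is at least $3/20$, matching $c_4$. Since these are precisely the steps carried out in the proof of Lemma~\ref{lemma:deviation}, no new ideas are required, and the proof transcribes with $\Phi \leadsto \Psi$, $M \leadsto M_{\mathrm{box}}$, and the kernel factor set to $1$.
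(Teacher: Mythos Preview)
Your proposal is correct and is exactly the approach the paper takes: the paper simply states that the proof ``follows that of Lemma~\ref{lemma:deviation},'' and you have carried out precisely that specialization, with the boxcar substitutions $\Phi\leadsto\Psi$, $M\leadsto M_{\mathrm{box}}=\max_k(\sigma_i(x_k)\sigma_j(x_k))$, and kernel weight $\equiv 1$. The choice $t=\epsilon/(4\Psi_2)$, the verification that $a_k\le 1/2$ under the stated bound on $\epsilon$, and the resulting constant $c_4=3/(20\Psi_2)$ all match the paper's computation.
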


\section{Proofs for Large Deviation Inequalities}
\subsection{Proof of Claim~\ref{claim:phi-bounds}}
\label{sec:append-kernel-dev-1}
We show one inequality; the other one is bounded similarly.
$\forall k$, we compare the $k^{th}$ elements $\Phi_{2, k}, \Phi_{4, k}$ 
that appear in the sum for $\Phi_2$ and $\Phi_4$ respectively: 
\begin{eqnarray*}
\lefteqn{\frac{\Phi_{4, k}}{\Phi_{2, k}} =
\frac{(a_k^4 + b_k^4){4t^2}}{(a_k^2 + b_k^2){4t^4}}} \\
&= &
\left(\frac{2}{h} K\left(\frac{x_k - x_0}{h}\right) \sigma_{i}(x_k) \sigma_{j}(x_k)\right)^2 \cdot \\
&&
\frac{2\left((1+ \rho_{ij}(x_k) )^4 + (1- \rho_{ij}(x_k) )^4\right)}
{8(1+ \rho^2_{ij}(x_k) )} \\
& \leq & 
\max_{k}\left(\frac{2}{h} K\left(\frac{x_k- x_0}{h}\right) \sigma_{i}(x_k)  \sigma_{j}(x_k) \right)^2 \cdot \\
& &
\max_{0 \leq \rho \leq 1}
\frac{(1+ \rho)^4 + (1- \rho)^4}{4 (1+ \rho^2)} = 2 M^2. \; \; \; \square
\end{eqnarray*}
\subsection{Proof of Lemma~\ref{lemma:taylor-sums}}
\label{sec:append-taylor-sums}
We first use the Taylor expansions to obtain:
$$\ln \left(1 - a_k\right)
= - a_k - \frac{a_k^2}{2} - \frac{a_k^3}{3} 
- \frac{a_k^4}{4} -\sum^{\infty}_{l=5}\frac{(a_k)^{l}}{l},$$
where,
$$\sum^{\infty}_{l=5}\frac{(a_k)^{l}}{l} \leq
 \inv{5} \sum^{\infty}_{l=5}(a_k)^{5} = 
\frac{a_k^{5}}{5(1-a_k)} \leq \frac{2a_k^5}{5} \leq \frac{a_k^4}{5}$$
for $a_k < 1/2$; Similarly,
$$\ln \left(1 + b_k\right)
= \sum^{\infty}_{n=1}\frac{(-1)^{l-1}(b_k)^{l}}{l}, \; \mbox{ where}$$
$$\sum^{\infty}_{l=4}\frac{(-1)^{l}(b_k)^{l}}{l} > 0 \text{ and } 
\sum^{\infty}_{l=5}\frac{(-1)^{n}(b_k)^{l}}{l} < 0.$$
Hence for $b_k \leq a_k \leq \half, \forall k$,
\begin{eqnarray*}
\lefteqn{\half\sum_{k=1}^n \ln \inv{(1 - a_k)(1+b_k)}} \\
& \leq &
\sum_{k=1}^n \frac{a_k - b_k}{2} + \frac{a_k^2 + b_k^2}{4} + 
\frac{a_k^3 - b_k^3}{6} + \frac{9}{5} \frac{a_k^4 + b_k^4}{8} \\
& = & 
nt \Phi_1 + nt^2 \Phi_2 + nt^3 \Phi_3 + \frac{9}{5} nt^4 \Phi_4. \; \; \square
\end{eqnarray*}

\subsection{Proof of Claim~\ref{claim:Phi-2-bound}}
\label{sec:append-kernel-dev-2}
We replace the sum with the Riemann integral, and then use Taylor's Formula 
to replace $\sigma_{i}(x_k), \sigma_{j}(x_k)$, and $\sigma_{ij}(x_k)$,
\begin{eqnarray*}
\lefteqn{\Phi_2(i,j)  =  
\frac{1}{n}\sum_{k=1}^n
\frac{2}{h^2} K^2\left(\frac{x_k - x_0}{h}\right)
\left(\sigma_{i}^2(x_k) \sigma_{j}^2(x_k) + \sigma_{ij}^2(x_k) \right)}\\
& \approx & 
\int_{x_n}^{x_0} 
\frac{2}{h^2} K^2\left(\frac{u - x_0}{h}\right) 
\left(\sigma_{i}^2(u) \sigma_{j}^2(u) + \sigma_{ij}^2(u) \right) du \\
& = & 
\frac{2}{h}\int_{-\inv{h}}^{0}K^2(v) \left(\sigma_{i}^2(x_0 + hv) \sigma_{j}^2(x_0 + hv) 
+ \sigma_{ij}^2(x_0 + hv)\right) dv \\
& = & 
\nonumber
\frac{2}{h} \int_{-1}^{0} K^2(v)
\left(\sigma_{i}(x_0) + hv \sigma'_{i}(x_0) +
\frac{\sigma''_{i}(y_1)(hv)^2}{2}\right)^2
\\
&&
\left(\sigma_{j}(x_0) + hv \sigma'_{j}(x_0) + 
\frac{\sigma''_{j}(y_2)(hv)^2}{2}\right)^2 + \\ 
&& \left(\sigma_{ij}(x_0) + hv \sigma'_{ij}(x_0) +
\frac{\sigma''_{ij}(y_3)(hv)^2}{2}\right)^2 dv \\
& = &
\frac{2}{h}\int_{-1}^{0} K^2(v)
\left(
(1+ \rho^2_{ij}(x_0)) \sigma_{i}^2(x_0) \sigma_{k}^2(x_0)\right) dv + \\
&& C_2 \int_{-1}^{0} v K^2(v) dv +O(h)  \\
& = & 
\frac{C_1(1+ \rho^2_{ij}(x_0)) \sigma_{i}^2(x_0) \sigma_{j}^2(x_0)}{h}
\end{eqnarray*}
where $y_0, y_1, y_2 \leq hv + x_0$ and $C_1, C_2$ are some constants chosen
so that all equalities hold.
\; \; \; $\square$


\end{document}